\theoremstyle{plain}
\newtheorem{theorem}{Theorem}[section]
\newtheorem{lemma}[theorem]{Lemma}
\theoremstyle{definition}
\theoremstyle{remark}
\title{DHP: Discrete Hierarchical Planning For HRL Agents}
\author{%
  Shashank Sharma \\
  Department of Computer Science\\
  University of Bath\\
  \texttt{ss3966@bath.ac.uk} \\
  % examples of more authors
  \And
  Janina Hoffmann \\
  Department of Psychology \\
  University of Bath \\
  \texttt{jah253@bath.ac.uk} \\
  \AND
  Vinay Namboodiri \\
  Department of Computer Science \\
  University of Bath \\
  \texttt{vpn22@bath.ac.uk} \\
}
\begin{document}

\maketitle

\begin{abstract}

Hierarchical Reinforcement Learning (HRL) agents often struggle with long-horizon visual planning due to their reliance on error-prone distance metrics.
We propose Discrete Hierarchical Planning (DHP), a method that replaces continuous distance estimates with discrete reachability checks to evaluate subgoal feasibility.
DHP recursively constructs tree-structured plans by decomposing long-term goals into sequences of simpler subtasks, using a novel advantage estimation strategy that inherently rewards shorter plans and generalizes beyond training depths.
In addition, to address the data efficiency challenge, we introduce an exploration strategy that generates targeted training examples for the planning modules without needing expert data.
Experiments in 25-room navigation environments demonstrate a $100\%$ success rate (vs. $90\%$ baseline).
We also present an offline variant that achieves state-of-the-art results on OGBench benchmarks, with up to $71\%$ absolute gains on giant HumanoidMaze tasks, demonstrating our core contributions are architecture-agnostic.
The method also generalizes to momentum-based control tasks and requires only $\log N$ steps for replanning.
Theoretical analysis and ablations validate our design choices.

\end{abstract}

\vspace{-.5cm}

\section{Introduction}

\vspace{-.3cm}

Hierarchical planning enables agents to solve complex tasks through recursive decomposition \cite{dietterich2000hierarchical}, but existing approaches face fundamental limitations in long-horizon visual domains. While methods using temporal distance metrics \cite{pertsch2020long,ao2021co} or graph search \cite{eysenbach2019search} have shown promise, their reliance on precise continuous distance estimation creates two key challenges:

\begin{itemize}
\item \textbf{Coupled Learning Dynamics}: Distance estimates depend on the current policy's quality, where suboptimal policies produce misleading distances and practical implementations may require arbitrary distance cutoffs \cite{ao2021co,eysenbach2019search}.
\item \textbf{Exploratory Objective}: The intrinsic rewards used for training explorers may not align with the planning objective
leading to inaccurate distance measures by design
(Fig. \ref{fig:expl_vanilla_traj}).
\end{itemize}

We address these issues by reformulating hierarchical planning through discrete reachability – a paradigm shift from "How far?" to "Can I get there?". Our method (DHP) evaluates plan feasibility through binary reachability checks rather than continuous distance minimization. This approach builds on two insights: local state transitions are easier to model than global distance metrics, and reachability naturally handles disconnected states through $0/1$ signaling.
To demonstrate the broad applicability of our approach, we develop both online and offline variants.
The online version (Sections \ref{sec:dhp}) demonstrates the complete system with world model-based imagination and intrinsic exploration.
The offline variant (Section \ref{sec:offline_dhp}) isolates our core contributions—discrete reachability and tree-structured returns—showing they transfer successfully to different architectural choices and more complex environments.

Our key contributions:
\begin{itemize}
\item A reachability-based discrete reward scheme for planning.
(Sec. \ref{sec:tree_rewards}, Fig. \ref{fig:rew_comp}).
\item An advantage estimation algorithm for tree-structured plans (Sec. \ref{sec:tree_return}, Fig. \ref{fig:bootstrapping_comp}).
\item A memory conditioned exploration strategy outperforming expert data (Sec. \ref{sec:explorer}, Figs. \ref{fig:expl_comp},\ref{fig:expl_pl_traj}).
\item An extensive ablation study that measures the contribution of each module (Sec. \ref{sec:ablations}).
\item An offline variant demonstrating architecture-agnostic applicability (Sec. \ref{sec:offline_dhp}).
\end{itemize}

Empirical results confirm these design choices. On the 25-room navigation benchmark, DHP achieves perfect success rates ($100\%$ vs $90\%$) with shorter path lengths (Sec. \ref{sec:results}).
Our offline variant achieves substantial improvements on the more challenging OGBench environments, with up to $71.2\%$ absolute gains on HumanoidMaze-Giant (Sec. 3.3).
Our ablation studies demonstrate that both the reachability paradigm and the novel advantage estimation contribute significantly to these gains (Section \ref{sec:ablations}).
We also test our method in a momentum-based RoboYoga environment for generalizability.
Video: \url{https://sites.google.com/view/dhp-video/home}.

\vspace{-.5em}

\section{Discrete Hierarchical Planning (DHP)}
\label{sec:dhp}

\vspace{-.5em}

In the context of Markovian Decision Processes (MDP), a Reinforcement Learning task can be imagined as an agent transitioning through states $s_t$ using actions $a_t$.
Our task is to find the shortest path between any two given states $(s_t, s_g)$.
To do this, we first use an explorer to collect a dataset of useful trajectories possible within the environment.
Then we learn a planning policy that learns to predict subgoals, decomposing the initial task into two simpler subtasks.
The recursive application of the policy breaks the subtasks further till subtasks directly manageable by the worker are found.
% This allows goal-directed planning for the planning policy.

% The policy is applied again to the generated subtasks $(s_t,s_0)$ and $(s_0,s_g)$ to yield two subgoals as $s_1,s_2$ respectively.
% The recursive application of the planning policy helps the agent plan a path from its current state $s_t$ to a specified goal state $s_g$.
% This recursive decomposition continues till a subgoal manageable by the worker is reached $s_{wg}$.
% The worker then outputs the 

% Then we learn a goal-conditioned policy that learns to output an optimal sub-goal $s_0$ given the initial and final states $(s_t,s_g)$.
% The policy is applied again to the generated subtasks $(s_t,s_0)$ and $(s_0,s_g)$ to yield two subgoals as $s_1,s_2$ respectively.
% The recursive application 

\vspace{-.75em}

\subsection{Agent Architecture}
\label{sec:agent_arch}

\vspace{-.25em}

\begin{figure}
    \centering
    \begin{subfigure}{0.25\textwidth}
        \centering
        % \centerline{\includesvg[width=0.8\textwidth]{figures/illustrations/agent_arch.svg}}
        \includegraphics[width=\textwidth]{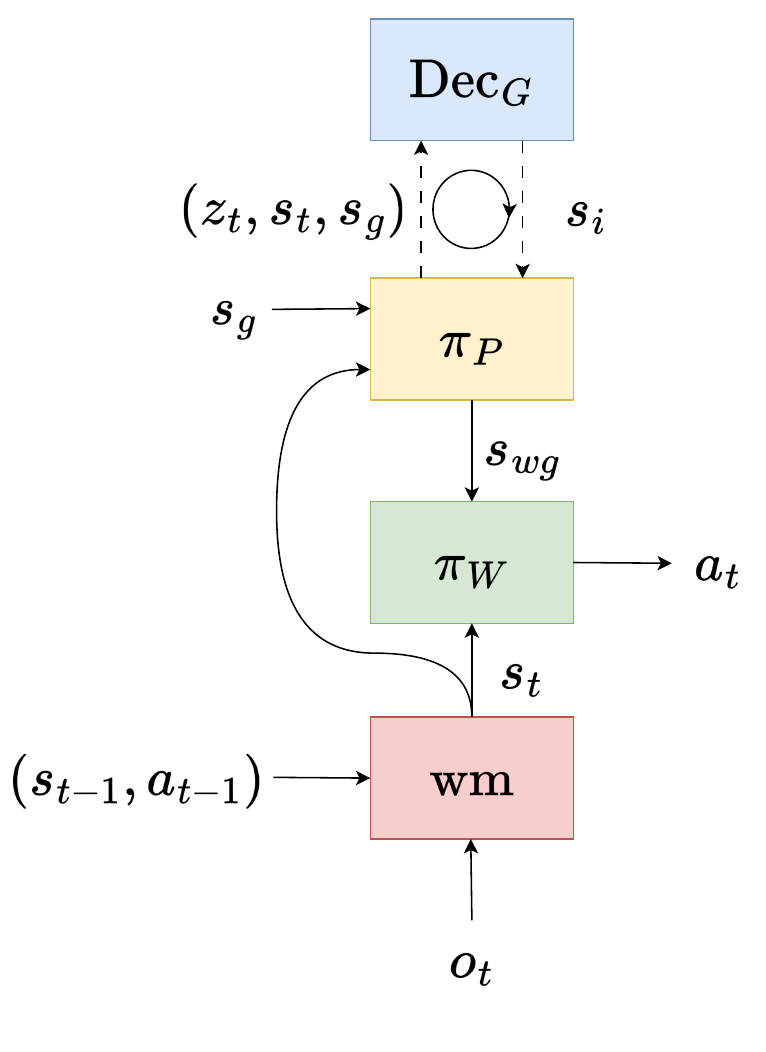}
        \caption{Agent architecture}
    \label{fig:agent_arch}
    \end{subfigure}
    \hspace{1em}
    \begin{subfigure}{0.45\textwidth}
        \centering
        \begin{subfigure}{\textwidth}
            \centering
            \centerline{
                % \includesvg[width=0.85\textwidth]{}
                \includegraphics[width=0.85\textwidth]{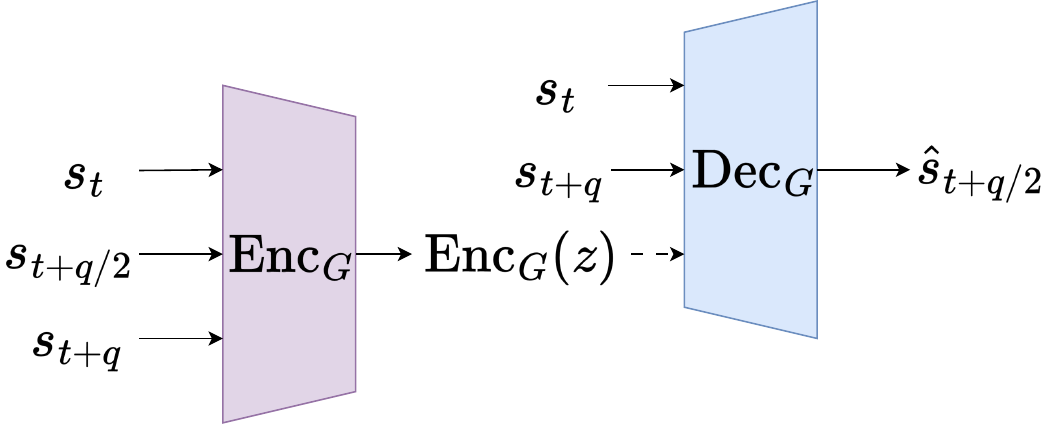}}
            \caption{GCSR module}
            \label{fig:gcsr}
        \end{subfigure}
        \begin{subfigure}{\textwidth}
            \centering
            \centerline{
            % \includesvg[width=0.77\textwidth]{figures/illustrations/plan_actor.svg}
            \includegraphics[width=0.77\textwidth]{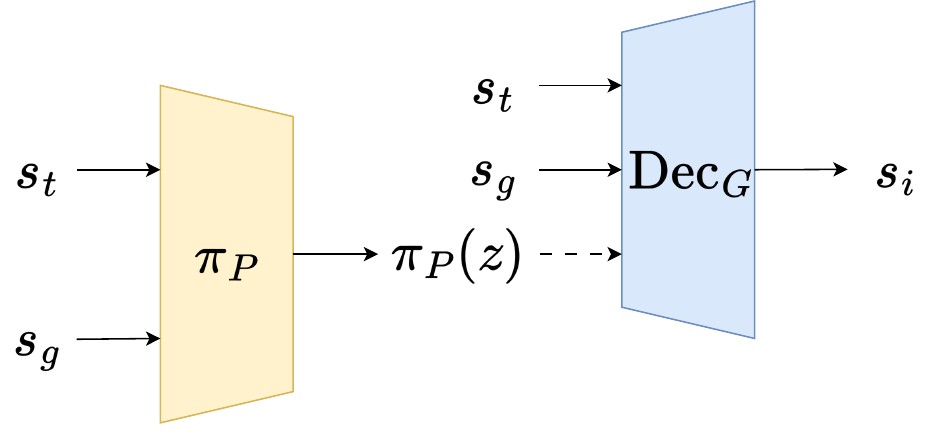}}
            \caption{Subgoal prediction using GCSR decoder}
            \label{fig:plan_actor}
        \end{subfigure}
    \end{subfigure}
    \caption{Illustrations for different module architectures. (a) Overall planning agent architecture. The world model predicts the state $s_t$, the planner takes the current and goal states $(s_t,s_g)$ to output a latent variable $z$, the GCSR Decoder is then used to predict a subgoal $s_i$. Then the subgoal is used as a goal to predict another subgoal. This continues recursively till a reachable subgoal $s_{wg}$ is found, which is then passed to the worker. (b) The GCSR module is a conditional VAE that consists of an encoder and a decoder optimized to predict midway states, given the initial and final states. (c) The planning policy uses the GCSR decoder to predict subgoals.}
    \vspace{-.5em}
\end{figure}

% Our agent uses an HRL architecture to plan using discrete combinations of abstract actions.
We use the base architecture from the Director \cite{hafner2022deep} as it has been observed to provide a practical method for learning hierarchical behaviors directly from pixels.
It consists of three modules: world-model, worker, and manager.
The world model is implemented using the Recurrent State Space Module (RSSM) \cite{hafner2019learning}, which learns state representations $s_t$ using a sequence of image observations $o_t$.
The worker policy $\pi_W$ learns to output environmental actions $a_t$ to reach nearby states $s_{wg}$.
The manager $\pi_M$ is a higher-level policy that learns to output desirable sub-goal states $s_{wg}$ for the worker (updated every $K$ steps) in the context of an external task and the exploratory objective jointly.
We replace the manager with our explorer policy $\pi_E$ and the planning policy $\pi_P$ as required.
% The Explorer initially optimizes for the exploratory objective that helps generate data.
% The planning policy learns to break the given task hierarchically to yield a subgoal $s_{wg}$ reachable by the worker.
Figure \ref{fig:agent_arch} shows the agent architecture during inference using $\pi_P$.
The worker and the world-model are trained using the default objectives from the Director (for more details see Appendix \ref{sec:training_details}).
% Next, we describe the architectural and training details for the planning and exploration policy.
First, we describe our planning policy and then derive an exploratory strategy fit for it.
% However, the planning policy does not directly predict in the state space.
% Previous works like \cite{pertsch2020long,hafner2022deep} have shown the effectiveness of goal prediction in a learned latent space.
% We use a similar reduction in search space with Conditional State Recall (CSR) modules.

\vspace{-.75em}

\subsection{Planning Policy}
\label{sec:plan_policy}

\vspace{-.25em}

The Planning policy $\pi_P$ is a goal-conditioned policy that takes the current and goal state as inputs to yield a subgoal.
However, predicting directly in the continuous state space leads to the problem of high-dimensional continuous control \cite{hafner2022deep,pertsch2020long}.
% The Director manager does not predict the worker subgoals $s_{wg}$ directly in the state space, but in a discrete latent space learned using a Variational AutoEncoder (VAE).
% This helps reduce the search space for goals and avoids the problem of high-dimensional continuous control.
To reduce the search space for the planning policy, we train a discrete Conditional Variational AutoEncoder (CVAE) that learns to predict midway states $s_{t+q/2}$ given an initial and a final state $(s_t,s_{t+q})$.
We refer to this module as Goal-Conditioned State Recall (GCSR). It consists of an Encoder and a Decoder (Fig. \ref{fig:gcsr}).
The Encoder takes the initial, midway, and final states $(s_t,s_{t+q/2},s_{t+q})$ as input to output a distribution over a latent variable $\text{Enc}_G(z|s_t,s_{t+q/2},s_{t+q})$.
The Decoder uses the states $(s_t,s_{t+q})$ and a sample from the latent distribution $z \sim \text{Enc}_G(z)$ to predict the midway state $\text{Dec}_G(s_t,s_{t+q},z) \rightarrow \hat{s}_{t+q/2}$.
The module is optimized using the data collected by the explorer to minimize the ELBO objective (Eq. \ref{eq:gcsr_loss}).
A mixture of categoricals $(4 \times 4)$ is used as the prior latent distribution $p_G(z)$ for all our experiments.
Triplets at multiple temporal resolutions $q \in Q$ are extracted, allowing the agent to function at all temporal resolutions.
% (Fig. \ref{fig:data_gcsr}).
For minimizing overlap between resolutions, we use exponentially increasing temporal resolutions, $Q = \{2K, 4K, 8K, ...\}$.

\vspace{-1.0em}

\begin{equation}
    \label{eq:gcsr_loss}
    \begin{split}
        \mathcal{L}(\text{Enc}_G,\text{Dec}_G) = \sum_{q \in Q} \left\Vert s_{t+q/2} - \text{Dec}_G(s_t,s_{t+q},z)\right\Vert^2 + \beta \text{KL}[\text{Enc}_G(z|s_t,s_{t+q/2},s_{t+q}) \parallel p_G(z)] \\ 
        \text{where} \quad z \sim \text{Enc}_G(z|s_t,s_{t+q/2},s_{t+q})
    \end{split}
\end{equation}

\vspace{-1.0em}

The planning policy predicts in the learned latent space, which are expanded into sub-goals using the GCSR decoder as: $\text{Dec}_G(s_t,s_g,z)$ where $z \sim \pi_P(z|s_t,s_g)$ (Fig. \ref{fig:plan_actor}).
Note that the word \textit{discrete} in our title refers to our plan evaluation method (Sec. \ref{sec:tree_rewards}), not the action space of the planning policy.

\begin{figure}
    \centering
    \begin{subfigure}{0.55\textwidth}
        \centering
        \includegraphics[width=\textwidth]{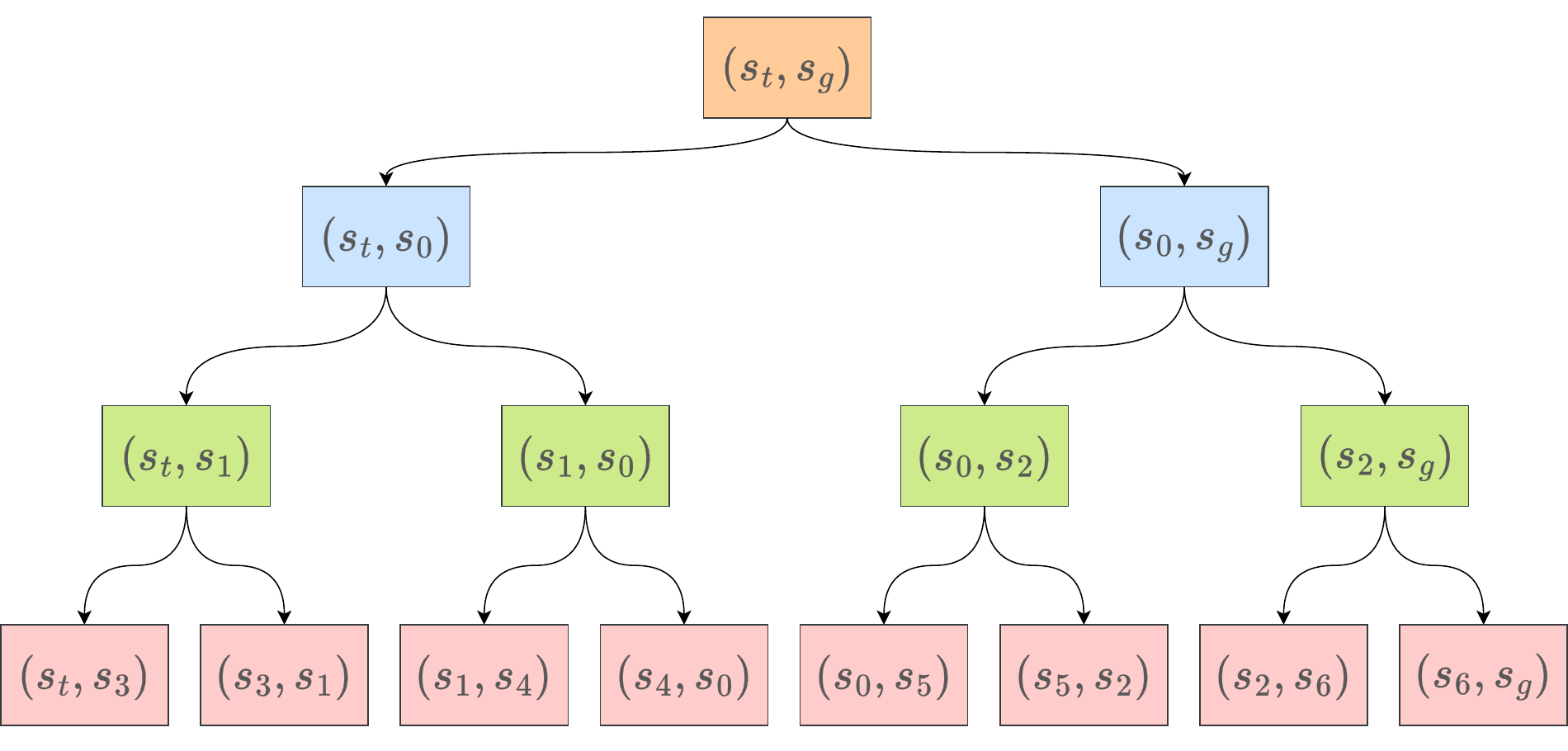}
        % \centerline{\includesvg[width=\textwidth]{figures/illustrations/tree_train_unroll.svg}}
        \caption{Unrolled subtask tree during Training}
    \label{fig:tree_unroll}
    \end{subfigure}
    \hspace{2em}
    \begin{subfigure}{0.35\textwidth}
        \centering
        % \includesvg[width=\textwidth]{figures/illustrations/tree_inf_unroll.svg}
        \includegraphics[width=\textwidth]{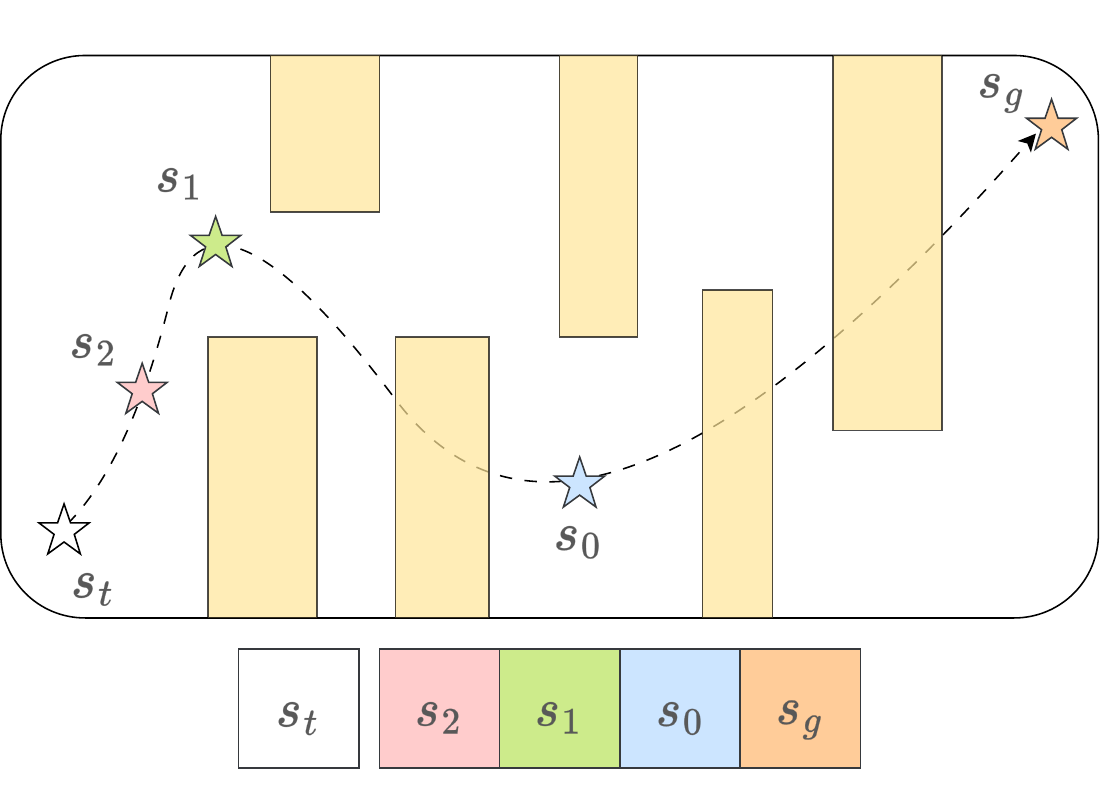}
        \caption{Planning during Inference}
        \label{fig:tree_inference}
    \end{subfigure}
    \caption{The figure illustrates the plan unrolling process during the training and inference phases. (a) During \textbf{Training}, the initial task $(s_t,s_g)$ is recursively decomposed into smaller tasks by midway subgoal prediction to generate a subtask tree. The lowest level nodes represent the simplest decomposition of the initial task as: $(s_t,s_g) \rightarrow (s_t,s_3,s_1,s_4,s_0,s_5,s_2,s_6,s_g)$. (b) During \textbf{Inference}, only the first branch of the tree is unrolled. Here, the agent is tasked with reaching $s_g$. So it first divides the task $(s_t,s_g)$ into two chunks by inserting $s_0$. Then it proceeds to divide the subtask $(s_t,s_0)$ by inserting $s_1$ and ignoring the second part $(s_0,s_g)$. The recursive division continues till the first subgoal reachable in $K$ steps is found. This results in a stack of subgoals shown at the bottom.}
\end{figure}

\vspace{-.5em}

\subsection{Planning Policy Optimization}
\label{sec:policy_optim}

The planning policy is optimized as a Soft-Actor-Critic (SAC) \cite{haarnoja2018soft} in three steps: construct plans between random state pairs (Sec. \ref{sec:tree_rollout}), plan evaluation using discrete rewards and our novel advantage estimation method (Sec. \ref{sec:tree_rewards},\ref{sec:tree_return}), and policy updates using policy gradients (Sec. \ref{sec:plan_policy_grad}).

\vspace{-.5em}

\subsubsection{Plan Unrolling}
\label{sec:tree_rollout}

Given the initial and final states $(s_t,s_g)$, subgoal generation methods predict an intermediate subgoal $s_0$ that breaks the task into two simpler subtasks $(s_t,s_0)$ and $(s_0,s_g)$.
The recursive application of the subgoal operator further breaks the task, leading to a tree of subtasks $\tau$.
% Given a task $(s_t,s_g)$, we first unroll a full subtask tree using the planning policy $\pi_P$ (Fig. \ref{fig:tree_unroll}).
% For this, at each node $n_i$, the policy predicts a subgoal as: $s_i = \text{Dec}_G(n_{i,0}, n_{i,1}, z)$ where $z \sim \pi_P(z_i|n_i)$.
% This constructs a tree $\tau$ where each node $n_i$ contains a subtask represented as a state pair.
The root node $n_0$ represents the original task $(s_t,s_g)$, and each remaining node $n_i$ in the tree $\tau$ represents a sub-task.
At each node $n_i$, the policy predicts a subgoal as: $s_i = \text{Dec}_G(n_{i,0}, n_{i,1}, z)$ where $z \sim \pi_P(z_i|n_i)$.
The preorder traversal of the subtask tree of depth $D$ can be written as $n_0,n_1,n_2,...,n_{2^{D+1}-2}$.
% The root node $n_0$ is the given task, and the other nodes are the recursively generated subtasks.
% The formulation means the child nodes $(n_{2i+1},n_{2i+2})$ represent the subtasks generated by the application of the planning policy at node $n_i$.
% The tree leaf nodes indicate the sequence of smallest subtasks that can be executed sequentially to complete the original problem.
% Formally, the planning policy $\pi_P$ takes a node (initial and final state tuple) $n_i$ as input to produce a subgoal $s_i$ as: $\pi_P(s_i|n_i)$.
% However, since our policy works in a latent space, the above probability can be decomposed as: $\pi_P(z_i|n_i) \cdot \text{Dec}_G(s_i|z_i,n_i)$ [Check this].
Figure \ref{fig:tree_unroll} shows an example unrolled tree.
The lowest-level nodes show the smallest decompositions of the task under the current planning policy $\pi_P$.
% The plan is feasible if the worker can complete each of the $2^D=8$ smallest subtasks.

\textbf{Inference:} Unlike traditional methods for hierarchical planning (eg, cross-entropic methods (CEM) \cite{rubinstein2004cross,nagabandi2018neural}), which require unrolling multiple full trees followed by evaluation at runtime, our method does not require full tree expansion.
A learned policy always predicts the best estimated subgoal by default.
Thus, we can unroll only the leftmost branch, as only the first reachable subgoal is required (Fig. \ref{fig:tree_inference}).
% Unrolling only the first branch requires only $D$ steps for a $D$-depth trees.
% This leads to highly efficient replanning ($\mathcal{O}(\log T)$) compared to methods that require unrolling full trees ($\mathcal{O}(2T-1)$) for $T$ step plans.
Efficient planning allows the agent to re-plan at every goal refresh step ($K$), thereby tackling dynamic and stochastic environments.

\vspace{-.5em}

\subsubsection{Discrete Rewards Scheme}
\label{sec:tree_rewards}

We want to encourage trees that end in subtasks manageable by the worker.
% Thus, we propose a discrete reward scheme that rewards the planning policy at each node $n_i$ in the tree $\tau$ with $R_i=1$ if it is managable by the worker, else $0$.
% We propose a discrete reward scheme that rewards the planning policy for generating subtasks that are manageable by the worker.
A subtask is worker-manageable if the node goal $n_{i,1}$ is reachable by the worker from the node initial state $n_{i,0}$.
Since learning modules inside other learned modules can compound errors, we use a more straightforward method to check reachability.
We simulate the worker for $K$ steps using RSSM imagination, initialized at $n_{i,0}$ with goal $n_{i,1}$.
% Then the worker's final state is compared with the goal using a \texttt{cosine\_max} similarity metric.
% If the similarity measure is above a threshold $\Delta_R$ then the subtask is marked as \textit{terminal} (Eq. \ref{eq:tree_rew}).
If the \texttt{cosine\_max} similarity measure between the worker's final state $s_{t,i}$ and the assigned goal state is above a threshold $\Delta_R$, the node is marked as \textit{terminal} (Eq. \ref{eq:term_nodes}).
The \textit{terminal} nodes do not need further expansion, which is different from the word's usual meaning in the context of trees.
The lowest layer non-\textit{terminal} nodes in a finite-depth unrolled tree are called \textit{truncated} nodes.
Computing the \textit{terminal} array $T_i$ allows supporting imperfect trees with branches terminating at different depths.
A plan is considered \textit{valid} if all its branches end in \textit{terminal} nodes.
The policy is rewarded $1$ at \textit{terminal} nodes and $0$ otherwise (Eq. \ref{eq:tree_rew}).
A discrete reward scheme enables optimization that increases the likelihood of \textit{valid} plans compared to distance-based approaches, which gradually optimize the policy to reduce path length.

\vspace{-1.em}

\begin{align}
    \label{eq:term_nodes}
    T_i =& T_{(i-1)/2} \lor \texttt{cosine\_max}(s_{t,i},n_{i,1}) > \Delta_R \\
% \end{align}
% \begin{align}
    \label{eq:tree_rew}
    R_i =& \begin{cases}
            1, & \text{if } T_i == True \\
            0, & \text{otherwise}
        \end{cases}
\end{align}

\vspace{-.75em}

\subsubsection{Return Estimation for Trees}
\label{sec:tree_return}

Taking inspiration from the standard discounted return estimation for \textit{linear}
% \footnote{Refers to standard MDPs where taking action places the agent in $1$ new state}
trajectories (sequence of states) \cite{sutton2018reinforcement}, we propose an approach for tree trajectories.
Returns for a \textit{linear} trajectory are computed as the reward received at the next step and discounted rewards thereafter, $G_t = R_{t+1} + \gamma G_{t+1}$.
Similarly, the return estimate for trees is the minimum discounted return from the child nodes.
Given a tree trajectory $\tau$, the Monte-Carlo return (Eq. \ref{eq:tree_mc_return}), the $1$-step return (Eq. \ref{eq:tree_1step_return}), and the lambda return (Eq. \ref{eq:tree_lambda_return}) for each non-\textit{terminal} node as can be written as:
% : $G_i = \min(R_{2i+1} + \gamma G_{2i+1}, R_{2i+2} + \gamma G_{2i+2})$.
% The formulation essentially computes the return for a node as 

\vspace{-1.em}

\begin{gather}
    \label{eq:tree_mc_return}
    G_i = (1 - T_i) \cdot \min(R_{2i+1} + \gamma G_{2i+1}, R_{2i+2} + \gamma G_{2i+2}) \\
    \label{eq:tree_1step_return}
    G_i^0 = (1 - T_i) \cdot \min(R_{2i+1} + \gamma v_P(n_{2i+1}), R_{2i+2} + \gamma v_P(n_{2i+2})) \\
    \label{eq:tree_lambda_return}
    G_i^\lambda = (1 - T_i) \cdot \min(R_{2i+1} + \gamma((1-\lambda) v_P(n_{2i+1}) + \lambda G_{2i+1}^\lambda),
    R_{2i+2} + \gamma((1-\lambda) v_P(n_{2i+2}) + \lambda G_{2i+2}^\lambda))
\end{gather}

\vspace{-1.em}

\textbf{Intuition:} The min-child formulation ensures a plan is only as valuable as its weakest subtask.
If either branch fails to reach a reachable subgoal, the entire plan receives low return.
This naturally implements our discrete reachability criterion: good plans must decompose the task into two reachable subtasks.
Unlike sum-based formulations (which tolerate one bad child), or average-based formulations (which dilute failure signals), our approach provides clear credit assignment to the bottleneck in the plan.

All branches should end in \textit{terminal} nodes to score a high return with the above formulation.
Additionally, since the discount factor diminishes the return with each additional depth, the agent can score higher when the constructed tree is shallow (less maximum depth).
This characteristic is similar to \textit{linear} trajectories, where the return is higher for shorter paths to the goal \cite{tamar2016value}.
% This formulation yields a high return only when both subtasks are finally solved, and the discount factor diminishes the reward with node depth, encouraging shorter plans.
% Thus, given a tree trajectory $\tau$, the Monte-Carlo return, the $1$-step return, and the lambda return for each non-\textit{terminal} node as can be written as:

Fig. \ref{fig:example_returns} illustrates example return evaluations for Monte-Carlo returns and $n$-step truncated returns that use value estimates to replace rewards at the non-\textit{terminal} \textit{truncated} nodes.
% $n$-step returns can be estimated using value estimates at the non-\textit{terminal} \textit{truncated} nodes (Fig. \ref{fig:example_returns}).
The $n$-step returns allow for generalization beyond the maximum unrolled depth $D$ (Sec. \ref{sec:appendix_bootstrapping}).
Thus, the tree can be unrolled for higher depths $D_I$ during inference. 
We show that the Bellman operators for the above returns are contractions and repeated applications cause the value function $v^\pi_P$ to approach a stationary $v^*$ (Sec. \ref{theorem:return_contraction}).
% An intuition for bootstrapping is that the value function output acts as a replacement for the rewards.
% Thus, if the final nodes in the truncated tree have high estimated returns $v_P(n_i)$, the policy knows how to solve the problem thereafter.
% Moreover, this allows us to unroll the tree for higher depths during inference $D_I$ than during training $D$.
% An intuition for bootstrapping can be that a high-value function output means the policy knows how to solve the task, if there is a non-terminal leaf node with a high value, it means the policy can solve it thereafter.
We explore how the return penalizes maximum tree depth and encourages balanced trees (Sec. \ref{sec:appendix_prop_tree_ret}), implying that the optimal policy inherently breaks tasks roughly halfway.
% For theoretical generalization, it should be noted that the equation automatically reduces to advantage estimation for \textit{linear} trajectories if actions would transition the agent to a single next state (which is the case in \textit{linear} trajectories).
% For theoretical generalization, note that the $\min$ operator in the return estimate is over the next states the agent is placed in.
% Thus, in the case of \textit{linear} trajectories where there is only one next state, the $\min$ operator vanishes and the equation conveniently reduces to the standard return formulation for \textit{linear} trajectories.

\begin{figure}
    \centering
    \begin{subfigure}{0.42\textwidth}
        \centering
        \includegraphics[width=\textwidth]{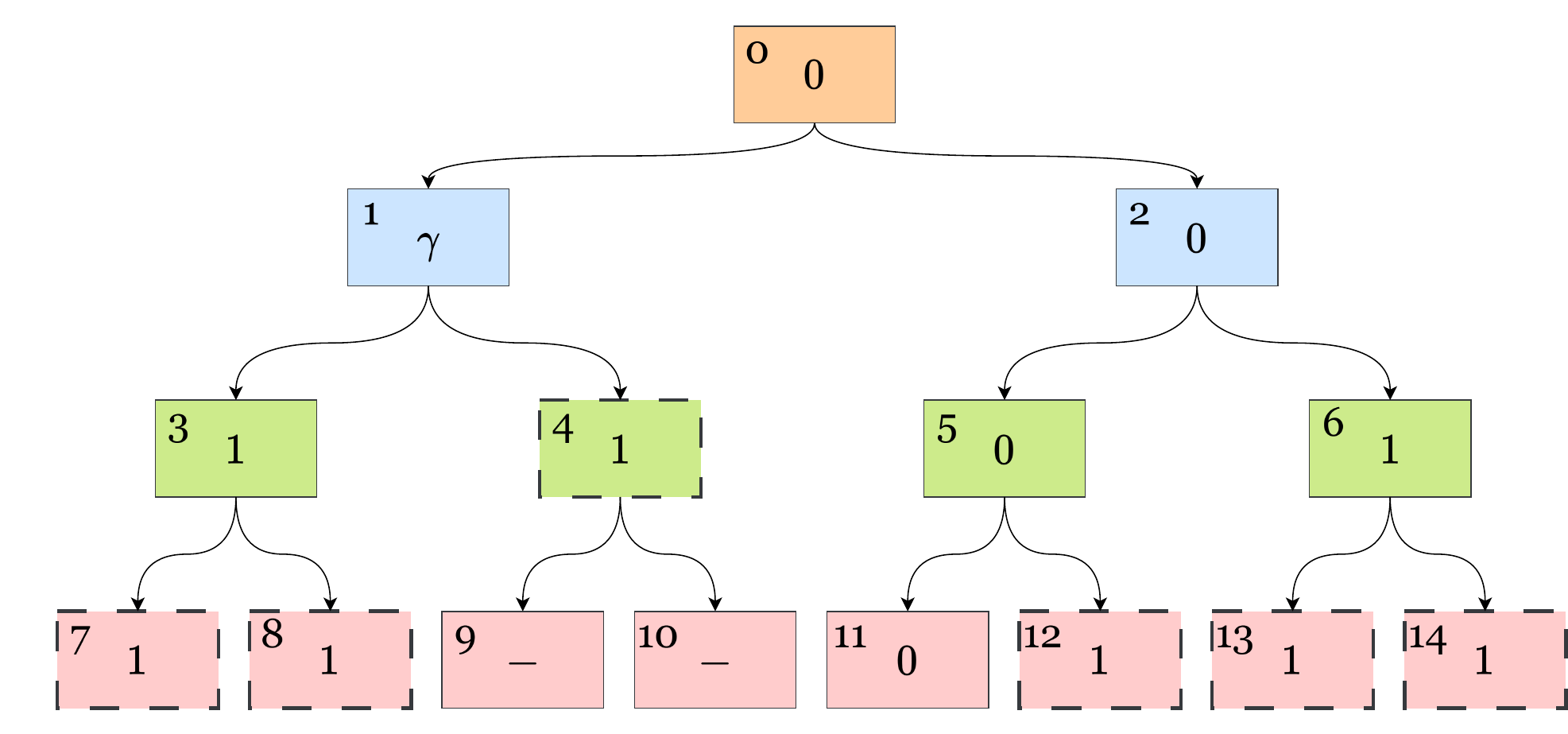}
        % \centerline{\includesvg[width=\textwidth]{figures/illustrations/example_mc_return.svg}}
        \caption{Monte-Carlo return}
    \label{fig:ex_mc_return}
    \end{subfigure}
    \hspace{2em}
    \begin{subfigure}{0.42\textwidth}
        \centering
        \includegraphics[width=\textwidth]{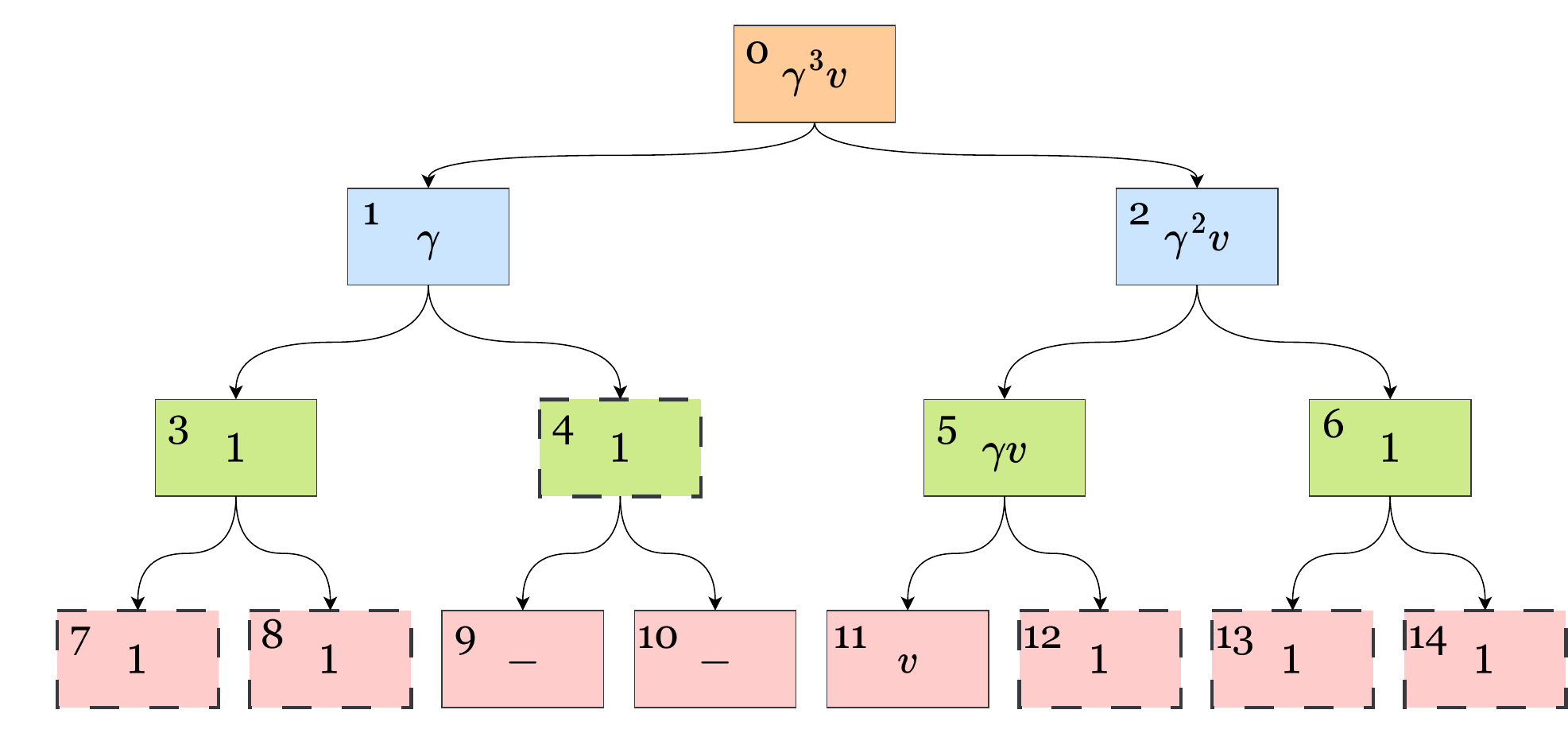}
        % \centerline{\includesvg[width=\textwidth]{figures/illustrations/example_nstep_return.svg}}
        \caption{$n$-step return}
        \label{fig:ex_nstep_return}
    \end{subfigure}
    \caption{Example return estimations for an imperfect tree (node indices at top-left). The dash-bordered cells indicate \textit{terminal} nodes where the policy receives a $1$ reward and the branch terminates. While one of the branches terminates early $(i=4)$, one does not for the unrolled depth $(i=11)$. (a) Since we compute the return as the $\min$ of the child nodes, the Monte-Carlo return at the root node is $0$ in this case. However, a positive learning signal is still induced at nodes $(i=1,3,6)$. (b) Using the critic as a baseline for computing $n$-step returns. The $n$-step returns allow bootstrapping by substituting the reward with value estimates $v$ at the \textit{truncated} node $(i=11)$. This induces a learning signal at the root node even if the plan is incomplete for the unrolled depth.}
    \label{fig:example_returns}
    \vspace{-.5em}
\end{figure}

\subsubsection{Policy Gradients}
\label{sec:plan_policy_grad}

Using tree return estimates (we use $n$-step lambda returns in all cases), we derive the policy gradients for the planning policy as (Eq. \ref{eq:plan_policy_grad}, proof in Sec. \ref{sec:policy_grad_trees}).
% where $\tau$ is a tree trajectory sampled using a planning policy $\pi_P$ parameterized by weights $P$, and $G^\lambda_i(\tau)$ is the computed lambda return.
% advantage estimate at for node $n_i$ for the trajectory $\tau$ as $A^i(\tau) = G_i - v_P(n_i)$.
We show that if the function $G^\lambda_i$ is replaced by a function independent of the policy actions, the expectation reduces to $0$, implying that we can use the value function as a baseline for variance reduction (Th. \ref{theorem:baseline_proof}).
With policy gradients and an entropy term, to encourage random exploration before convergence, we construct the loss function for the actor $\pi_P$ and critic $v_P$ as (sum over all nodes except the \textit{terminal} and \textit{truncated} nodes):

\vspace{-1.5em}

\begin{gather}
    \label{eq:plan_policy_grad}
    \nabla_{\pi_P} J(\pi_P) = \mathbb{E}_\tau \sum_{i=0}^{2^D-2} G_i^\lambda(\tau) \nabla_{\pi_P} \log \pi_P(z_i|n_i) \\
    \label{eq:plan_actor_loss}
    \mathcal{L}(\pi_P) = -\mathbb{E}_{\tau \sim \pi_P} \sum_{i=0}^{2^D-2}  (1 - T_i) \cdot (G^\lambda_i - v_P(n_i)) \log \pi_P(z_i|n_i) + \eta \mathbb{H}[\pi_P(z_i|n_i)] \\
    \label{eq:plan_critic_loss}
    \mathcal{L}(v_P) = \mathbb{E}_{\tau \sim \pi_P} \sum^{2^D-2}_{i=0} (1 - T_i) \cdot (v_P(n_i)-G^\lambda_{i})^2
\end{gather}

\vspace{-1.em}

% Any of the above returns (we use lambda returns) can be used to learn a value function $v_P(n_i) \rightarrow \mathbb{R}$:
% Where $v_p$ is a value function learned using the respective return method as:

\subsection{Explorer}
\label{sec:explorer}

During exploration, an exploration policy $\pi_E$ is used as a manager to drive the worker behavior.
For the same problem of continuous control, the Explorer predicts goals in a discrete latent space learned using a VAE.
As the predicted states do not need to be conditioned on other states like the planner, the Explorer VAE learns state representations unconditionally (similar to  Director).
The Explorer VAE consists of an encoder and a decoder $(\text{Enc}_U, \text{Dec}_U)$.
The encoder predicts latent distributions using state representations: $\text{Enc}_U(z|s_t)$, and the decoder tries to reconstruct the states using the samples from the latent distribution: $\text{Dec}_U(z)$ (Fig. \ref{fig:ucsr}).
As the prediction space is unconstrained, we use a larger latent size ($8 \times 8$ mixture of categoricals).
The VAE is optimized using the ELBO loss (Eq. \ref{eq:ucsr_loss}).
The Explorer is implemented as an SAC (Fig. \ref{fig:expl_actor}) and optimizes an intrinsic exploratory reward.

\vspace{-.5em}

% \begin{equation}
%     \label{eq:ucsr_loss}
%     \begin{split}
%         \mathcal{L}(\text{Enc}_U,\text{Dec}_U) = \left\Vert s_{t} - \text{Dec}_U(z_t)\right\Vert^2 + \beta \text{KL}[\text{Enc}_U(z_t|s_t) \parallel p_U(z)] \quad \text{where} \quad z_t \sim \text{Enc}_U(z_t|s_t)
%     \end{split}
% \end{equation}

% \vspace{-1.25em}

\subsubsection{Exploratory Reward}
\label{sec:expl_rew}

Since the planning policy $\pi_p$ uses the GCSR decoder for subgoal prediction, it is limited to the subgoals learned by the GCSR module.
The GCSR training data must contain all possible state triplets in the environment for the planner to function well.
To achieve this, we formulate our exploratory objective to encourage the Explorer to enact trajectories that are not well-modeled by the GCSR module.
If Explorer traverses a trajectory that contains a state triplet $(s_t,s_{t+q/2},s_{t+q})$, the modeling error is measured as the mean squared error between sub-goal $s_{t+q/2}$ and its prediction using the GCSR decoder.
We compute the exploratory rewards for state $s_t$ based on the previous states as:

% \vspace{-1em}

\begin{equation}
    \label{eq:expl_rew}
    R^E_t = \sum_{q \in Q} \left\Vert s_{t-q/2} - \text{Dec}_G(s_{t-q},s_{t},z) \right\Vert^2
    \quad \text{where} \quad z \sim \text{Enc}_G(s_{t-q},s_{t-q/2},s_{t})
\end{equation}

% We found that the proposed exploratory rewards outperformed vanilla exploratory rewards that reward the agent for visiting states with unclear representations, for example, ensemble disagreement \cite{lexa2021,hafner2019learning} or unconditional state reconstruction error \cite{hafner2022deep} (Fig. \ref{fig:expl_comp}).
% While the vanilla rewards encouraged the agent to seek out unexplored states, once at the state, the agent preferred to stay in its vicinity (Fig. \ref{fig:expl_vanilla_traj}).
% But our loss encourages the agent to move constantly to seek out novel path segments (Fig. \ref{fig:expl_pl_traj}).

% \vspace{-1em}

\subsubsection{Memory Augmented Explorer}
\label{sec:expl_memory}

Since the exploratory rewards for the current step depend on the past states, the Explorer needs to know them to guide the agent accurately along rewarding trajectories.
Fig. \ref{fig:expl_input} shows the state dependencies for the exploratory rewards at different temporal resolutions $q$.
Each temporal resolution $q$ requires $(s_{t-q},s_{t-q/2})$ for computing reward.
For our case, the past states required are: $[(s_{t-K},s_{t-2K}),(s_{t-2K},s_{t-4K}),...]$ for each $q \in \{2K,4K,...\}$ respectively.
Removing the duplicates reduces the set of states required to: $\{s_t,s_{t-K},s_{t-2K},...\}$.

To address this, we provide a memory of the past states as an additional input to the exploratory manager SAC ($\pi_E(s_t,\text{mem}_t), v_E(s_t,\text{mem}_t)$) (Fig. \ref{fig:expl_actor}).
We implement this by maintaining a memory buffer that remembers every $K$-th visited state.
Then, we extract the required states as memory for the Explorer.
% The size of the memory depends upon the rollout length $T$ and the maximum temporal resolution of the GCSR $\max_{q \in Q} q$.
For a trajectory rollout of length $T$, the required size of the memory buffer is $L_\text{mem} = T/K$, and the size of the memory input is $\log_2 L_\text{mem}$.

\textbf{Practical Consideration:}
It can be practically infeasible to maintain a large memory buffer.
However, our memory formulation is highly flexible and allows us to ignore exploratory rewards that require states far in the past.
For all our experiments, we use $T=64$ length rollouts with $L_\text{mem} = 8$ and memory input size: $3$.
The trajectory length also limits the temporal resolutions $q$ for which the exploratory rewards can be computed.
While not entirely optimal, this is sufficient for a significant performance improvement (Fig. \ref{fig:expl_comp}).

\vspace{-.5em}

\begin{figure}
     \centering
    \centering
    \begin{subfigure}{0.42\textwidth}
        \centering
        \begin{subfigure}{\textwidth}
            \centering
            % \includesvg[width=0.95\textwidth]{}
            \includegraphics[width=0.95\textwidth]{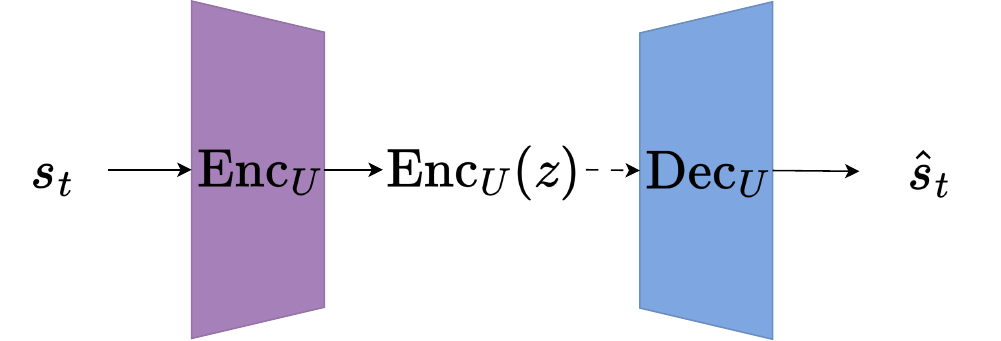}
            \caption{Unconditional VAE}
            \label{fig:ucsr}
        \end{subfigure}
        \begin{subfigure}{\textwidth}
            \centering
            % \centerline{\includesvg[width=\textwidth]{}}
            \includegraphics[width=\textwidth]{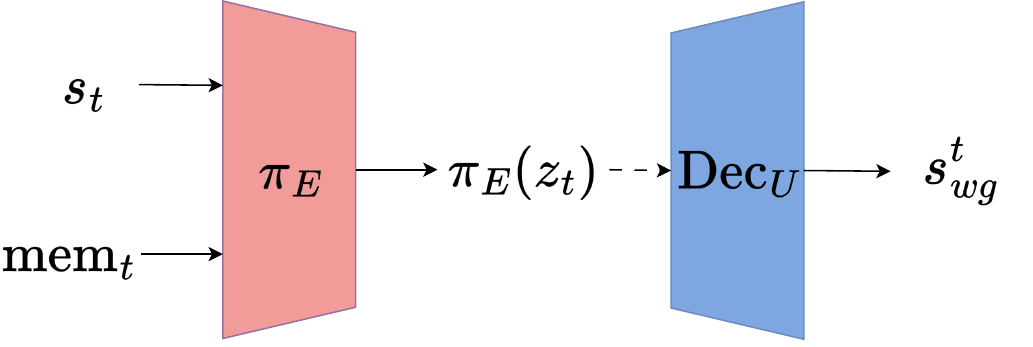}
            \caption{Explorer subgoal prediction}
            \label{fig:expl_actor}
        \end{subfigure}
    \end{subfigure}
    \hspace{1em}
    \begin{subfigure}{0.42\textwidth}
        \centering
        % \centerline{\includesvg[width=0.84\textwidth]{figures/illustrations/expl_inputs.svg}}
        \includegraphics[width=0.9\textwidth]{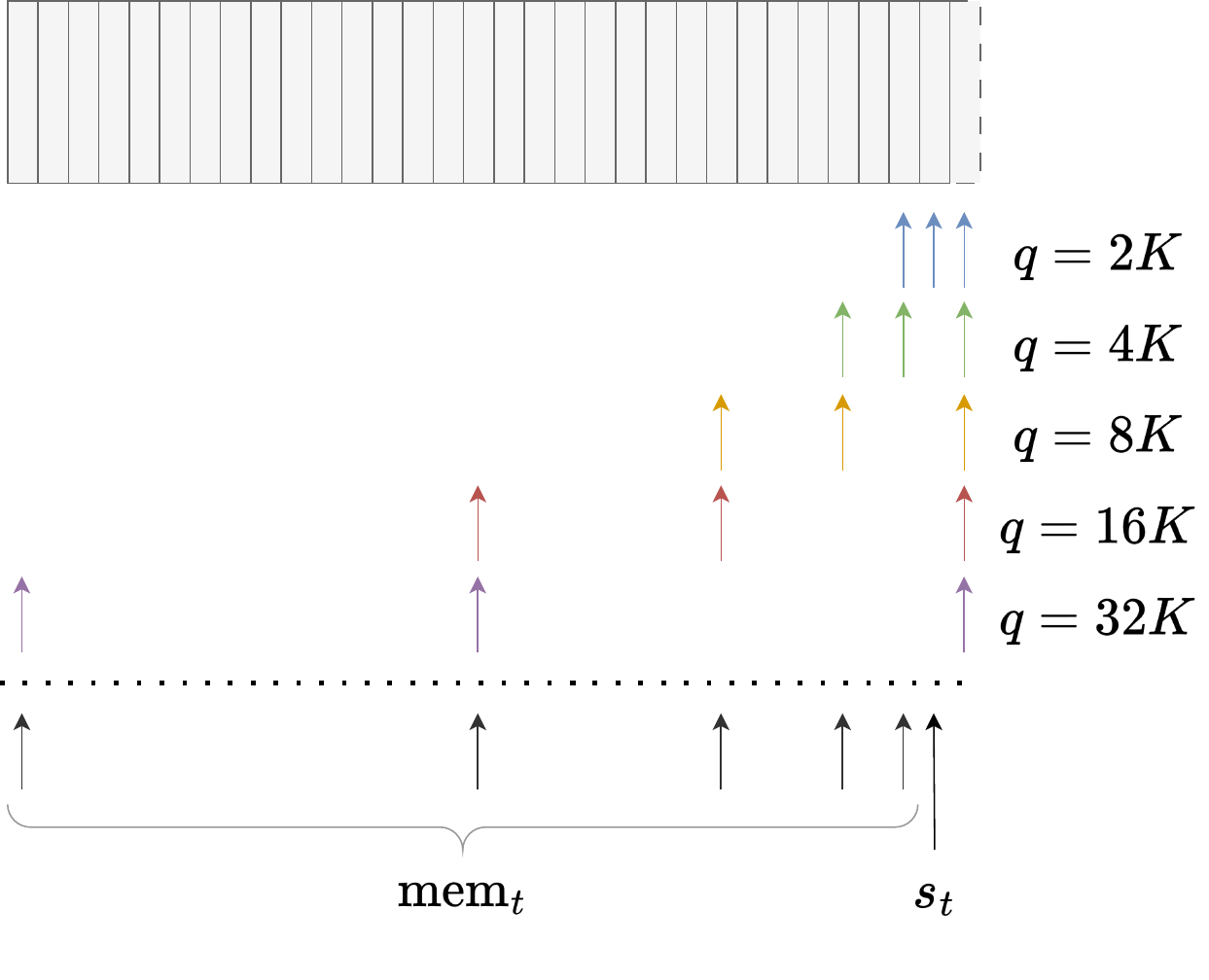}
        \caption{Explorer input states}
    \label{fig:expl_input}
    \end{subfigure}
    \caption{(a) Unconditional VAE that learns to predict states not conditioned on other states. (b) The explorer uses the memory and the VAE decoder to predict subgoals for the worker. (c) Illustration showing the states required as inputs for the Explorer for an example trajectory (top) being played out by an agent. It is a coarse trajectory that shows every $K$-th frame. The agent is at state $s_t$ and will receive rewards when it moves into the placeholder future state $s_{t+1}$ (dashed border). The rewards at $s_{t+1}$ will be computed using the GCSR for different temporal resolutions $q \in Q$ indicated on the right. The colored arrows indicate the state triplet required to compute the exploratory reward at $s_{t+1}$. Combining these state dependencies and removing redundancies yields the input requirements indicated below the dashed line. The inputs consist of the current state and the memory. }
    \label{fig:expl_arch}
\end{figure}

\subsubsection{Policy Optimization}
\label{sec:expl_optim}

The Explorer is optimized as an SAC using the policy gradients from the REINFORCE objective.
RSSM is used to imagine trajectories $\kappa$, followed by extracting every $K$-th frame of the rollout as abstract trajectories.
For clarity, let the time steps of the abstract trajectory be indexed by $k$, where $k=t/K$.
Then, the lambda returns $G^\lambda_k$ are computed for the abstract trajectories using a discount factor $\gamma_L$ (Eq. \ref{eq:expl_lambda_return}).
Finally, the lambda returns are used to formulate the explorer actor and critic loss (Eq. \ref{eq:expl_actor_loss},\ref{eq:expl_critic_loss}).

\vspace{-1.5em}

\begin{gather}
    \label{eq:expl_lambda_return}
    G^\lambda_k = R^E_{k+1} + \gamma_L ((1 - \lambda)v_E(s_{k+1}) + \lambda G^\lambda_{k+1}) \\
    \label{eq:expl_actor_loss}
    \mathcal{L}(\pi_E) = -\mathbb{E}_{\kappa \sim \pi_E} \sum_{k=0}^{T/K-1} (G^\lambda_k - v_E(s_k, \text{mem}_k)) \log \pi_E(z_k|s_k, \text{mem}_k) + \eta \mathbb{H}[\pi_E(z_k|s_k, \text{mem}_k)] \\
    \label{eq:expl_critic_loss}
    \mathcal{L}(v_E) = \mathbb{E}_{\kappa \sim \pi_E} \sum_{k=0}^{T/K-1} (v_E(s_k, \text{mem}_k) - G^\lambda_k)^2
\end{gather}

% \vspace{-1.em}

% \subsection{Other Modules}

% The worker and the world model are trained using the default objectives from the Director \cite{hafner2022deep}.
% % The worker is a SAC that is trained to reach nearby states.
% % Sample pairs of states $(s_t,s_{t+k})$ are extracted from the replay buffer.
% % The worker is initialized at state $s_t$ and is given a reward to maximize the \texttt{cosine\_max} similarity of the current state with the goal state $s_{t+K}$.
% % Similarly, the RSSM is optimized using the default objective.
% Please refer to the section \ref{sec:training_details} for full training details.

\vspace{-1.5em}

\section{Evaluation \& Results}

\vspace{-0.5em}

\textbf{Task Details}

We extensively test our agent in the 25-room environment, a 2D maze task where the agent must navigate through connected rooms to reach the target position.
Benchmarks from previous methods show average episode duration $> 150$ steps, indicating a long-horizon task \cite{pertsch2020long}.
Observations are provided as initial and goal states ($64 \times 64$ images) and a reward $0 < R \leq 1$ upon reaching the goal position.
Each episode lasts $400$ steps before terminating with a $0$ reward.
We follow the same evaluation protocols as the previous benchmarks ( \cite{pertsch2020long}) and average across $100$ test runs.
% The evaluation protocol follows Pertsch et al. (2020): $100$ test episodes with random start/goal configurations, $400$-step time limit.
We also test our method in momentum-based environments, such as RoboYoga \cite{lexa2021}, which can be challenging due to the lack of momentum information in single images (the goal).
% RoboYoga can be challenging for our agent because while the state-representations contain momentum information, the goal state generated using a single image representation does not.

\begin{wrapfigure}{r}{0.3\textwidth}
  \centering
  \includegraphics[width=0.8\linewidth]{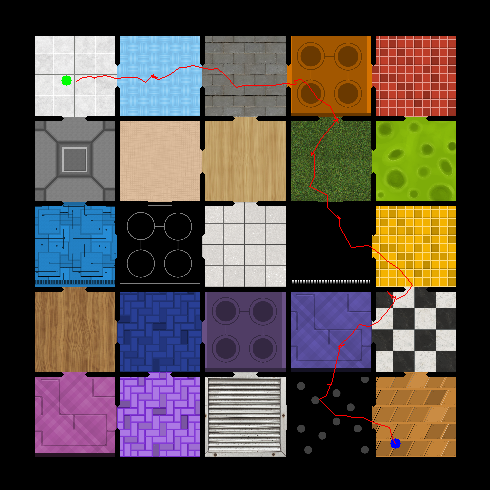}
  \caption{Full maze with a sample run using our agent.}
  \label{fig:arena}
  \vspace{-2em}
\end{wrapfigure}

\textbf{Agent Hyperparameters}

We use a common hyperparameter setup for all tasks.
The goal refresh rate is set to $K=8$, and the modeled temporal resolutions as $Q = \{2K,4K,8K,16K,32K\}$.
% During inference, the agent re-plans from its current state at each refresh, coping with stochastic and dynamic environments. 
The depth of the unrolled tree during training is $D=5$ and during inference is $D_I=8$ unless specified otherwise.
For the first $3$M steps, the explorer is used as the manager; then it shifts to the planning policy.
The agent is trained every $16$ environmental steps.
Please refer to section \ref{sec:training_details} for complete training details.

\vspace{-1.em}

\subsection{Results}
\label{sec:results}

\vspace{-0.5em}

Figure \ref{fig:samples} shows the sample solutions generated by our agent during training and inference.
Our agent can navigate the maze to reach far goals successfully and is interpretable.

\begin{figure}
     \centering
     \begin{subfigure}{0.72\textwidth}
         \centering
         \includegraphics[width=\textwidth,frame]{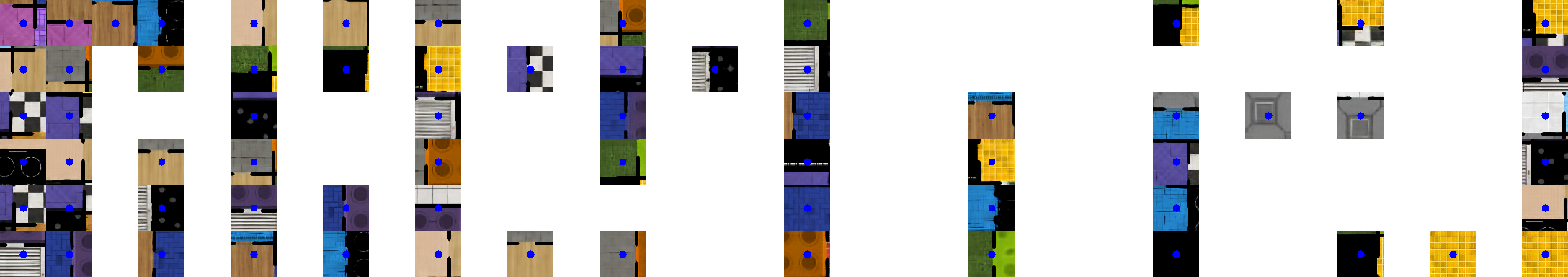}
         \caption{ }
         \label{fig:train_plan_sample}
     \end{subfigure}
     \hspace{2em}
     \begin{subfigure}{0.15\textwidth}
         \centering
         \includegraphics[width=\textwidth,frame]{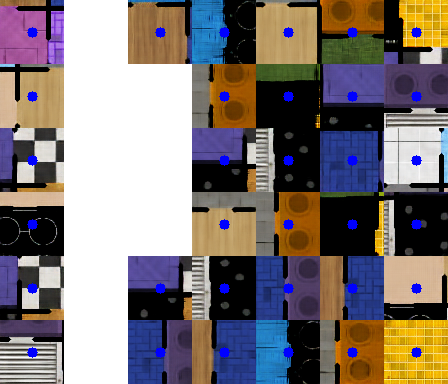}
         \caption{ }
         \label{fig:inference_sample}
     \end{subfigure}
     \caption{Samples plan by our agent during training and inference. (a) Samples of full plans as sequences of subgoals from the start states (left) to the goal states (right). The subgoals are extracted from the goal states of all \textit{terminal} nodes. The blanks occur when nodes terminate before maximum depth. (b) Sample subgoals generated during inference. The first and last images indicate the initial and goal states. Other images represent subgoals that break the path from the initial to the subgoal image on its right [Same order as \ref{fig:tree_inference}]. Blanks are when a reachable subgoal is found before the max depth.}
     \label{fig:samples}
\end{figure}

We compare the performance in terms of the average success rate in reaching the goal state and the average path length against previous methods.
% The performances for comparison of the agents are taken from \cite{pertsch2020long} which include 
Goal-Conditioned Behavioral Cloning (\textbf{GC BC}, \cite{nair2017combining}) that learns goal-reaching behavior from example goal-reaching behavior.
Visual foresight (\textbf{VF}, \cite{ebert2018visual}) that optimizes rollouts from a forward prediction model via the cross-entropic method (CEM, \cite{rubinstein2004cross,nagabandi2020deep}).
Hierarchical planning using Goal-Conditioned Predictors (\textbf{GCP},\cite{pertsch2020long}) optimized using CEM to minimize the predicted distance cost.
Goal-Conditioned Director (\textbf{GC-Director}).
And \textbf{LEXA}, a SOTA sequential planning method that also uses an explorer and a planner but optimizes continuous rewards \textit{cosine} and \textit{temporal}.
GC BC, VF, and GCP performances are taken from \cite{pertsch2020long}, which uses the same evaluation strategy.
Table \ref{tab:final_perf_comp} shows that our model outperforms the previous work in terms of success rate and average episode lengths.
Our method and \textbf{LEXA (Temp)} yield the shortest episode lengths.
However, note that while LEXA explicitly optimizes for path lengths, our method uses an implicit objective.

\begin{table}
    \centering
    \begin{tabular}{c|c|c|c|c}
        \hline
        Agent & Success rate & Average episode length & Compute steps & Time complexity \\
        \hline
        GC BC & $7\%$ & $402.48$ & $1$ & $1$ \\
        GC-Director & $9\%$ & $378.89 \pm 87.67$ & $1$ & $1$ \\
        LEXA (Cos) & $20\%$ & $321.04 \pm 153.29$ & $1$ & $1$ \\
        VF & $26\%$ & $362.82$ & $MN$ & $N$ \\
        GCP & $82\%$ & $158.06$ & $(2N+1)M$ & $\log N$ \\
        % Director (Fixed Goal) & $9\%$ & $378.89$ & $1$ & $1$ \\
        LEXA (Temp) & $90\%$ & $70.34 \pm 111.14$ & $1$ & $1$ \\
        \hline
        DHP (\textit{Ours}) & $100\%$ & $73.84 \pm 46.54$ & $\log N$ & $\log N$ \\
        \hline
    \end{tabular}
    \caption{Average Performance of different approaches on the 25-room navigation task over $100$ evaluation runs. $N$ is the number of plan steps, and $M$ is the number of samples generated per plan.}
    \label{tab:final_perf_comp}
    \vspace{-.5em}
\end{table}

Fig. \ref{fig:baseline_comp} shows the score and episode length plots for some methods.
Here, we plot an extra experiment, \textbf{Director (Fixed Goal)}, where the goal remains fixed, and the agent only inputs the current state image.
It can be seen that the agent shows signs of learning, the score falls but rises again around $8$M steps to $\sim 70\%$.
Comparing this to \textbf{GC-Director} (which completely fails) shows that the issue is not navigation or agent size, but the complexity of a goal-conditioned long-horizon task that requires planning.
% Similarly, LEXA \cite{kaelbling1993learning}, a comparably complex agent, fails to perform.

% Plots
\begin{figure}
    \centering
    \begin{subfigure}[b]{0.24\textwidth}
        \centering
        % \centerline{\includesvg[width=\columnwidth]{figures/plots/baseline_comp_scores.svg}}
        \includegraphics[width=\textwidth]{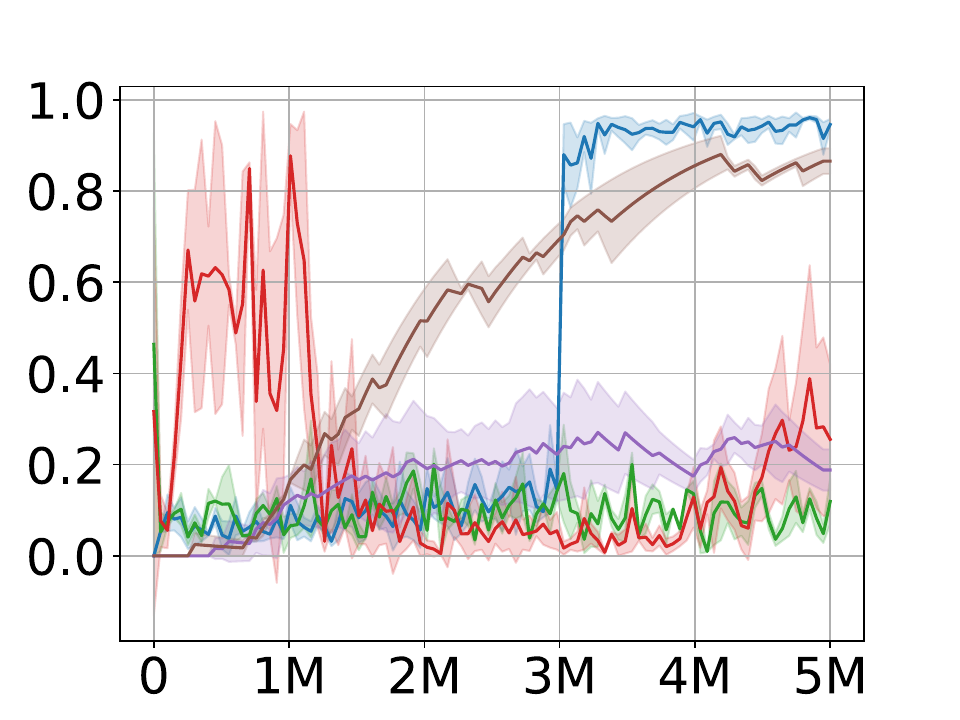}
    \end{subfigure}
    \begin{subfigure}[b]{0.24\textwidth}
        \centering
        % \centerline{\includesvg[width=\columnwidth]{figures/plots/bootstrap_scores.svg}}
        \includegraphics[width=\textwidth]{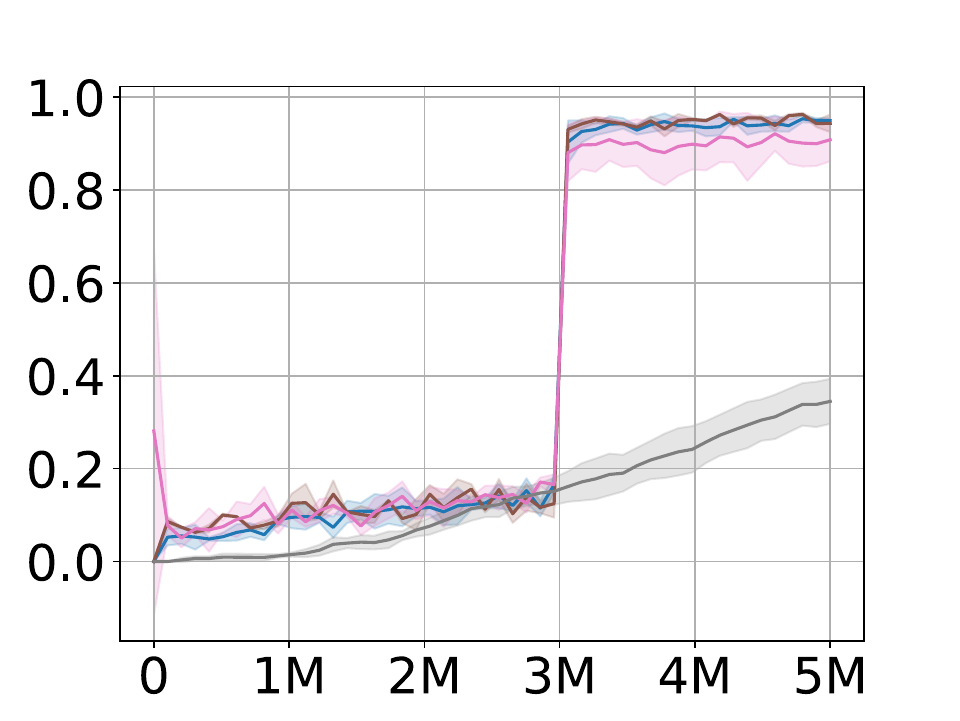}
    \end{subfigure}
    \begin{subfigure}[b]{0.24\textwidth}
        \centering
        % \centerline{\includesvg[width=\columnwidth]{figures/plots/expl_comp_scores.svg}}
        \includegraphics[width=\textwidth]{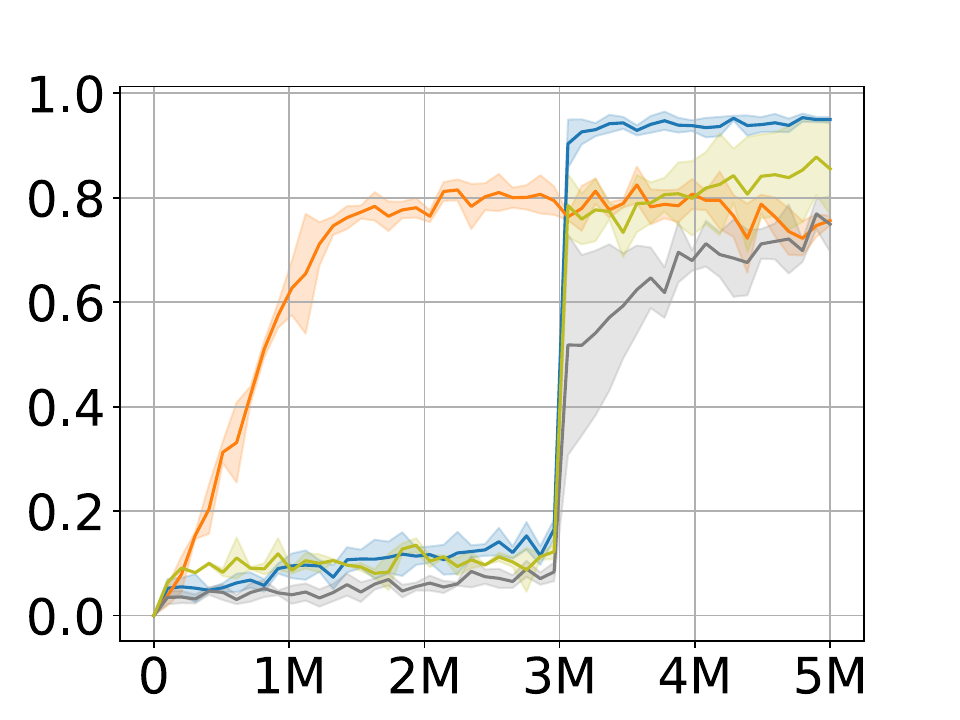}
    \end{subfigure}
    \begin{subfigure}[b]{0.24\textwidth}
        \centering
        % \centerline{\includesvg[width=\columnwidth]{figures/plots/rew_comp_scores.svg}}
        \includegraphics[width=\textwidth]{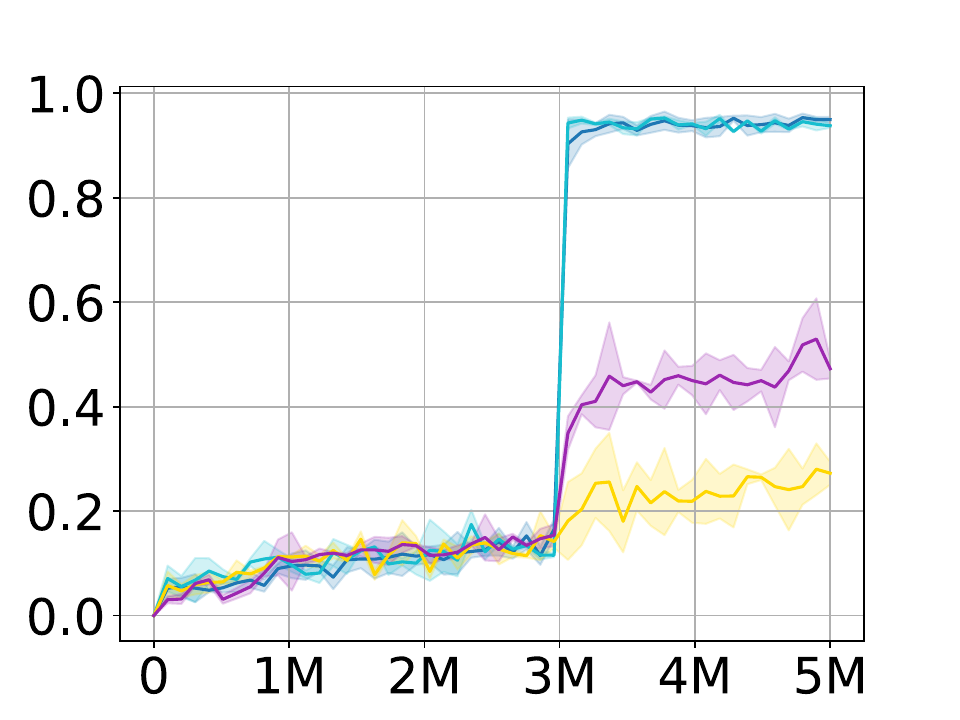}
    \end{subfigure}
    \begin{subfigure}[b]{0.24\textwidth}
        \centering
        % \centerline{\includesvg[width=\columnwidth]{figures/plots/baseline_comp_ep_len.svg}}
        \includegraphics[width=\textwidth]{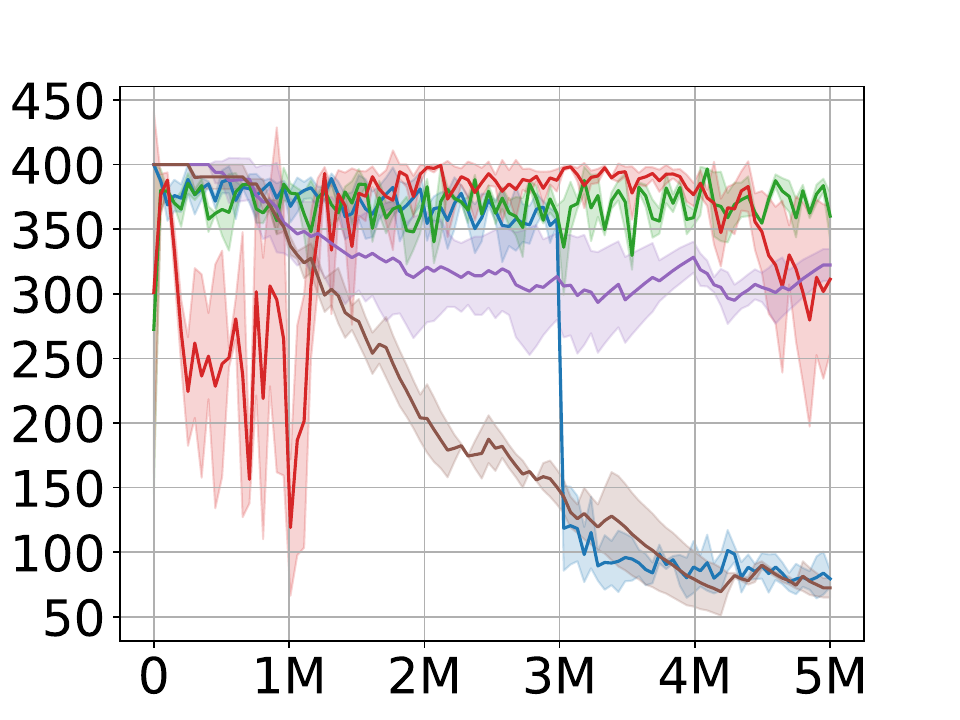}
        % \caption{}
    \end{subfigure}
    \begin{subfigure}[b]{0.24\textwidth}
        \centering
        % \centerline{\includesvg[width=\columnwidth]{figures/plots/bootstrap_ep_len.svg}}
        \includegraphics[width=\textwidth]{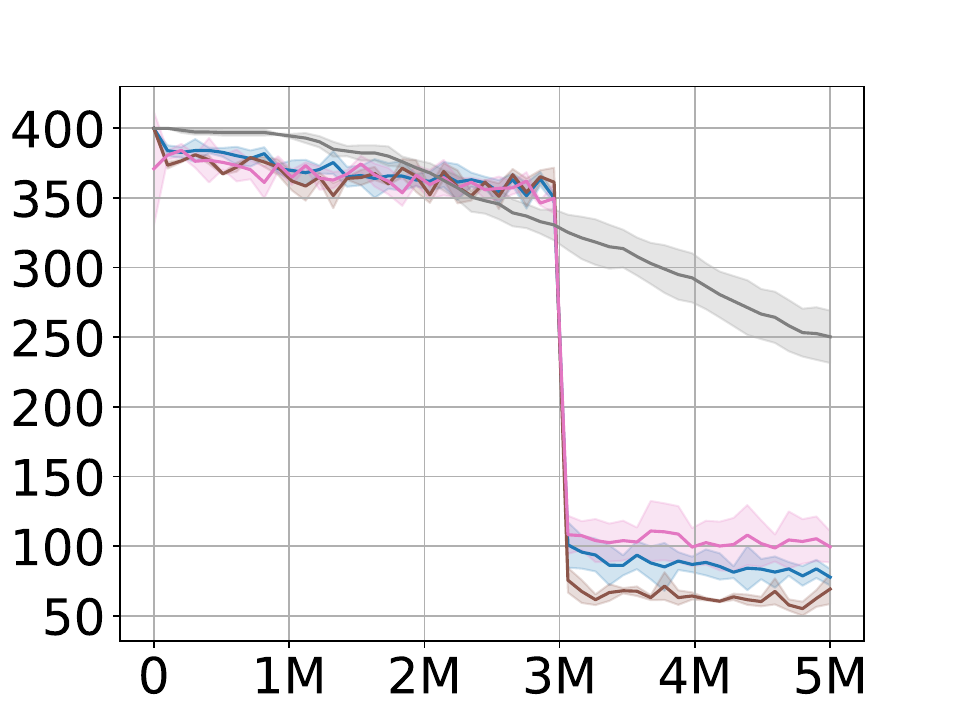}
        % \caption{}
    \end{subfigure}
    \begin{subfigure}[b]{0.24\textwidth}
        \centering
        % \centerline{\includesvg[width=\columnwidth]{figures/plots/expl_comp_ep_len.svg}}
        \includegraphics[width=\textwidth]{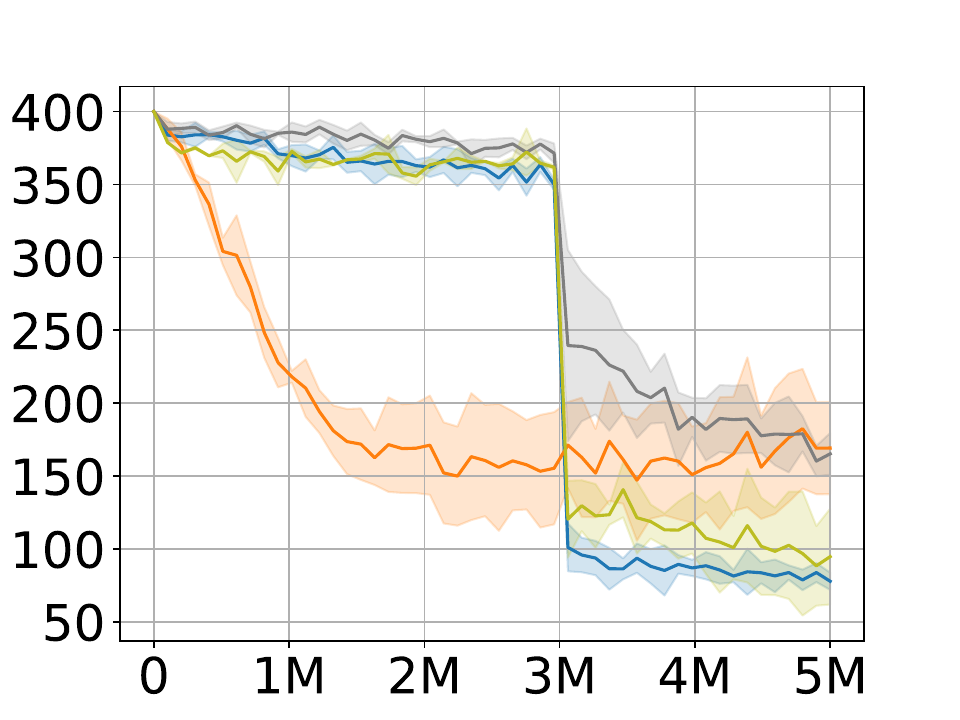}
        % \caption{}
    \end{subfigure}
    \begin{subfigure}[b]{0.24\textwidth}
        \centering
        % \centerline{\includesvg[width=\columnwidth]{figures/plots/rew_comp_ep_len.svg}}
        \includegraphics[width=\textwidth]{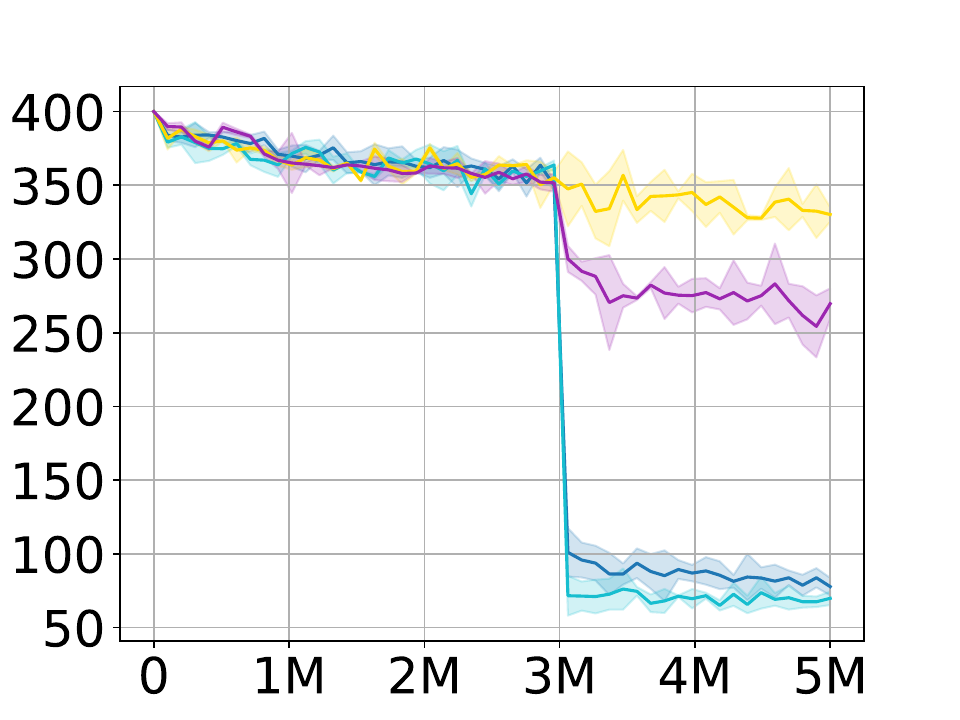}
        % \caption{}
    \end{subfigure}
    \begin{subfigure}[b]{0.24\textwidth}
        \centering
        % \centerline{\includesvg[width=\columnwidth]{figures/plots/baseline_legend.svg}}
        \includegraphics[width=\textwidth]{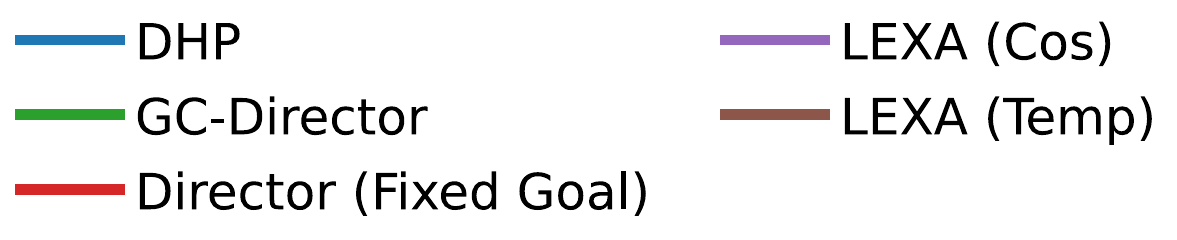}
        \caption{Baselines}
        \label{fig:baseline_comp}
    \end{subfigure}
    \begin{subfigure}[b]{0.24\textwidth}
        \centering
        % \centerline{\includesvg[width=\columnwidth]{figures/plots/bootstrap_legend.svg}}
        \includegraphics[width=\textwidth]{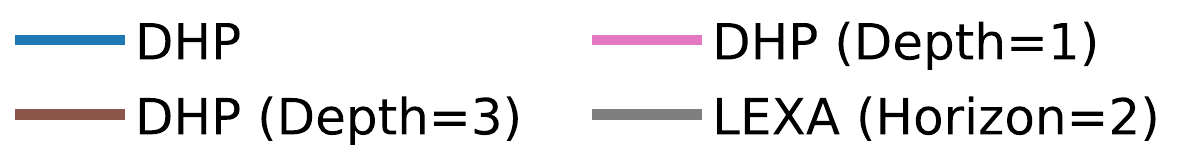}
        \caption{Bootstapping}
        \label{fig:bootstrapping_comp}
    \end{subfigure}
    \begin{subfigure}[b]{0.24\textwidth}
        \centering
        % \centerline{\includesvg[width=\columnwidth]{figures/plots/expl_legend.svg}}
        \includegraphics[width=\textwidth]{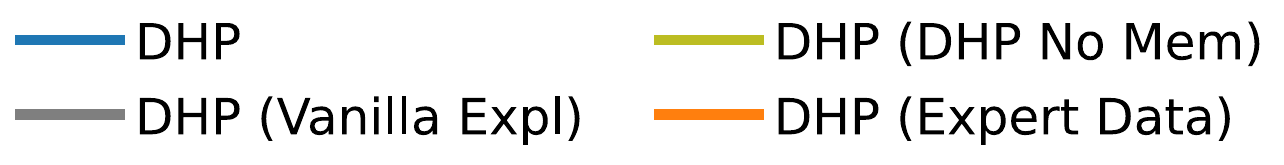}
        \caption{Exploration}
        \label{fig:expl_comp}
    \end{subfigure}
    \begin{subfigure}[b]{0.24\textwidth}
        \centering
        % \centerline{\includesvg[width=\columnwidth]{figures/plots/rew_legend.svg}}
        \includegraphics[width=\textwidth]{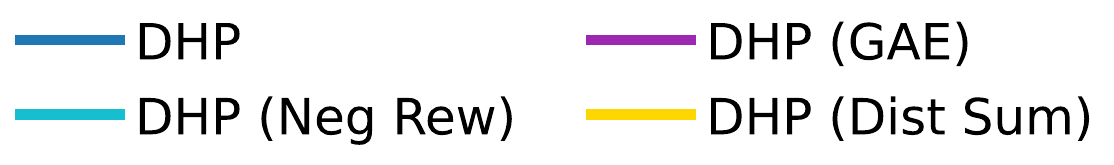}
        \caption{Other rewards schemes}
        \label{fig:rew_comp}
    \end{subfigure}
    \caption{Experimental results (mean and stddev across $3$ seeds) shown with episodic reward (\textit{top} row) and average episode length (\textit{bottom} row) against the environmental steps. The sharp rise in performance for DHP at step 3M indicates the switch from exploration to planning. (a) \textbf{Baselines:} Comparing our method DHP with state-of-the-art methods. (b) \textbf{Bootstrapping:} Compares shallow training depth models $(D=1,3)$ with the default training depth. (c) \textbf{Explore data} Comparison of planning policy performance trained on different data. (d) Comparison to \textbf{Other reward schemes}.}
    \label{fig:results}
  \vspace{-1em}
\end{figure}

\vspace{-1.em}
\subsection{Ablations}
\label{sec:ablations}
\vspace{-0.15cm}

\textit{Can the planning method generalize to higher depths?}\\
We train two DHP agents with a maximum tree depth of $3$ and $1$ during training for planning.
The $1$ depth agent is only allowed to break the given task once for learning.
Fig. \ref{fig:bootstrapping_comp} shows the comparison of the agents, and it can be seen that all agents perform similarly.
Note that LEXA with an equivalent horizon does not perform well.
This demonstrates a key advantage of our bootstrapped return estimation: the value function learns to estimate returns for arbitrary depths, enabling the policy to plan beyond its training distribution. This is analogous to how TD learning in standard RL generalizes beyond observed trajectory lengths.

\textit{Does the complex exploration strategy help?}\\
We compare the planning policy trained against three data sources: (1) expert data collected using a suboptimal policy from \cite{pertsch2020long}(2) vanilla exploration using unconditional VAE reconstruction error, and (3) our memory-augmented GCSR-based exploration without memory (Sec. \ref{sec:explorer}), and using an explorer without the memory.
Fig. \ref{fig:expl_comp}shows our default memory-augmented explorer performs noticeably better, validating both the GCSR-based reward formulation and the memory mechanism. The superior performance over expert data is particularly noteworthy, as it demonstrates our task-agnostic exploration discovers more useful coverage than task-specific demonstrations.

\begin{wrapfigure}{r}{0.45\textwidth}
    \centering
    \begin{subfigure}[b]{0.22\textwidth}
        \centering
        % \centerline{\includesvg[width=\columnwidth]{figures/plots/yogawalker_scores.svg}}
        \includegraphics[width=\textwidth]{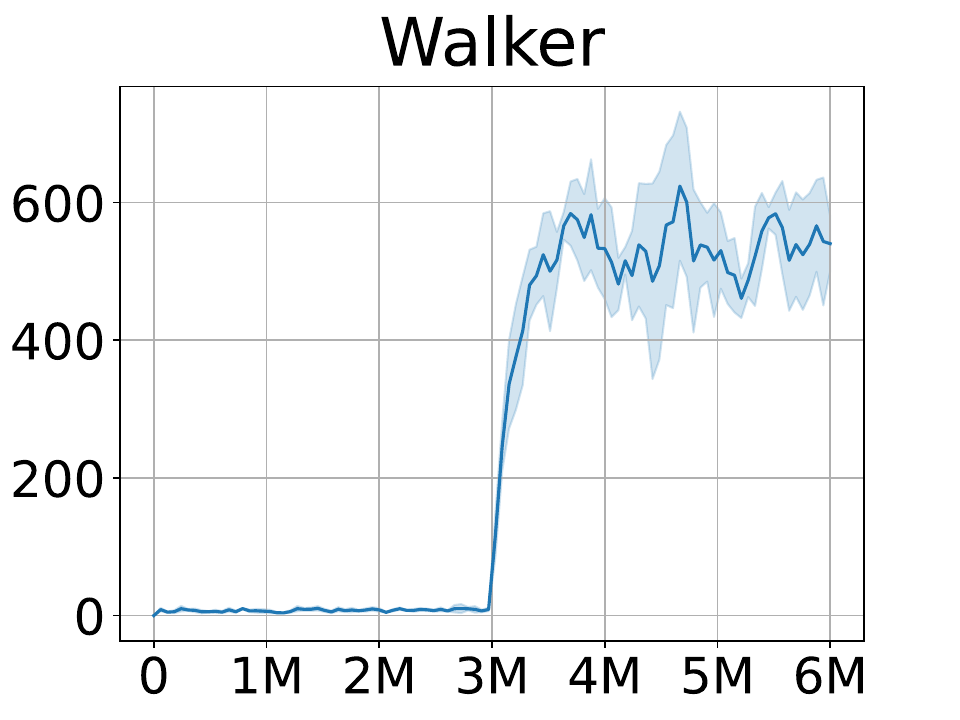}
        % \caption{\texttt{Walker}}
    \end{subfigure}
    \begin{subfigure}[b]{0.22\textwidth}
        \centering
        % \centerline{\includesvg[width=\columnwidth]{figures/plots/yogaquad_scores.svg}}
        \includegraphics[width=\textwidth]{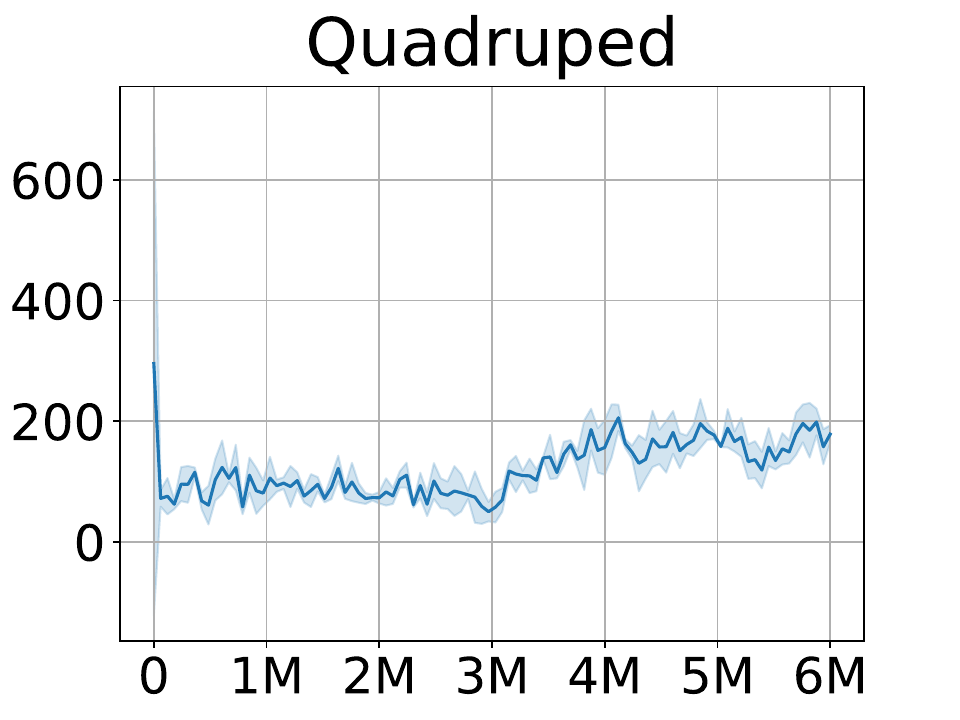}
        % \caption{\texttt{Quadruped}}
    \end{subfigure}
    \caption{Robo Yoga Environments}
    \label{fig:yoga_results}
\end{wrapfigure}

\textit{Can the method perform in other environments?}\\
We test our agent in the Deepmind Control Suite \cite{tassa2018deepmind} based RoboYoga \cite{lexa2021} environment.
The task requires \texttt{walker} and \texttt{quadruped} agents to reach a goal body orientation specified by an image.
The environment is challenging because it rewards the agent for maintaining the goal position, whereas our agent only plans to reach the goal state.
Also, the goal states do not encode the momentum information.
Fig. \ref{fig:yoga_results} shows the performance, of our agent at the tasks.
Our agent can solve the \texttt{walker} task, but struggles at \texttt{quadruped}.
We observed that our \texttt{walker} agent maintains the overall pose but constantly sways about the goal position.
Fig. \ref{fig:walker_headstand} shows an episode where the agent headstands constantly.
% DHP also performs well with the Lightsout puzzle environment (Sec. \ref{sec:lightsout}).
%However, in the \texttt{quadruped} task, the agent learned to hallucinate infeasible midway states that get verified as reachable using the RSSM (Fig. \ref{fig:yogaquad_fail}), highlighting one of the limitations of our approach. 
Looking at the scaling limitations of our method, we implement a minimal version of DHP (Sec. \ref{sec:offline_dhp}) and evaluate at the OGBench long-horizon large and giant mazes.
The resulting agent outperforms the current best methods with an \textit{absolute} improvement of $+71.2\%$ in the giant humanoid maze (Tab. \ref{tab:results_ogbench}).

\textit{How does the method perform with other reward and return schemes?}\\
We compare the default agent against a few variants: \textbf{DHP (Neg Rew)} That rewards $-1$ at non-terminal nodes and $0$ at terminal (like an existence penalty), \textbf{DHP (GAE)} that estimates advantages as $\min$ of the GAE advantages for each child node, and \textbf{DHP (Dist Sum)} rewards negative estimated distances (between $n_{i,0}$ and $n_{i,1}$) at each node.
Fig. \ref{fig:rew_comp} shows that the \textbf{Neg Rew} agent performs as well as the default method, while the others don't perform as well.

\vspace{-0.75em}
\section{Related Work}
\vspace{-0.75em}

\textbf{Hierarchical RL Agents:}
Hierarchical reinforcement learning (HRL) is a set of techniques that abstracts actions \cite{botvinick2009hierarchically,wiering2012reinforcement,barto2003recent,sutton1999between,pateria2021hierarchical,nachum2018data}.
%HRL agents decompose long-horizon tasks into reusable subgoals or skills, enabling efficient exploration and transferable policies.
Foundational works, such as the options framework \cite{sutton1999between} and MAXQ decomposition \cite{dietterich2000hierarchical}, introduced temporal abstraction, enabling agents to reason at multiple time scales.
% HRL agents are usually split into two modules, a Manager and a Worker.
%The worker learns a task-independent policy that is goal-conditioned or options-based, and the manager learns a policy to control the worker in the task context.
%The manager works on a coarser timescale than the worker and outputs abstract actions or subgoals for the worker to complete.
%Working on a coarser scale allows the manager to address the problem of long-horizon credit assignment.
%Various strategies have been proposed in the past for learning abstract actions.
% OPAL \cite{ajay2020opal} encodes the trajectory using a bidirectional GRU and is optimized as a Variational Autoencoder.
% Causal InfoGAN \cite{kurutach2018learning} uses an InfoGAN to learn causal representations by maximizing mutual information between the skill and the initial and final state-pairs.
% DADS \cite{sharma2020dynamics} use the mutual information objective to learn a diverse forward-inverse kinematics model that is later used for planning.
% The Director \cite{hafner2022deep} manager predicts subgoals for the worker in a latent space, learned using a VAE.
Modern approaches learn hierarchical policies through mutual information (Causal InfoGAN \cite{kurutach2018learning}, DADS \cite{sharma2020dynamics}), latent space planning (Director \cite{hafner2022deep}), or trajectory encoding (OPAL \cite{ajay2020opal}).
% We use an HRL architecture for our agent, as it enables our planning policy to construct plans as discrete combinations of abstract actions.
% Note that the key contribution of our work is to show that hierarchical RL agents can be better trained using a discrete reachability-based reward.
These results demonstrate that hierarchical decomposition facilitates efficient credit assignment in planning.

\textbf{Planning Algorithms:}
Planning methods aim to solve long-horizon tasks efficiently by exploring future states and selecting optimal actions \cite{lavalle2006planning,choset2005principles}.
%These methods can be broadly categorized into several strategies, each with its own strengths and limitations.
Monte Carlo Tree Search (MCTS) \cite{browne2012survey} expands a tree of possible future states by sampling actions and simulating outcomes.
While effective in discrete action spaces, MCTS struggles with scalability in high-dimensional or continuous environments.
Visual Foresight methods \cite{ebert2018visual,finn2017deep,hafner2019learning} learned visual dynamics models to simulate future states, enabling planning in pixel-based environments.
However, they require accurate world models and can be computationally expensive.
Some use explicit graph search over the replay buffer data \cite{eysenbach2019search}.
Model Predictive Control (MPC) \cite{nagabandi2018neural,nagabandi2020deep} is an online planner that samples future trajectories and optimizes actions over a finite horizon.
These methods rely on sampling the future state and thus do not scale well with the horizon length.
LEXA \cite{kaelbling1993learning} is a policy-based linear planner that also uses an explorer for data collection. 

To address the challenges of long-horizon tasks, some planning algorithms decompose complex problems into manageable subtasks by predicting intermediate subgoals \cite{parascandolo2020divide,jurgenson2020sub,pertsch2020long}.
%These methods predict a subgoal approximately midway between the initial and goal states, dividing the problem into two simpler subtasks.
MAXQ \cite{dietterich2000hierarchical} decomposes the value function of the target Markov Decision Process (MDP) into an additive combination of smaller MDPs.
Sub-Goal Trees \cite{jurgenson2020sub} and CO-PILOT \cite{ao2021co} learn a subgoal prediction policy optimized to minimize the total predicted distance measures of the decomposed subtasks.
% Long-Horizon Visual Planning \cite{pertsch2020long} utilizes Goal-Conditioned Predictors (GCPs) to reduce the prediction space and employs a Cross-Entropy Method (CEM)-based planner optimized to minimize distance costs.
GCP \cite{pertsch2020long} and \texttt{kSubS} \cite{czechowski2021subgoal} use specialized subgoal predicting modules to search for plans.
DHRL \cite{lee2022dhrl} uses efficient graph search by decoupling the time horizons of high-level and low-level policies.
% CO-PILOT \cite{ao2021co} is a collaborative framework where a planning policy and an RL agent learn through mutual feedback to optimize the total distance cost.
% Goal-Conditioned Reinforcement Learning with Imagined Subgoals \cite{chane2021goal} predicts midway subgoals by optimizing a distance measure and later uses a value function for reachability checks.

While these methods have demonstrated success, they rely heavily on distance-based metrics, which are challenging to learn and sensitive to policy quality \cite{eysenbach2019search,ao2021co}.
In contrast, our method utilizes discrete reachability-based rewards, which are easier to accurately estimate and provide clearer learning signals.

\vspace{-0.75em}

\section{Discussion \& Future Work}
\label{sec:discussion}

\vspace{-0.75em}

DHP architecture enables us to train a future-conditioned (goal) planning policy, $\pi_P$, and a past-conditioned (memory) exploration policy, $\pi_E$.
% We propose an explorer/planner agent pair where the explorer is a past conditioned policy that generates data
%The above architectures and objectives allow us to train a future-conditioned (goal) planning policy $\pi_P$ and a past-conditioned (memory) explorer policy $\pi_E$.
%The explorer aims to generate data apt for the GCSR module, and the planner use the GCSR module for subgoal prediction.
%Thus, the GSCR module serves as long-term memory of path-segments and forms a core link between the explorer and the planner.
%We provide theoretical foundations and analysis of the approach (Sec. \ref{sec:theory}).
The resulting model performs expertly on the standard 25-room long-horizon task than the current SOTA approaches (Fig. \ref{fig:baseline_comp}).
The ablations show that the method generalizes beyond the training depths (Fig. \ref{fig:bootstrapping_comp}), the exploratory rewards significantly impact performance, which are further enhanced using memory (Fig. \ref{fig:expl_comp}), and possible alternate reward schemes (Fig. \ref{fig:rew_comp}).
%We also highlight the limitations of our method, which relies on imagination during training (Fig. \ref{fig:yoga_results}).
Our method relies on imagination for training, which allows for plan evaluation without the need to access the internal environmental state; however, it may introduce inaccuracies (Fig. \ref{fig:yoga_results}).
%Though imagination allows plan evaluation in latent state space without the need to access the internal environmental state, it can be inaccurate.
However, we observed better results with discrete rewards than with the distance-based approach for our model (Fig. \ref{fig:rew_comp}).
% Another limitation is that the memory requirements can sometimes be practically infeasible.
% Full memory requirements can be practically infeasible sometimes; however, our formulation is highly flexible.
The architecture is flexible, and components can be used in isolation, e.g., the planning policy optimization can be combined with custom reachability measures.
%Similarly, the exploratory rewards can be applied for custom trajectory learning modules (alternatives to GCSR).
For future work, the agent can benefit from better and generic goal-state estimation mechanisms, preferably multimodal. %that can allow easier and accurate goal assignment.
 %Sometimes the environment requires the agent to go through multiple bottleneck states for efficient exploration.
 %Such cases can benefit from targeted exploration 
%The agent can also learn to generate goals for itself via a curriculum, which can enable long-term goal-oriented behavior in reward-based tasks.
%The agent can also learn to generate goals for itself via a curriculum, which can enable long-term goal-oriented behavior in reward-based tasks.
%This behavior can benefit exploration in complex environments with multiple bottleneck states that require a long-term commitment to a goal for efficient exploration.
The agent can also learn to generate goals for itself via a curriculum for targeted exploration, which can benefit exploration in complex environments with multiple bottleneck states that require a long-term commitment to a goal for efficient exploration.
Given this, we demonstrate that the agent can be beneficial in solving long-range tasks and believe that the ideas presented in the paper can be valuable to the community independently.
%Given this we show that the agent can be useful at long-horizon tasks.
%We believe that the ideas presented in the paper can be beneficial to the community
The code for the agent is available at: <URL redacted>.

\newpage
% \printbibliography
\bibliography{bibliography}

%%%%%%%%%%%%%%%%%%%%%%%%%%%%%%%%%%%%%%%%%%%%%%%%%%%%%%%%%%%%

\newpage
\appendix

\section{Theoretical Proofs and Derivations}
\label{sec:theory}

% In this section, we use standard notions to better align with previous literature.
% The plan policy, parameterized by $\theta$, is referred to as $\pi_\theta$, and its actions as $a_i$.

\subsection{Notation Used}

\begin{table}[h]
\centering
\caption{Notation Summary}
\label{tab:notation}
\small
\begin{tabular}{@{}cl@{}}
\toprule
\textbf{Symbol} & \textbf{Description} \\
\midrule
\multicolumn{2}{@{}l}{\textit{States and Actions}} \\
$s_t$ & State at time $t$ (RSSM representation) \\
$o_t$ & Observation at time $t$ (raw image) \\
$s_g$ & Goal state \\
$s_{wg}$ & Worker goal (subgoal to be reached in $K$ steps) \\
$a_t$ & Action at time $t$ \\
$K$ & Worker horizon (goal refresh rate, default: 8 steps) \\
\midrule
\multicolumn{2}{@{}l}{\textit{Policies and Value Functions}} \\
$\pi_W(a_t|s_t, s_{wg})$ & Worker policy (low-level action selection) \\
$\pi_E(z|s_t, \text{mem}_t)$ & Explorer policy (goal generation for exploration) \\
$\pi_P(z|s_t, s_g)$ & Planning policy (subgoal generation for task) \\
$v_W(s_t, s_{wg})$ & Worker value function \\
$v_E(s_t, \text{mem}_t)$ & Explorer value function \\
$v_P(n_i)$ & Planning policy value function (for tree node $n_i$) \\
\midrule
\multicolumn{2}{@{}l}{\textit{Goal-Conditioned State Recall (GCSR)}} \\
$\text{Enc}_G(z|s_t, s_{t+q/2}, s_{t+q})$ & GCSR encoder (outputs latent distribution) \\
$\text{Dec}_G(s_t, s_{t+q}, z)$ & GCSR decoder (predicts midway state) \\
$z$ & Latent variable in GCSR (4×4 mixture of categoricals) \\
$Q$ & Set of temporal resolutions, $\{2K, 4K, 8K, 16K, 32K\}$ \\
$q \in Q$ & Temporal resolution (distance between states in triplet) \\
\midrule
\multicolumn{2}{@{}l}{\textit{Unconditional VAE (Explorer)}} \\
$\text{Enc}_U(z|s_t)$ & Unconditional encoder (for explorer) \\
$\text{Dec}_U(z)$ & Unconditional decoder (for explorer) \\
$\text{mem}_t$ & Memory buffer (past states: $\{s_{t-K}, s_{t-2K}, \ldots\}$) \\
\midrule
\multicolumn{2}{@{}l}{\textit{Tree Planning}} \\
$\tau$ & Tree trajectory (complete subtask tree) \\
$n_i$ & Node $i$ in tree trajectory (represents a subtask) \\
$n_{i,0}$ & Initial state of subtask at node $i$ \\
$n_{i,1}$ & Goal state of subtask at node $i$ \\
$T_i \in \{0,1\}$ & Terminal indicator (1 if node $i$ is reachable by worker) \\
$R_i \in \{0,1\}$ & Reward at node $i$ (1 if terminal, 0 otherwise) \\
$G_i^\lambda$ & Lambda return for node $i$ (Eq. 6) \\
$D$ & Maximum tree depth during training (default: 5) \\
$D_I$ & Maximum tree depth during inference (default: 8) \\
$\Delta_R$ & Reachability threshold for cosine similarity (default: 0.7) \\
\midrule
\multicolumn{2}{@{}l}{\textit{Discount Factors and Hyperparameters}} \\
$\gamma$ & Discount factor for tree returns (default: 0.95) \\
$\gamma_L$ & Discount factor for linear returns—worker \& explorer (default: 0.99) \\
$\lambda$ & Lambda parameter for TD($\lambda$) returns (default: 0.95) \\
$\beta$ & KL loss weight for VAE training (default: 1.0) \\
$\eta$ & Entropy coefficient for policy training (default: 0.5) \\
\midrule
\multicolumn{2}{@{}l}{\textit{Offline Variant (Section \ref{sec:offline_dhp})}} \\
$\phi(s_t, s_g)$ & Goal representation function (length-normalized vector) \\
$V^h(s_t, s_g)$ & High-level (manager) value function \\
$V^l(s_t, z)$ & Low-level (worker) value function \\
$\pi_M(z|s_t, s_g)$ & Manager policy (predicts goal representations) \\
$\mathcal{B}$ & Goal buffer (stores diverse landmark states) \\
$\mathcal{B}(s_t, z) \rightarrow s_g$ & Buffer retrieval (nearest state to representation $z$) \\
$\tilde{V}^l$ & Normalized value: $(V^l - \mu_{V^l}) / \sigma_{V^l}$ \\
$\theta$ & Reachability threshold for normalized value (default: $-2.0$) \\
$\tau$ & Expectile parameter for IQL-style training (default: 0.7) \\
$\gamma^h$ & High-level discount factor (default: 0.9) \\
\bottomrule
\end{tabular}
\end{table}

\subsection{Policy Gradients for Trees}
\label{sec:policy_grad_trees}

Given a goal-directed task as a pair of initial and final states $(s_t,s_g)$, a subgoal generation method predicts an intermediate subgoal $s_0$ that breaks the task into two simpler subtasks $(s_t,s_0)$ and $(s_0,s_g)$.
The recursive application of the subgoal operator further breaks down the task, leading to a tree of subtasks $\tau$, where each node $n_i$ represents a task.
Let the preorder traversal of the subtask tree $\tau$ of depth $D$ be written as $n_0,n_1,n_2,...,n_{2^{D+1}-2}$.
The root node $n_0$ is the given task, and the other nodes are the recursively generated subtasks.
Ideally, the leaf nodes should indicate the simplest reduction of the subtask that can be executed sequentially to complete the original task.
The tree can be viewed as a trajectory, where each node $n_i$ represents a state, and taking action $\pi_P(z_i|n_i)$ simultaneously places the agent in two states, given by the child nodes $(n_{2i+1}, n_{2i+2})$.
Thus, the policy function can be written as $\pi_P(z_i|n_i)$, the transition probabilities (can be deterministic also) as $p_T(n_{2i+1}, n_{2i+2}|z_i,n_i)$, and the probability of the tree trajectory under the policy $\tau_{\pi_P}$ can be represented as:

% \vspace{-.4cm}

\begin{align*}
    p_{\pi_P}(\tau) &= p(n_0) * [\pi_P(z_0|n_0) * p_T(n_1,n_2|z_0,n_0)] * \\
    &\quad [\pi_P(z_1|n_1) * p_T(n_3,n_4|z_1,n_1)] * \\
    &\quad [\pi_P(z_2|n_2) * p_T(n_5,n_6|z_2,n_2)] *... \\
    &\quad [\pi_P(z_{2^D-2}|n_{2^D-2}) * p_T(n_{2^{D+1}-3},n_{2^{D+1}-2}|z_{2^D-2}|n_{2^D-2})] \\
    p_{\pi_P}(\tau) &= p(n_0) \prod_{i=0}^{2^D-2} \pi_P(z_i|n_i) \prod_{i=0}^{2^D-2} p_T(n_{2i+1},n_{2i+2}|z_i,n_i)
\end{align*}

\begin{theorem}[Policy Gradients]
    \label{theorem:policy_grad_derivation}
    Given a tree trajectory $\tau$ specified as a list of nodes $n_i$, generated using a policy $\pi_P$. The policy gradients can be written as:

    \[ \nabla_{\pi_P} J({\pi_P}) = \mathbb{E}_\tau \sum_{i=0}^{2^D-2} A^i(\tau) \nabla_{\pi_P} \log \pi_{\pi_P}(z_i|n_i) \]
\end{theorem}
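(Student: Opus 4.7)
The plan is to mirror the classical REINFORCE derivation, adapted from linear trajectories to the tree setting. I would start from the policy objective written as an expectation over tree trajectories, $J(\pi_P) = \mathbb{E}_{\tau \sim \pi_P}[G(\tau)]$, where $G(\tau)$ is the tree return (e.g., the root Monte-Carlo return from Eq.~\ref{eq:tree_mc_return}). Expanding the expectation as an integral over $p_{\pi_P}(\tau)$ and applying the standard log-derivative trick, $\nabla p_{\pi_P}(\tau) = p_{\pi_P}(\tau) \nabla \log p_{\pi_P}(\tau)$, gives $\nabla_{\pi_P} J(\pi_P) = \mathbb{E}_\tau [G(\tau) \nabla_{\pi_P} \log p_{\pi_P}(\tau)]$.

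Next I would substitute the explicit factorisation of $p_{\pi_P}(\tau)$ given just above the theorem. Taking the logarithm turns the products into sums, and since the initial-state distribution $p(n_0)$ and the transition kernels $p_T(n_{2i+1}, n_{2i+2} \mid z_i, n_i)$ are independent of the policy parameters, their gradients vanish. This leaves $\nabla_{\pi_P} \log p_{\pi_P}(\tau) = \sum_{i=0}^{2^D-2} \nabla_{\pi_P} \log \pi_P(z_i \mid n_i)$. Plugging this back yields $\nabla_{\pi_P} J(\pi_P) = \mathbb{E}_\tau \sum_i G(\tau) \nabla_{\pi_P} \log \pi_P(z_i \mid n_i)$, which already matches the claimed form if we take $A^i(\tau) = G(\tau)$ for every node.

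The main obstacle—and the step that upgrades $G(\tau)$ to a node-specific quantity $A^i(\tau)$—is the tree analogue of the causality/reward-to-go argument. For a linear trajectory, rewards received strictly before time $t$ are independent of action $a_t$, so their contribution to the gradient integrates to zero. For trees I would argue that any node $n_j$ which is not a descendant of $n_i$ has a distribution that does not depend on $z_i$, hence $\mathbb{E}[R_j \nabla_{\pi_P} \log \pi_P(z_i \mid n_i)] = 0$ by the standard identity $\mathbb{E}_{z_i \sim \pi_P}[\nabla \log \pi_P(z_i \mid n_i)] = 0$. This lets me replace the global return $G(\tau)$ by the sum of discounted rewards over the subtree rooted at $n_i$, which is precisely the tree return $G_i$ (or its bootstrapped variant $G_i^\lambda$) defined in Sec.~\ref{sec:tree_return}. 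Formalising this requires a careful conditioning argument on the sub-sigma-algebra generated by the non-descendant nodes of $n_i$, and this is the delicate piece of the proof.

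Finally, I would invoke Theorem~\ref{theorem:baseline_proof} (the baseline lemma referenced in the paper) to subtract a state-dependent baseline $v_P(n_i)$ without changing the expectation, giving the advantage form $A^i(\tau) = G_i^\lambda - v_P(n_i)$ used in the actor loss~(\ref{eq:plan_actor_loss}). Masking out terminal and truncated nodes via the $(1 - T_i)$ factor is justified because at those nodes no action is taken by the policy, so their log-probability gradient is identically zero.
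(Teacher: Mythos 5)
Your core derivation follows the same route as the paper: factorise $p_{\pi_P}(\tau)$ over nodes, apply the log-derivative trick, and observe that the gradients of $p(n_0)$ and the transition kernels $p_T$ vanish, leaving $\nabla_{\pi_P} J = \mathbb{E}_\tau\, A(\tau) \sum_i \nabla_{\pi_P}\log\pi_P(z_i\mid n_i)$. Up to that point you match the paper's proof step for step, and the baseline subtraction you invoke at the end is exactly the paper's Theorem~\ref{theorem:baseline_proof}. The paper in fact stops there: it leaves $A^i(\tau)$ as an abstract per-node scoring function and silently rewrites $A(\tau)\sum_i(\cdot)$ as $\sum_i A^i(\tau)(\cdot)$ without justifying the replacement of the global score by a node-local one.

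The one place you go beyond the paper --- the causality/reward-to-go argument --- is also where your sketch has a genuine gap. The standard cancellation $\mathbb{E}[R_j\,\nabla\log\pi_P(z_i\mid n_i)]=0$ for non-descendants $j$ relies on the return being an \emph{additive} functional of per-node rewards, so that the non-descendant terms can be split off and integrated to zero one at a time. But the tree return of Sec.~\ref{sec:tree_return} is defined by a $\min$ recursion over children, so $G(\tau)=G_0$ is not a sum of discounted rewards over nodes, and it is not true that dropping the non-descendant contributions leaves "precisely the tree return $G_i$." The root return couples descendant and non-descendant subtrees non-linearly through the $\min$, so the sub-sigma-algebra conditioning you propose does not decompose it. To make your stronger, node-specific statement rigorous you would either need to restrict to the additive (sum-over-children) return the paper discusses and discards in Sec.~\ref{sec:policy_eval_tree}, or argue directly at the level of the value recursion (e.g., via a policy-gradient theorem for the $\min$-Bellman operator) rather than via reward-to-go cancellation. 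As written, your proof is correct exactly as far as the paper's own proof goes, and the additional step should be flagged as unproven rather than "delicate but routine."
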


% \textbf{Theorem \label{theorem:policy_grad_derivation}} (Policy Gradients). \textit{Given a tree trajectory $\tau$ specified as a list of nodes $n_i$, generated using a policy $\pi_P$. The policy gradients can be written as:}

% % \vspace{-.4cm}

% \[ \nabla_P J(P) = \mathbb{E}_\tau \sum_{i=0}^{2^D-2} A^i(\tau) \nabla_P \log \pi_P(z_i|n_i) \]

\begin{proof}
    The log-probabilities of the tree trajectory and their gradients can be written as:
    
    % \vspace{-.4cm}

    \begin{gather}
        \label{eq:log_prob_tree}
        \log p_{\pi_P}(\tau) = \log p(n_0) + \sum_{i=0}^{2^D-2} \log \pi_P(z_i|n_i) + \sum_{i=0}^{2^D-2} \log p_T(n_{2i+1}, n_{2i+2}|z_i,n_i) \\
        \label{eq:grad_log_prob_tree}
        \nabla_{\pi_P} \log p_{\pi_P}(\tau) = 0 + \nabla_{\pi_P} \sum_{i=0}^{2^D-2} \log \pi_P(z_i|n_i) + 0 = \sum_{i=0}^{2^D-2} \nabla_{\pi_P} \log \pi_P(z_i|n_i)
    \end{gather}

    The objective of policy gradient methods is measured as the expectation of advantage or some scoring function $A(\tau)$:

    \begin{equation}
        J(\pi_P) = \mathbb{E}_\tau A(\tau) = \sum_\tau A(\tau) \cdot p_{\pi_P}(\tau)
    \end{equation}

    Then the gradients of the objective function $\nabla_{\pi_P} J(\pi_P)$ wrt the policy $\pi_P$ can be derived as:

    \begin{align*}
        \nabla_{\pi_P} J(\pi_P) &= \nabla_{\pi_P} \sum_\tau A(\tau) \cdot p_{\pi_P}(\tau) \\
        &= \sum_\tau A(\tau) \cdot \nabla_{\pi_P} p_{\pi_P}(\tau) \\
        &= \sum_\tau A(\tau) \cdot p_{\pi_P}(\tau) \frac{\nabla_{\pi_P} p_{\pi_P}(\tau)}{p_{\pi_P}(\tau)} \\
        &= \sum_\tau A(\tau) \cdot p_{\pi_P}(\tau) \nabla_{\pi_P} \log p_{\pi_P}(\tau) \\
        &= \mathbb{E}_\tau A(\tau) \cdot \nabla_{\pi_P} \log p_{\pi_P}(\tau) \\
        &= \mathbb{E}_\tau \sum_{i=0}^{2^D-2} A^i(\tau) \nabla_{\pi_P} \log \pi_P(z_i|n_i) \quad (\text{Using Eq. \ref{eq:grad_log_prob_tree}})
    \end{align*}
\end{proof}

\begin{theorem}[Baselines]
    \label{theorem:baseline_proof}
    If $A(\tau)$ is any function independent of policy actions $z_i$, say $b(n_i)$, then its net contribution to the policy gradient is $0$.

    \[\mathbb{E}_\tau \sum_{i=0}^{2^D-2} b(n_i) \nabla_{\pi_P} \log \pi_P(z_i|n_i) = 0\]
\end{theorem}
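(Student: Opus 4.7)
The plan is to reduce the tree-structured claim to the standard score-function identity $\mathbb{E}_{z \sim \pi(\cdot|n)}[\nabla_{\pi_P} \log \pi_P(z|n)] = 0$, applied term-by-term in the sum over nodes. Linearity of expectation lets me handle each index $i$ separately, so I would first rewrite
\[
\mathbb{E}_\tau \sum_{i=0}^{2^D-2} b(n_i)\,\nabla_{\pi_P}\log \pi_P(z_i|n_i)
= \sum_{i=0}^{2^D-2} \mathbb{E}_\tau\bigl[b(n_i)\,\nabla_{\pi_P}\log\pi_P(z_i|n_i)\bigr]
\]
and show each summand vanishes.

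Next, I would exploit the factorization of $p_{\pi_P}(\tau)$ already derived in the excerpt. For a fixed $i$, group the variables of $\tau$ into three blocks: (a) the root and all ancestors leading to $n_i$, which determine $n_i$; (b) the action $z_i$ at node $i$ and the resulting transition producing $(n_{2i+1},n_{2i+2})$; (c) everything else (actions and children at other nodes). Because $b(n_i)$ depends only on block (a) and $\nabla_{\pi_P}\log\pi_P(z_i|n_i)$ depends only on $(n_i,z_i)$, marginalizing out block (c) leaves a factor of $1$. Conditioning on $n_i$ then gives
\[
\mathbb{E}_\tau\bigl[b(n_i)\nabla_{\pi_P}\log\pi_P(z_i|n_i)\bigr]
= \mathbb{E}_{n_i}\!\Bigl[b(n_i)\,\mathbb{E}_{z_i\sim\pi_P(\cdot|n_i)}\!\bigl[\nabla_{\pi_P}\log\pi_P(z_i|n_i)\bigr]\Bigr].
\]

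Then I would close with the score-function identity: for any fixed $n_i$,
\[
\mathbb{E}_{z_i\sim\pi_P(\cdot|n_i)}\!\bigl[\nabla_{\pi_P}\log\pi_P(z_i|n_i)\bigr]
= \sum_{z_i}\pi_P(z_i|n_i)\frac{\nabla_{\pi_P}\pi_P(z_i|n_i)}{\pi_P(z_i|n_i)}
= \nabla_{\pi_P}\sum_{z_i}\pi_P(z_i|n_i) = \nabla_{\pi_P}(1) = 0,
\]
assuming the usual regularity (differentiability of $\pi_P$ and interchange of sum/gradient, which hold for the discrete latent used here). Hence every summand is zero and so is the total, which immediately justifies subtracting any state-dependent baseline $b(n_i)$---in particular $v_P(n_i)$---in Eq.~\ref{eq:plan_actor_loss} without biasing the gradient.

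The main obstacle I anticipate is bookkeeping around the tree factorization: unlike a linear trajectory, the variables ``upstream'' and ``downstream'' of node $n_i$ are not a simple prefix/suffix split but involve a sibling subtree and a subtree rooted at $n_i$ itself. The clean way around this is to observe that $n_i$ is fully determined by the sequence of ancestor actions and transitions (via the factorization from Section~\ref{sec:policy_grad_trees}), and that all non-ancestor, non-$i$ factors in $p_{\pi_P}(\tau)$ are proper conditional probabilities that integrate to $1$ when marginalized; once this is stated carefully, the argument collapses to the one-step score-function identity above.
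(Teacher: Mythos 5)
Your proposal is correct and follows essentially the same route as the paper's proof: swap sum and expectation, condition on $n_i$ so that $b(n_i)$ factors out, and finish with the score-function identity $\sum_{z_i}\nabla_{\pi_P}\pi_P(z_i|n_i)=\nabla_{\pi_P}1=0$. Your explicit marginalization of the non-ancestor subtrees is a more careful justification of the step the paper states directly as $\mathbb{E}_\tau = \mathbb{E}_{n_i\sim\rho_{\pi_P}}\mathbb{E}_{z_i\sim\pi_P}$, but it is the same argument.
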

% \textbf{Theorem \label{theorem:baseline_proof}} (Baselines). \textit{If $A(\tau)$ is independent of $\tau$, say $b(n_i)$, then its net contribution to the policy gradient is $0$.}

% \[\mathbb{E}_\tau \sum_{i=0}^{2^D-2} b(n_i) \nabla_{\pi_P} \log \pi_P(z_i|n_i) = 0\]

\begin{proof}
    If $A(\tau)$ is any fixed function that does not depend on the actions $\pi_P(z_i|n_i)$ and only on the state, say $b(n_i)$.
    Then $b(n_i)$ will be independent of the trajectory $\tau$, and it can be sampled from the steady state distribution under policy $\rho_{\pi_P}$ for any state $n_i$ without knowing $\tau$.
    In that case,

    \vspace{-.5cm}

    \begin{align*}
        \mathbb{E}_\tau \sum_{i=0}^{2^D-2} b(n_i) \nabla_{\pi_P} \log \pi_P(z_i|n_i) &= \sum_{i=0}^{2^D-2} \mathbb{E}_\tau [b(n_i) \nabla_{\pi_P} \log \pi_P(z_i|n_i)] \\
        &= \sum_{i=0}^{2^D-2} \mathbb{E}_{n_i \sim \rho_{\pi_P}} \mathbb{E}_{a \sim \pi_P} [b(n_i) \nabla_{\pi_P} \log \pi_P(z_i|n_i)] \\
        &= \sum_{i=0}^{2^D-2} \mathbb{E}_{n_i \sim \rho_{\pi_P}} [b(n_i) \mathbb{E}_{a \sim \pi_P} \nabla_{\pi_P} \log \pi_P(z_i|n_i)] \\
        &= \sum_{i=0}^{2^D-2} \mathbb{E}_{n_i \sim \rho_{\pi_P}} [b(n_i)\sum_a \pi_P(z_i|n_i) \frac{\nabla_{\pi_P} \pi_P(z_i|n_i)}{\pi_P(z_i|n_i)}] \\
        &= \sum_{i=0}^{2^D-2} \mathbb{E}_{n_i \sim \rho_{\pi_P}} b(n_i) [\nabla_{\pi_P} \sum_a \pi_P(z_i|n_i)] \\
        &= \sum_{i=0}^{2^D-2} \mathbb{E}_{n_i \sim \rho_{\pi_P}} b(n_i) [\nabla_{\pi_P} 1] \quad (\text{Sum of probabilities is 1}) \\
        &= 0 \\
    \end{align*}
    
    \vspace{-.9cm}
    % This means that a baseline $b(n_i)$ can help reduce the variance in scores.
\end{proof}

\subsection{Policy Evaluation for Trees}
\label{sec:policy_eval_tree}

We present the return and advantage estimation for trees as an extension of current return estimation methods for linear trajectories.
As the return estimation for a state $s_t$ in linear trajectories depends upon the next state $s_{t+1}$, our tree return estimation method uses child nodes $(n_{2i+1},n_{2i+2})$ to compute the return for a node $n_i$.
We extend the previous methods, like lambda returns and Gen
realized Advantage estimation (GAE) for trees.

The objective of our method is to reach nodes that are directly reachable.
Such nodes are marked as terminal, and the agent receives a reward.
For generalization, let's say that when the agent takes an action $z_i$ at node $n_i$, it receives a pair of rewards $(R_{2i+1}(n_i,z_i), R_{2i+2}(n_i,z_i))$ corresponding to the child nodes.
Formally, the rewards $R(\tau)$ are an array of length equal to the length of the tree trajectory with $R_0 = 0$.
Then, the agent's task is to maximize the sum of rewards received in the tree trajectory $\mathbb{E}_\tau \sum_{i=0}^{\infty} R_i$.
To consider future rewards, the returns for a trajectory can be computed as the sum of rewards discounted by their distance from the root node (depth), $\mathbb{E}_\tau \sum_{i=0}^\infty \gamma^{\lfloor \log_2(i+1) \rfloor - 1} R_i$.
Thus, the returns for each node can be written as the sum of rewards obtained and the discount-weighted returns thereafter:

\begin{equation}
    G_i = (R_{2i+1} + \gamma G_{2i+1}) + (R_{2i+2} + \gamma G_{2i+2})
\end{equation}

Although this works theoretically, a flaw causes the agent to collapse to a degenerate local optimum.
This can happen if the agent can generate a subgoal very similar to the initial or goal state ($\|s_t,s_\text{sub}\|<\epsilon$ or $\|s_g,s_\text{sub}\|<\epsilon$).
A common theme in reward systems for subgoal trees is to have a high reward when the agent predicts a reachable or temporally close enough subgoal.
Thus, if the agent predicts a degenerate subgoal, it receives a reward for one child node, and the initial problem carries forward to the other node.

Therefore, we propose an alternative objective that optimizes for the above objective under the condition that both child subtasks $(n_{2i+1}, n_{2i+2})$ get solved.
Instead of estimating the return as the sum of the returns from the child nodes, we can estimate it as the minimum of the child node returns.

\begin{equation}
    G_i = \min(R_{2i+1} + \gamma G_{2i+1}, R_{2i+2} + \gamma G_{2i+2})
\end{equation}

This formulation causes the agent to optimize the weaker child node first and receive discounted rewards if all subtasks are solved (or have high returns).
It can also be noticed that the tree return for a node is essentially the discounted return along the linear trajectory that traces the path with the least return starting at that node.
Next, we analyze different return methods in the tree setting and try to prove their convergence.

\subsubsection{Lambda Returns}
\label{sec:appendix_lambda_returns}

TD($\lambda$) returns for linear trajectories are computed as:

\begin{align}
    G_t^\lambda = R_{t+1} + \gamma((1-\lambda) V(s_{t+1}) + \lambda G_{t+1}^\lambda)
\end{align}

We propose, the lambda returns for tree trajectories can be computed as:

\begin{align}
    G_i^\lambda = \min(R_{2i+1} + \gamma((1-\lambda) V(n_{2i+1}) + \lambda G_{2i+1}^\lambda), R_{2i+2} + \gamma((1-\lambda) V(s_{2i+2}) + \lambda G_{2i+2}^\lambda))
\end{align}

This essentially translates to the minimum of lambda returns using either of the child nodes as the next state.
For theoretical generalization, note that the $\min$ operator in the return estimate is over the next states the agent is placed in.
Thus, in the case of \textit{linear} trajectories where there is only one next state, the $\min$ operator vanishes and the equation conveniently reduces to the standard return formulation for \textit{linear} trajectories.

Next, we check if there exists a fixed point that the value function approaches.
The return operators can be written as:

\begin{align*}
    \mathcal{T}^\lambda V(n_i) = \mathbb{E}_\pi [\min(&R_{2i+1} + \gamma((1-\lambda) V(n_{2i+1}) + \lambda G_{2i+1}), \\
    &R_{2i+2} + \gamma((1-\lambda) V(n_{2i+2}) + \lambda G_{2i+2}))] \\
    \mathcal{T}^0 V(n_i) = \mathbb{E}_\pi [\min(&R_{2i+1} + \gamma V(n_{2i+1}), R_{2i+2} + \gamma V(n_{2i+2}))] \\
    \mathcal{T}^1 V(n_i) = \mathbb{E}_\pi [\min(&R_{2i+1} + \gamma G_{2i+2}, R_{2i+2} + \gamma G_{2i+2})]
\end{align*}

\begin{lemma}[Non-expansive Property of the Minimum Operator]
    \label{theorem:non_expand_min}
    For any real numbers \(a, b, c, d\), the following inequality holds:
    \begin{equation*}
        \left| \min(a, b) - \min(c, d) \right| \leq \max \left( |a - c|, |b - d| \right).
    \end{equation*}
\end{lemma}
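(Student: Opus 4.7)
The plan is to prove the inequality by reducing it to a case analysis on which argument attains the minimum on each side. The statement is symmetric in the two pairs $(a,b)$ and $(c,d)$ (swapping them negates the quantity inside the absolute value), so without loss of generality I would assume $\min(a,b) \geq \min(c,d)$, allowing the absolute value on the left to be dropped: the goal becomes $\min(a,b) - \min(c,d) \leq \max(|a-c|, |b-d|)$.

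Next, I would split on which of $c, d$ achieves $\min(c, d)$. By a further relabeling (the statement is also symmetric under simultaneously swapping $a \leftrightarrow b$ and $c \leftrightarrow d$), I may assume $\min(c,d) = c$. Then the key two-step chain is:
\begin{equation*}
\min(a,b) - \min(c,d) \;=\; \min(a,b) - c \;\leq\; a - c \;\leq\; |a-c| \;\leq\; \max(|a-c|,|b-d|),
\end{equation*}
where the first inequality uses $\min(a,b) \leq a$, and the rest are immediate. This settles the case. The symmetric case ($\min(c,d) = d$, after the analogous relabeling, yields the bound via $b$ and $d$) is identical in structure.

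There is no substantive obstacle here; the only subtlety is keeping the ``without loss of generality'' reductions honest, i.e., verifying that the two symmetries (swap the pairs; swap the two entries within each pair simultaneously) really do leave both sides of the desired inequality unchanged. I would state these symmetries explicitly at the start of the proof so the WLOG reductions are rigorous rather than informal. An alternative, slightly slicker route would be to use the identity $\min(x,y) = \tfrac{1}{2}(x+y - |x-y|)$ combined with the triangle inequality and the fact that $|\,|x-y| - |u-v|\,| \leq |x-u| + |y-v|$, but this produces a weaker bound in terms of the sum rather than the max, so the case-analysis route above is the cleaner one for this lemma.
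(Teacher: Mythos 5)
Your proof is correct, and it takes a cleaner route than the paper's. The paper argues directly by four cases on which argument attains each minimum; you instead invoke two symmetries (swapping the pairs $(a,b)\leftrightarrow(c,d)$, which negates the quantity inside the absolute value and fixes the sign, and swapping entries within both pairs simultaneously) to reduce to the single case $\min(a,b)\ge\min(c,d)=c$, after which the chain $\min(a,b)-c\le a-c=|a-c|\le\max(|a-c|,|b-d|)$ closes the argument (the equality $a-c=|a-c|$ holding because the first WLOG forces $a-c\ge\min(a,b)-c\ge 0$). Both WLOG reductions are legitimate since each symmetry leaves both sides of the inequality invariant, and you are right to insist on stating them explicitly. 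Notably, your first reduction sidesteps a real defect in the paper's treatment of the mixed cases: in its Case 2 ($a\le b$, $c>d$) the paper asserts $|a-d|\le|b-d|$ from $b\ge a$, which is false in general (e.g.\ $a=0$, $b=2$, $d=5$, $c=6$ gives $|a-d|=5>3=|b-d|$, even though the lemma's conclusion still holds via $|a-c|=6$); Case 3 has the mirror-image flaw. A fully correct direct argument for those cases needs a further split on the sign of $a-d$ (resp.\ $b-c$), whereas your sign normalization makes the cross terms harmless from the outset. So your proof is not only valid but repairs the gap in the paper's.
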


\begin{proof}
    To prove the statement, we consider the minimum operator for all possible cases of \(a\), \(b\), \(c\), and \(d\). Let \(\min(a, b)\) and \(\min(c, d)\) be the minimum values of their respective pairs.

    \textbf{Case 1:} \(a \leq b\) and \(c \leq d\)

    In this case, \(\min(a, b) = a\) and \(\min(c, d) = c\).
    The difference becomes:
      \[ \left| \min(a, b) - \min(c, d) \right| = |a - c|. \]
    Since $\max \left( |a - c|, |b - d| \right) \geq |a - c|$, the inequality holds.

    \textbf{Case 2:} \(a \leq b\) and \(c > d\)

    Here, \(\min(a, b) = a\) and \(\min(c, d) = d\).
    The difference becomes:
      \[ \left| \min(a, b) - \min(c, d) \right| = |a - d|. \]
    Since $b \geq a$, $|a - d| \leq |b - d| \leq \max(|a - c|, |b - d|)$, and the inequality holds.

    \textbf{Case 3:} \(a > b\) and \(c \leq d\)
    
    Here, \(\min(a, b) = b\) and \(\min(c, d) = c\).
    The difference becomes:
      \[ \left| \min(a, b) - \min(c, d) \right| = |b - c|. \]
    Since $a \geq b$, $|b - c| \leq |a - c| \leq \max(|a - c|, |b - d|)$, and the inequality holds.
    
    \textbf{Case 4:} \(a > b\) and \(c > d\)

    Symmetrical to Case 1.
    
    \textbf{Conclusion:}
    
    In all cases, the inequality
    \[ \left| \min(a, b) - \min(c, d) \right| \leq \max \left( |a - c|, |b - d| \right) \]
    is satisfied. Therefore, the minimum operator is non-expansive.
\end{proof}

\begin{theorem}[Contraction property of the return operators]
    \label{theorem:return_contraction}
    % The Bellman operator $\mathcal{T}^0$ is a $\gamma$-contraction mapping wrt. to $\|\cdot\|_\infty$
    The Bellman operators $\mathcal{T}$ corresponding to the returns are a $\gamma$-contraction mapping wrt. to $\|\cdot\|_\infty$
\end{theorem}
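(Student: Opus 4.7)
The plan is to prove $\gamma$-contraction separately for the three operators $\mathcal{T}^0$, $\mathcal{T}^1$, and $\mathcal{T}^\lambda$, using Lemma \ref{theorem:non_expand_min} at each internal node to reduce the tree backup to a max-over-children bound, and then closing that bound by the sup norm. Let $\Delta = \|V - V'\|_\infty$. Terminal nodes satisfy $\mathcal{T} V(n_i) = \mathcal{T} V'(n_i) = 0$ through the $(1-T_i)$ factor, so it suffices to bound the difference at non-terminal $n_i$.

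For the one-step operator, I would apply Lemma \ref{theorem:non_expand_min} with $a_c = R_c + \gamma V(n_c)$ and $a_c' = R_c + \gamma V'(n_c)$: the rewards cancel to give $|a_c - a_c'| = \gamma |V(n_c) - V'(n_c)| \le \gamma \Delta$, so moving the inequality through $\mathbb{E}_\pi$ and taking the sup over $n_i$ yields $\|\mathcal{T}^0 V - \mathcal{T}^0 V'\|_\infty \le \gamma \Delta$. The Monte-Carlo operator is easier still: the recursion in Eq. \ref{eq:tree_mc_return} only contains rewards and child returns, never $V$, so (assuming terminal masking makes every subtree terminate almost surely) $\mathcal{T}^1 V$ is independent of $V$ and the operator is a $0$-contraction.

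The main obstacle is $\mathcal{T}^\lambda$, because $G_i^\lambda$ depends on $V$ at every level of the subtree through the bootstrap $(1-\lambda) V + \lambda G^\lambda$, so a single application of Lemma \ref{theorem:non_expand_min} does not close the recursion. My approach is a sample-wise induction on depth from leaves to root. Viewing $G_i^\lambda$ as a functional $G_i^\lambda(V)$ of $V$ along a fixed sampled subtree, applying Lemma \ref{theorem:non_expand_min} at an internal node and the triangle inequality on the convex combination gives
\begin{equation*}
\bigl|G_i^\lambda(V) - G_i^\lambda(V')\bigr| \le \gamma(1-\lambda)\Delta + \gamma\lambda \max_{c} \bigl|G_c^\lambda(V) - G_c^\lambda(V')\bigr|.
\end{equation*}
The base case $|G_i^\lambda(V) - G_i^\lambda(V')| \le \Delta$ holds at truncated leaves (where $G^\lambda$ reduces to $V(n_i)$) and at terminal leaves (where it is $0$). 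Propagating upward, the first internal level already gives $\gamma(1-\lambda)\Delta + \gamma\lambda\Delta = \gamma\Delta$ (using $(1-\lambda)+\lambda = 1$), and further unrolling of this monotone geometric recurrence only tightens the bound, converging toward the fixed point $\tfrac{\gamma(1-\lambda)}{1-\gamma\lambda}\Delta \le \gamma\Delta$. Taking expectations under $\pi$ and the sup over $n_i$ finally yields $\|\mathcal{T}^\lambda V - \mathcal{T}^\lambda V'\|_\infty \le \gamma \Delta$, completing the contraction argument.
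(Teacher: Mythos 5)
Your proposal is correct, and for $\mathcal{T}^0$ and $\mathcal{T}^1$ it coincides with the paper's argument: the paper likewise pushes the difference inside the expectation, invokes Lemma \ref{theorem:non_expand_min} so that the rewards cancel, and absorbs the resulting $\max$ into $\|\cdot\|_\infty$, and it likewise disposes of the Monte-Carlo operator by noting it is the $\lambda=1$ special case (a $0$-contraction). Where you genuinely diverge is $\mathcal{T}^\lambda$. The paper writes the operator difference self-referentially, bounding $\|\mathcal{T}^\lambda V_1 - \mathcal{T}^\lambda V_2\|_\infty$ by $\gamma(1-\lambda)\|V_1-V_2\|_\infty + \gamma\lambda\|\mathcal{T}^\lambda V_1 - \mathcal{T}^\lambda V_2\|_\infty$ and solving algebraically for the modulus $\frac{\gamma(1-\lambda)}{1-\gamma\lambda} < 1$; this is compact and yields the sharp constant in one line, but it implicitly identifies the sampled child return $G^\lambda_{c}$ with $\mathcal{T}^\lambda V(n_c)$ and never states a base case for the recursion. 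Your leaf-to-root induction on the fixed sampled tree avoids that conflation, makes the truncated-leaf base case $G^\lambda = V(n_i)$ and the terminal-leaf case explicit, and shows the depth-indexed bound $e_{d+1} = \gamma(1-\lambda)\Delta + \gamma\lambda e_d$ already equals $\gamma\Delta$ after one level and only decreases toward the same fixed point $\frac{\gamma(1-\lambda)}{1-\gamma\lambda}\Delta$ thereafter. The trade-off is that the paper's route is shorter and exhibits the tighter modulus directly, while yours is more careful about finite-depth trees and what the operator actually does at truncation, at the cost of a slightly longer argument; both establish the claimed $\gamma$-contraction.
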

% \textbf{Theorem \label{theorem:return_contraction}} (Contraction property of the return operators). \textit{The Bellman operators $\mathcal{T}$ corresponding to the returns are a $\gamma$-contraction mapping wrt. to $\|\cdot\|_\infty$}.

\[\|\mathcal{T} V_1 - \mathcal{T} V_2\|_\infty \leq \gamma \| V_1 - V_2 \|_\infty\]

\begin{proof}
    We start with the simpler case, $\mathcal{T}^0$.
    Let $V_1, V_2$ be two arbitrary value functions.
    Then the max norm of any two points in the value function post update is:
    
    \begin{alignat*}{2}
        \|\mathcal{T}^0 V_1 - \mathcal{T}^0 V_2\|_\infty = & \|\mathbb{E}_\pi [\min(R_{2i+1} + \gamma V_1(n_{2i+1}), R_{2i+2} + \gamma V_1(n_{2i+2}))] - \\
        & \mathbb{E}_\pi [\min(R_{2i+1} + \gamma V_2(n_{2i+1}), R_{2i+2} + \gamma V_2(n_{2i+2}))]\|_\infty \\
        = & \|\mathbb{E}_\pi [\min(R_{2i+1} + \gamma V_1(n_{2i+1}), R_{2i+2} + \gamma V_1(n_{2i+2})) - \\
        &\min(R_{2i+1} + \gamma V_2(n_{2i+1}), R_{2i+2} + \gamma V_2(n_{2i+2}))]\|_\infty \\
        \leq & \| \min(R_{2i+1} + \gamma V_1(n_{2i+1}), R_{2i+2} + \gamma V_1(n_{2i+2})) - \\
        &\min(R_{2i+1} + \gamma V_2(n_{2i+1}), R_{2i+2} + \gamma V_2(n_{2i+2})) \|_\infty \\
        \leq & \max(\| \gamma V_1(n_{2i+1}) - \gamma V_2(n_{2i+1}) \|_\infty, \| \gamma V_1(n_{2i+2}) - \gamma V_2(n_{2i+2}) \|_\infty) \\
        \leq & \gamma \max(\| V_1(n_{2i+1}) - V_2(n_{2i+1}) \|_\infty, \| V_1(n_{2i+2}) - V_2(n_{2i+2}) \|_\infty) \\
        \leq & \gamma \max(\| V_1(n_{j}) - V_2(n_{j}) \|_\infty, \| V_1(n_{k}) - V_2(n_{k}) \|_\infty) \\
        \leq & \gamma \| V_1 - V_2 \|_\infty \quad \text{(merging $\max$ with $\|\cdot\|_\infty$)} 
    \end{alignat*}

    A similar argument can be shown for $\|\mathcal{T}^1 V_1 - \mathcal{T}^1 V_2\|_\infty$ and $\|\mathcal{T}^\lambda V_1 - \mathcal{T}^\lambda V_2\|_\infty$. Using the non-expansive property (Th. \ref{theorem:non_expand_min}) and absorbing the $\max$ operator with $\|\cdot\|_\infty$ leads to the standard form for linear trajectories.

    \begin{alignat*}{2}
        \|\mathcal{T}^\lambda V_1 - \mathcal{T}^\lambda V_2 \|_\infty \leq & \| \gamma((1-\lambda) (V_1 - V_2) + \lambda (\mathcal{T}^\lambda V_1 - \mathcal{T}^\lambda V_2)) \|_\infty \\
        \leq & \gamma(1-\lambda)\|V_1-V_2\|_\infty + \gamma \lambda \| \mathcal{T}^\lambda V_1 - \mathcal{T}^\lambda V_2 \|_\infty \quad \text{(Using triangle inequality)}\\
        (1 - \gamma\lambda)\|\mathcal{T}^\lambda V_1 - \mathcal{T}^\lambda V_2 \|_\infty \leq & \gamma (1 - \lambda)\|V_1-V_2\|_\infty \\
        \|\mathcal{T}^\lambda V_1 - \mathcal{T}^\lambda V_2 \|_\infty \leq & \frac{\gamma(1-\lambda)}{1-\gamma\lambda}\|V_1-V_2\|_\infty
    \end{alignat*}

    For contraction, $\frac{\gamma(1-\lambda)}{1-\gamma\lambda} < 1$ must be true.

    \vspace{-.5cm}

    \begin{alignat*}{2}
        \frac{\gamma(1-\lambda)}{1-\gamma\lambda} <& 1 \\
        \gamma(1-\lambda) <& {1-\gamma\lambda} \\
        \gamma - \gamma\lambda <& 1 - \gamma\lambda \\
        \gamma <& 1
    \end{alignat*}

    Which is always true.

    Since $\mathcal{T}^1$ is a special case of $\mathcal{T}^\lambda$, it is also a contraction.
\end{proof}

% \begin{alignat*}{2}
%     \|\mathcal{T}^0 V_1 - \mathcal{T}^0 V_2\|_\infty \leq & \gamma \| V_1 - V_2 \|_\infty \\ 
% \end{alignat*}

\subsubsection{Bootstrapping with D-depth Returns}
\label{sec:appendix_bootstrapping}

When the subtask tree branches end as terminal (or are masked as reachable), the agent receives a reward of $1$, which provides a learning signal using the discounted returns.
However, when the branches do not end as terminal nodes, it does not provide a learning signal for the nodes above it, as the return is formulated as $\min$ of the returns from child nodes.
In this case, we can replace the returns of the non-terminal leaf nodes with their value estimates.
Therefore, in the case that the value estimate from the end node is high, indicating that the agent knows how to solve the task from that point onwards, it still provides a learning signal.
The $n$-step return for a linear trajectory is written as:

\[G^{(n)}_t = R_{t+1} + \gamma G^{(n-1)}_{t+1}\]

with the base case as:

\[G_t^{(1)} = R_{t+1} + \gamma V(s_{t+1})\]

We write the $n$-step returns for the tree trajectory as:

\[G_i^{(d)} = \min(R_{2i+1} + \gamma G^{(d-1)}_{2i+1}, R_{2i+2} + \gamma G^{(d-1)}_{2i+2})\]

with the base case as:

\[G_i^{(1)} = \min(R_{t+1} + \gamma V(n_{2i+1}), R_{t+2} + \gamma V(n_{2i+2}))\]

Value estimates help bootstrap at the maximum depth of the unrolled subtask tree $D$ and allow the policy to learn from incomplete plans.

\subsubsection{Properties of Tree Return Estimates}
\label{sec:appendix_prop_tree_ret}

In section \ref{sec:policy_eval_tree} it can be seen how the tree return formulation for a node essentially reduces to the linear trajectory returns along the path of minimum return in the subtree under it.
When the value function has reached the stationary point.
For a subtask tree, if all branches end as terminal nodes, the return will be $\gamma^{D'} 1$, where $D'$ is the depth of the deepest non-\textit{terminal} node.
Otherwise, it would be $\gamma^{D} V'$ where $V'$ is the non-\textit{terminal} \textit{truncated} node with the minimum return.
Thus, it can be seen how higher depth penalizes the returns received at the root node with the discount factor $\gamma$.
This property holds for linear trajectories, where the policy converges to the shortest paths to rewards, thereby counteracting discounting \cite{sutton2018reinforcement,puterman2014markov}.
Thus, our goal-conditioned policy similarly converges to plans trees with minimum maximum depth: $\min_{\pi_P}(\max d_i)$, where $d_i$ is the depth of node $n_i$.

This property also implies that the returns for a balanced tree will be higher than those for an unbalanced tree.
The same sequence of leaf nodes can be created using different subtask trees.
% However, since the method penalizes the maximum depth, at each node $n_i$, the policy will act such that the difference between the maximum depth of the subtree at both child nodes is minimum: $\min_{\pi_P} (\max (d_{2i+1}, d_{2i+2}))$.
When the policy does not divide the task into roughly equally tough sub-tasks it results in an unbalanced tree.
Since the tree is constrained to yield the same sequence of leaf nodes, its maximum depth $D_U$ will be higher than or equal to a balanced tree $D_U \geq D_B$.
Thus, at optimality, the policy should subdivide the task in roughly equal chunks.
However, it is worth noting that two subtask trees with different numbers of leaf nodes can have the same maximum depth.

\section{Architecture \& Training Details}
\label{sec:training_details}

\subsection{Worker}
\label{sec:worker_training}

The worker is trained using $K$-step imagined rollouts $(\kappa \sim \pi_W)$.
% Given rewards $R_i$, computed using the CSR modules for the explorer, or as the cosine-similarity measures for the worker.
Given the imagined trajectory $\kappa$, the rewards for the worker $R^W_t$ are computed as the \texttt{cosine\_max} similarity measure between the trajectory states $s_t$ and the prescribed worker goal $s_\text{wg}$.
First, discounted returns $G^\lambda_t$ are computed as $n$-step lambda returns (Eq. \ref{eq:worker_lambda_return}).
Then the Actor policy is trained using the REINFORCE objective (Eq. \ref{eq:worker_actor_loss}) and the Critic is trained to predict the discounted returns (Eq. \ref{eq:worker_critic_loss}).

\begin{gather}
    R^W_{t} = \texttt{cosine\_max}(s_t,s_\text{wg})\\
    \label{eq:worker_lambda_return}
    G^\lambda_t = R^W_{t+1} + \gamma_L ((1 - \lambda)v(s_{t+1}) + \lambda G^\lambda_{t+1}) \\
    \label{eq:worker_actor_loss}
    \mathcal{L}(\pi_W) = -\mathbb{E}_{\kappa \sim \pi_W} \sum_{t=0}^{K-1} \left[ (G^\lambda_t - v_W(s_t)) \log \pi_W(z|s_t) + \eta \text{H}[\pi_W(z|s_t)] \right] \\
    \label{eq:worker_critic_loss}
    \mathcal{L}(v_W) = \mathbb{E}_{\kappa \sim \pi_W} \sum_{t=0}^{K-1} (v_W(s_t) - G^\lambda_t)^2
\end{gather}

\subsection{Explorer}

The ELBO objective for the unconditional state recall (UCSR) module is given as:

\begin{equation}
    \label{eq:ucsr_loss}
    \begin{split}
        \mathcal{L}(\text{Enc}_U,\text{Dec}_U) = \left\Vert s_{t} - \text{Dec}_U(z_t)\right\Vert^2 + \beta \text{KL}[\text{Enc}_U(z_t|s_t) \parallel p_U(z)] \quad \text{where} \quad z_t \sim \text{Enc}_U(z_t|s_t)
    \end{split}
\end{equation}

\subsection{Goal State Representations}
\label{sec:static_repr}

Since the RSSM integrates a state representation using a sequence of observations, it does not work well for single observations.
To generate goal state representations using single observations, we train an MLP separately that tries to approximate the RSSM outputs ($s_t$) from the single observations ($o_t$).
We refer to these representations as static state representations.
Moreover, since GCSR modules require state representations at large temporal distances, it can be practically infeasible to generate them using RSSM.
Thus, we use static state representations to generate training data for the GCSR module as well.
% An auxiliary MLP is trained to help generate goal states using goal images.
The MLP is a dense network with a $\tanh$ activation at the final output layer.
It is trained to predict the RSSM output (computed using a sequence of images) using single-image observations.
To avoid saturating gradients, we use an MSE loss on the preactivation layer using labels transformed as $l_\text{new} = \text{atanh}(\text{clip}(l,\delta - 1,1 - \delta))$.
The clipping helps avoid computational overflows; we use $\delta=10^{-4}$.
% The static representations are used for conducting goal representations and fast estimation of states for coarse trajectories.

\subsection{Implementation Details}

We implement two functions: \texttt{policy} and \texttt{train}, using the hyperparameters shown in Table \ref{tab:hyperparam}.
The agent is implemented in Python/Tensorflow with XLA JIT compilation.
Using XLA optimizations, the total training wall time is $2-3$ days on a consumer GPU (NVIDIA 4090 RTX 24gb).
During inference, the compiled policy function runs in $4.6$ms on average, enabling real-time replanning at $217.39$ times per second (assuming minimal environmental overhead).

\begin{table}
    \centering
    \begin{tabular}{|c|c|c|}
        \hline\hline
        Name & Symbol & Value \\
        \hline\hline
        Train batch size & $B$ & $16$ \\
        Replay trajectory length & $L$ & $64$ \\
        Replay coarse trajectory length & $L_c$ & $48$ \\
        Worker abstraction length & $K$ & $8$ \\
        Explorer Imagination Horizon & $H_E$ & $64$ \\
        Return Lambda & $\lambda$ & $0.95$ \\
        Return Discount (tree) & $\gamma$ & $0.95$ \\
        Return Discount (worker \& explorer) & $\gamma_L$ & $0.99$ \\
        State similarity threshold & $\Delta_R$ & $0.7$ \\
        Plan temporal resolutions & $Q$ & $\{16,32,64,128,256\}$ \\
        Maximum Tree depth during training & $D$ & $5$ \\
        Maximum Tree depth during inference & $D_\text{Inf}$ & $8$ \\
        Target entropy & $\eta$ & 0.5 \\
        KL loss weight & $\beta$ & 1.0 \\
        GCSR latent size & - & $4 \times 4$ \\
        RSSM deter size & - & $1024$ \\
        RSSM stoch size & - & $32 \times 32$ \\
        Optimizer & - & Adam \\
        Learning rate (all) & - & $10^{-4}$ \\
        Adam Epsilon & - & $10^{-6}$ \\
        Optimizer gradient clipping & - & 1.0 \\
        Weight decay (all) & - & $10^{-2}$ \\
        Activations & - & LayerNorm + ELU \\
        MLP sizes & - & $4 \times512$ \\
        Train every & - & 16 \\
        Prallel Envs & - & 1 \\
        \hline
    \end{tabular}
    \caption{Agent Hyperparameters}
    \label{tab:hyperparam}
\end{table}

\subsubsection{Policy Function}

At each step, the policy function is triggered with the environmental observation $o_t$.
The RSSM module processes the observation $o_t$ and the previous state $s_{t-1}$ to yield a state representation $s_t$.
During exploration, the manager $\pi_E$ uses the $s_t$ to generate a worker goal using the unconditional VAE.
During task policy, the planning manager $\pi_P$ generates subgoals in the context of a long-term goal $s_g$, and the first directly reachable subgoal is used as the worker's goal.
Finally, the worker generates a low-level environmental action using the current state and the worker goal $(s_t,s_\text{wg})$. The algorithm is illustrated in (Alg. \ref{alg:dhp_policy})

\begin{algorithm}
\caption{\texttt{policy}($o_t$, $o_g$, ($t$, $s_{t-1}$, $a_{t-1}$, $s_{wg}$, mem\_buff, mode))}
\label{alg:dhp_policy}
\begin{algorithmic}[1]

\REQUIRE{Observation $o_t$, Goal $o_g$, agent state $t$, $s_{t-1}$, $a_{t-1}$, $s_{wg}$, mem\_buff, mode}
\ENSURE{Action $a_t$, new agent state}

\STATE $s_t \gets \texttt{wm}(o_t,s_{t-1},a_{t-1})$ \hfill \COMMENT{World model update}
\IF{$t \mod K$ = 0}
    \IF{mode == $eval$}
        \STATE $s_g \gets$ static\_state($o_g$) \hfill \COMMENT{Sec \ref{sec:static_repr}}
        \STATE $d \gets 0$
        \hfill \COMMENT{Subgoal planning}
        \WHILE{$\neg \texttt{is\_reachable}(s_t, s_g) \land d<D_I$}
            \STATE $z \sim \pi_P(z|s_t,s_g)$
            \STATE $s_g \gets \text{Dec}_G(z,s_t,s_g)$
            \STATE $d \gets d+1$
        \ENDWHILE
        \STATE $s_{wg} \gets s_g$
    \ELSE
        \STATE $\text{mem}_t \gets \texttt{extract\_mem}(\text{mem\_buff})$ \hfill \COMMENT{Extract memory Sec. \ref{sec:expl_memory}}
        \STATE $z \sim \pi_E(z|s_t,\text{mem}_t)$
        \STATE $s_{wg} \gets \text{Dec}_U(z)$
    \ENDIF
    \STATE $\text{mem\_buff} \gets \texttt{concat}(\text{mem\_buff}[1:], s_t)$
\ENDIF
\STATE $a_t \sim \pi_W(a_t|s_t,s_{wg})$
\RETURN $a_t$, ($s_t, s_{wg}, \text{mem\_buff}$)
\end{algorithmic}
\end{algorithm}

\subsubsection{Train Function}

The training function is executed every $16$-th step.
A batch size $B$ of trajectories $\kappa$ and coarse trajectories $\kappa_c$ is sampled from the exploration trajectories or the expert data.
The length of extracted trajectories is $L$ and the length of coarse trajectories is $L_c$ spanning over $L_c \times K$ time steps.
Then the individual modules are trained sequentially, shown in the overall script as Alg. \ref{alg:dhp_train}:
\begin{itemize}
    \item RSSM module is trained using $\kappa$ via the original optimization objective \cite{hafner2019learning} followed by the static state representations (Sec. \ref{sec:static_repr}).

    % \item Train the static state representions for states in $\kappa$ (\ref{sec:static_repr}).

    \item The GCSR module is trained using the coarse trajectory $\kappa_c$ (Sec. \ref{sec:plan_policy}).

    \item The worker policy is optimized by extracting tuples $(s_t,s_{t+K})$ from the trajectories $\kappa$ and running the worker instantiated at $s_t$ with worker goal as $s_{t+K}$ (Sec. \ref{sec:worker_training}).

    \item The planning policy is trained using sample problems extracted as pairs of initial and final states $(s_t,s_g)$ at randomly mixed lengths from $\kappa_c$. Then the solution trees are unrolled and optimized as in Sec. \ref{sec:policy_optim}.

    \item Lastly, the exploratory policy is also optimized using each state in $\kappa$ as the starting state (Sec. \ref{sec:expl_optim}). 
\end{itemize}

\begin{algorithm}
    \caption{\texttt{train\_gcsr}($data$)}
    \label{alg:train_gcsr}
\begin{algorithmic}[1]
    \REQUIRE{Experience dataset $data$}
    \STATE Initialize $\text{triplets} \leftarrow \emptyset$
    \FOR{all window sizes $q \in Q$}
        \STATE Append $\text{extract\_triplets}(data, q)$ to $\text{triplets}$ \hfill \COMMENT{Extract $(s_t, s_{t+q/2}, s_{t+q})$}
    \ENDFOR
    \STATE $\text{update\_gcsr}(\text{triplets})$ \hfill \COMMENT{Update CVAE using ELBO objective (Eq. \ref{eq:gcsr_loss})}
\end{algorithmic}
\end{algorithm}

\begin{algorithm}[H]
\caption{\texttt{train\_worker}($data$)}
\label{alg:train_worker}
\begin{algorithmic}[1]
\REQUIRE{Experience dataset $data$}
\STATE $(\text{init}, \text{wk\_goal}) \leftarrow \text{extract\_pairs}(data, K)$ \hfill \COMMENT {Extract $(s_t,s_{t+K})$ state pairs}
\STATE $\text{traj} \leftarrow \text{imagine}(\text{init}, K)$ \hfill \COMMENT{On-policy trajectory rollout}
\STATE $\text{rew} \leftarrow \text{cosine\_max}(\text{traj}, \text{wk\_goal})$ \hfill \COMMENT{Goal similarity reward}
\STATE $\text{ret} \leftarrow \text{lambda\_return}(\text{rew})$ \hfill \COMMENT{Compute $\lambda$-returns (Eq. \ref{eq:worker_lambda_return})}
\STATE $\text{update\_worker}(\text{traj}, \text{ret})$ \hfill \COMMENT{Update Worker SAC with REINFORCE (Eqs. \ref{eq:worker_actor_loss}, \ref{eq:worker_critic_loss})}
\end{algorithmic}
\end{algorithm}

\begin{algorithm}[H]
\caption{\texttt{train\_planner}($data$)}
\label{alg:train_planner}
\begin{algorithmic}[1]
\REQUIRE{Experience dataset $data$}
\STATE $\text{task} \leftarrow \text{sample\_task\_pairs}(data)$ \hfill \COMMENT{Sample initial and goal states from trajectories at mixed temporal distances}
\STATE $\text{tree} \leftarrow \text{imagine\_plan}(\text{task})$ \hfill \COMMENT{Generate subtask tree (Sec. \ref{sec:tree_rollout})}
\STATE $\text{rew}, \text{is\_term} \leftarrow \text{is\_reachable}(\text{tree})$ \hfill \COMMENT{Node rewards \& terminals (Eqs. \ref{eq:term_nodes}, \ref{eq:tree_rew})}
\STATE $\text{ret} \leftarrow \text{tree\_lambda\_return}(\text{rew}, \text{is\_term})$ \hfill \COMMENT{Lambda return for trees (Eq. \ref{eq:tree_lambda_return})}
\STATE $\text{update\_planner}(\text{tree}, \text{ret})$ \hfill \COMMENT{Update Planner SAC with REINFORCE (Eqs. \ref{eq:plan_actor_loss}, \ref{eq:plan_critic_loss})}
\end{algorithmic}
\end{algorithm}

\begin{algorithm}[H]
\caption{\texttt{train\_explorer}}
\label{alg:train_expl}
\begin{algorithmic}[1]
\REQUIRE{Experience dataset $data$}
\STATE $(\text{init}) \leftarrow \text{extract\_states}(data)$ \hfill \COMMENT {Extract $s_t$ states as initial states}
\STATE $\text{traj} \leftarrow \text{imagine}(\text{init}, H_E)$ \hfill \COMMENT{On-policy trajectory rollout}
\STATE $\text{rew} \leftarrow \text{expl\_rew}(\text{traj})$ \hfill \COMMENT{Exploratory reward (Eq. \ref{eq:expl_rew})}
\STATE $\text{ret} \leftarrow \text{lambda\_return}(\text{rew})$ \hfill \COMMENT{Compute $\lambda$-returns (Eq. \ref{eq:expl_lambda_return})}
\STATE $\text{update\_explorer}(\text{traj}, \text{ret})$ \hfill \COMMENT{Update Explorer SAC (Eqs. \ref{eq:expl_actor_loss}, \ref{eq:expl_critic_loss})}
\end{algorithmic}
\end{algorithm}

\begin{algorithm}[H]
\caption{Overall Algorithm}
\label{alg:dhp_train}
\begin{algorithmic}[1]
\REQUIRE{Environment $env$, DHP Agent $agent$, Replay buffer $buffer$}
\STATE $t \leftarrow 0$
\STATE $o_t, o_g \leftarrow env.\texttt{reset}()$ \hfill \COMMENT{Initialize environment}
\STATE $s_{t-1}, a_{t-1}, s_{wg},$ mem\_buff $\leftarrow agent.\texttt{init}()$ \hfill \COMMENT{Initialize agent state variables with null values}
% \STATE  \leftarrow \varnothing$
% \STATE $a_{t-1} \leftarrow \varnothing$
% \STATE $s_{wg} \leftarrow \varnothing$

\WHILE{$t < 5\times 10^6$}
    \IF{$t < 3\times 10^6$}
        \STATE mode = $eval$
    \ELSE
        \STATE mode = $explore$
    \ENDIF

    \STATE $a_t, (s_t, s_{wg}$, mem\_buff) $\leftarrow agent.\texttt{policy}(o_t, o_g, (t, s_{t-1}, a_{t-1}, s_{wg}$, mem\_buff, mode)) \hfill \COMMENT{Act using Alg. \ref{alg:dhp_policy}}

    \STATE $o_{t+1}, r_t \leftarrow env.\texttt{step}(a_t)$

    \STATE $buffer.\texttt{store}(o_t,a_t,r_t)$ \hfill \COMMENT{Store step in the replay buffer}

    \IF{$t \mod 16 == 0$}
        \STATE $data \leftarrow buffer.\texttt{sample\_batch}()$ \hfill \COMMENT{Sample a batch of trajectories from the buffer}
        \STATE $agent.\texttt{train\_rssm}(data)$ \hfill \COMMENT{Train world model per \cite{hafner2019learning}}
        \STATE $agent.\texttt{train\_gcsr}(data)$ \hfill \COMMENT{Alg. \ref{alg:train_gcsr}}
        \STATE $agent.\texttt{train\_worker}(data)$ \hfill \COMMENT{Alg. \ref{alg:train_worker}}
        \STATE $agent.\texttt{train\_planner}(data)$ \hfill \COMMENT{Alg. \ref{alg:train_planner}}
        \STATE $agent.\texttt{train\_explorer}(data)$ \hfill \COMMENT{Alg. \ref{alg:train_expl}}
    \ENDIF

    \STATE $s_{t-1}, a_{t-1}, t \leftarrow s_t,a_t,t+1$ \hfill \COMMENT{Update recurrent variables}
\ENDWHILE
\end{algorithmic}
\end{algorithm}

\section{Broader Impacts}
\label{app:impacts}

\subsection{Positive Impacts}

The imagination-based policy optimization mitigates hazards that can occur during learning.
Efficient training can reduce the carbon footprint of the agents.
The agent produces highly interpretable plans that can be verified before execution.

\subsection{Negative Impacts and Mitigations}
\begin{itemize}
    \item \textbf{Inaccurate Training}: Imagination can cause incorrect learning. Mitigation: Rigorous testing using manual verification of world-model reconstructions against ground truths.
    \item \textbf{Malicious Use}: Hierarchical control could enable more autonomous adversarial agents. Mitigation: Advocate for gated release of policy checkpoints.
\end{itemize}

\subsection{Limitations of Scope}
Our experiments focus on simulated tasks without human interaction. Real-world impacts require further study of reward alignment and failure modes.

\clearpage
\section{Sample Exploration Trajectories}
\label{sec:appendix_expl_sample_traj}

% We test three settings: vanilla rewards that encourage visiting unexplored states, ICSR-based rewards that encourage executing unexplored state transitions, and GCSR-based rewards that encourage unexplored path segments.
We compare sample trajectories generated by various reward schemes in this section.
Fig. \ref{fig:expl_vanilla_traj} shows the sample trajectories of an agent trained to optimize the vanilla exploratory rewards.
Fig. \ref{fig:expl_pl_traj} shows the sample exploration trajectories of an agent that optimizes only for the GCSR-based rewards.
Fig. \ref{fig:expl_plNoMem_traj} shows sample exploration trajectories of an agent that optimizes for GCSR exploratory rewards but without memory.
The GCSR rewards-based agent generates trajectories that are less likely to lead to repeated path segments or remaining stationary.
Removing the memory can sometimes cause inefficient trajectories, where the agent may not perform as expected.
% The generated trajectories provide a better learning signal for the planning modules (Sec. \ref{sec:abl_expl}).

\begin{figure}[h]
     \centering
     \begin{subfigure}[b]{0.24\textwidth}
         \centering
         \includegraphics[width=\textwidth]{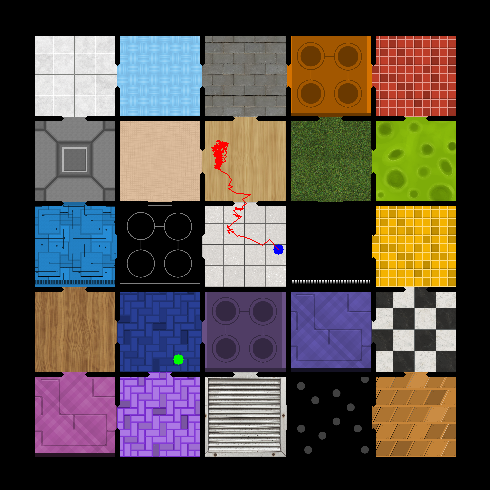}
     \end{subfigure}
     \begin{subfigure}[b]{0.24\textwidth}
         \centering
         \includegraphics[width=\textwidth]{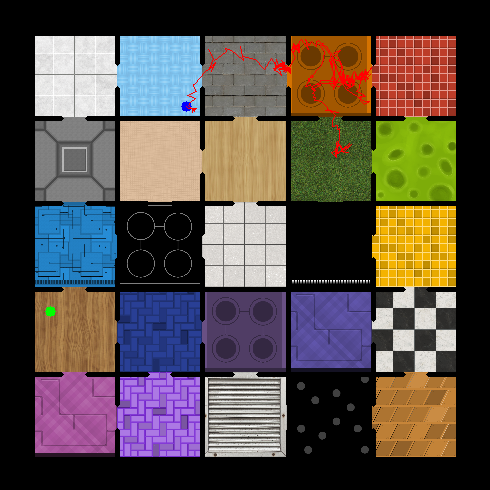}
     \end{subfigure}
     \begin{subfigure}[b]{0.24\textwidth}
         \centering
         \includegraphics[width=\textwidth]{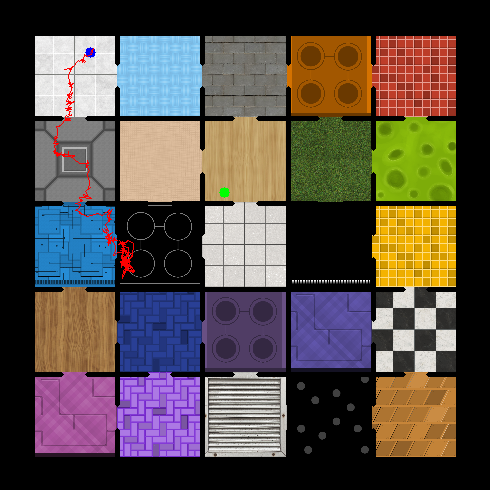}
     \end{subfigure}
     \begin{subfigure}[b]{0.24\textwidth}
         \centering
         \includegraphics[width=\textwidth]{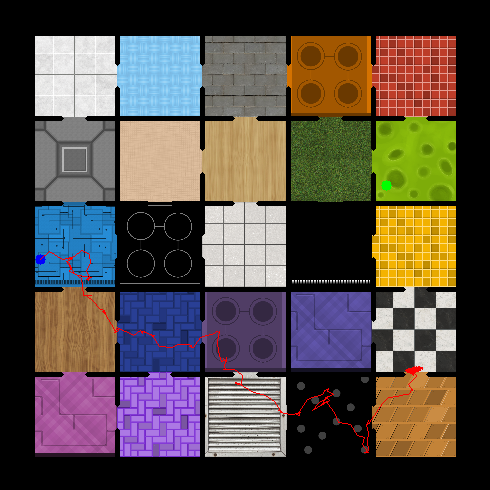}
     \end{subfigure}
     \caption{Sample exploration trajectories using the vanilla exploratory rewards. The agent identifies states with unclear representations and successfully navigates to them, indicating sufficient navigation capabilities. However, once the agent reaches the goal, it stays there, leading to lower-quality data.}
     \label{fig:expl_vanilla_traj}
\end{figure}

\begin{figure}[h]
     \centering
     \begin{subfigure}[b]{0.24\textwidth}
         \centering
         \includegraphics[width=\textwidth]{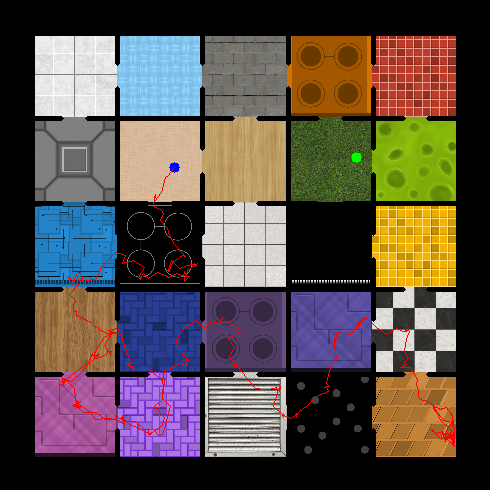}
     \end{subfigure}
     \begin{subfigure}[b]{0.24\textwidth}
         \centering
         \includegraphics[width=\textwidth]{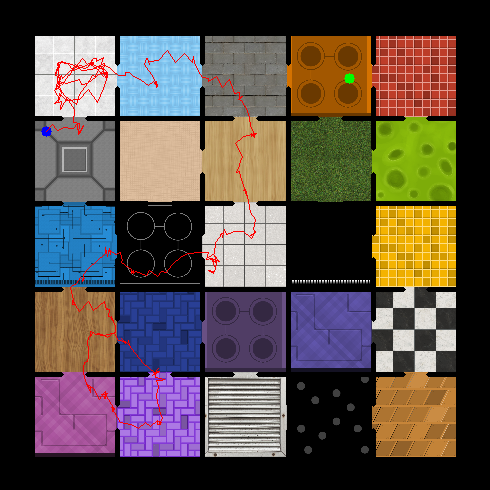}
     \end{subfigure}
     \begin{subfigure}[b]{0.24\textwidth}
         \centering
         \includegraphics[width=\textwidth]{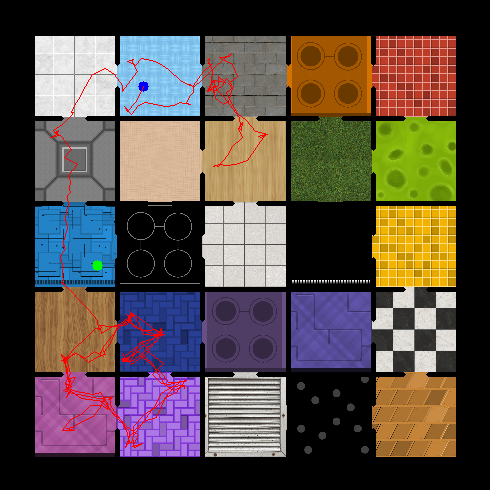}
     \end{subfigure}
     \begin{subfigure}[b]{0.24\textwidth}
         \centering
         \includegraphics[width=\textwidth]{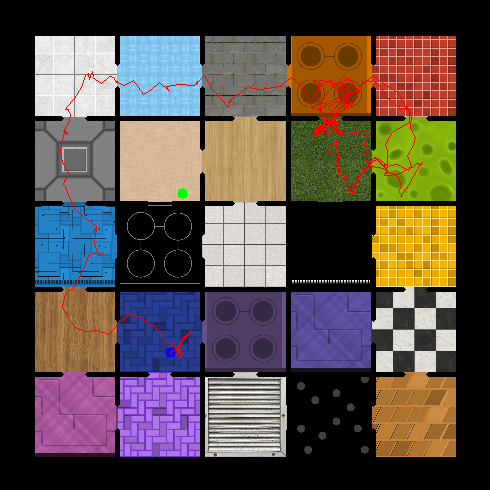}
     \end{subfigure}
     \caption{Sample exploration trajectories using the GCSR modules for the path-segments-based rewards. It can be seen that the agent continuously moves and explores longer state connectivity.}
     \label{fig:expl_pl_traj}
\end{figure}

\begin{figure}[h]
     \centering
     \begin{subfigure}[b]{0.24\textwidth}
         \centering
         \includegraphics[width=\textwidth]{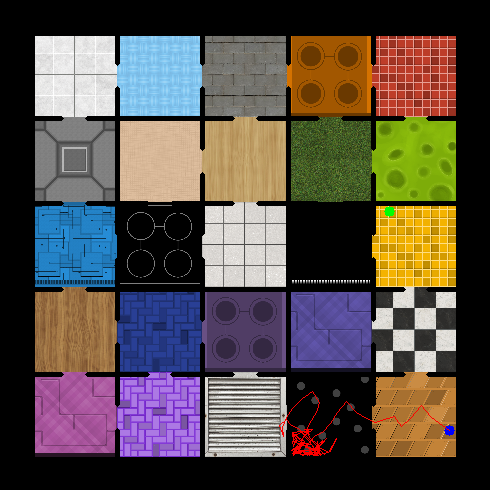}
     \end{subfigure}
     \begin{subfigure}[b]{0.24\textwidth}
         \centering
         \includegraphics[width=\textwidth]{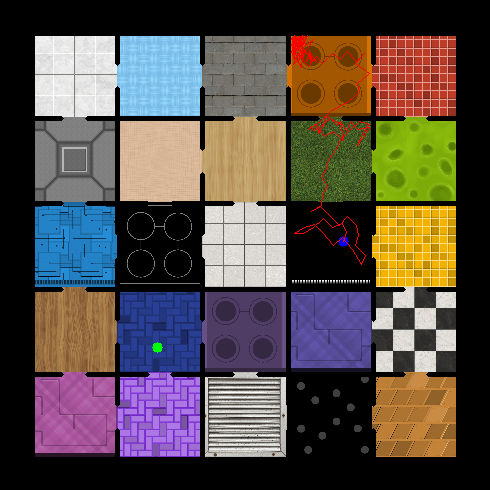}
     \end{subfigure}
     \begin{subfigure}[b]{0.24\textwidth}
         \centering
         \includegraphics[width=\textwidth]{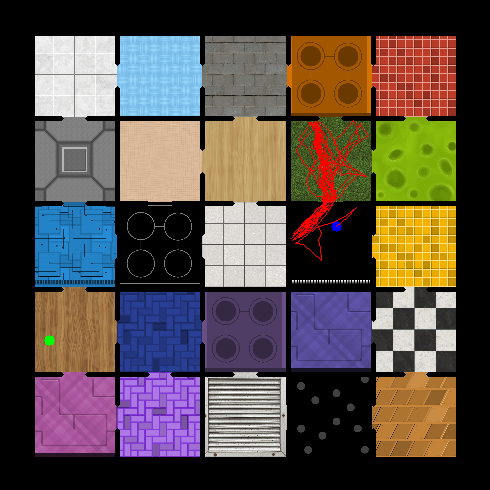}
     \end{subfigure}
     \begin{subfigure}[b]{0.24\textwidth}
         \centering
         \includegraphics[width=\textwidth]{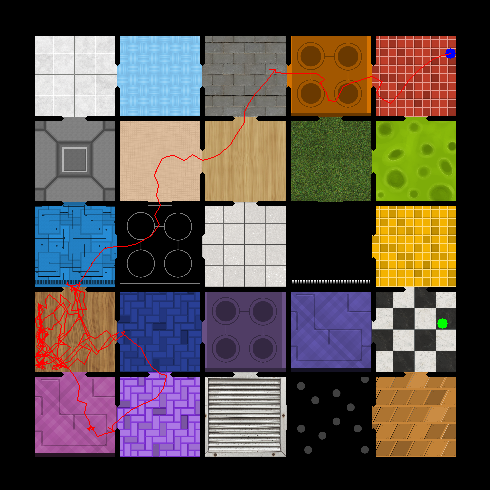}
     \end{subfigure}
     \caption{Sample exploration trajectories using the default strategy without the Memory. The resulting data can have some problematic trajectories.}
     \label{fig:expl_plNoMem_traj}
\end{figure}

\clearpage

\section{Robo Yoga Samples}

\begin{figure}[h]
     \centering
     \begin{subfigure}[b]{0.24\textwidth}
         \centering
         \includegraphics[width=\textwidth]{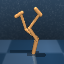}
     \end{subfigure}
     \begin{subfigure}[b]{0.64\textwidth}
         \centering
         \includegraphics[width=\textwidth]{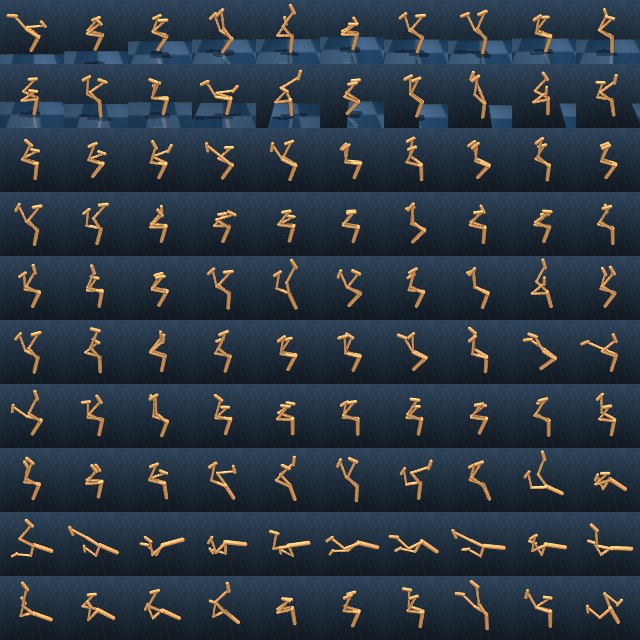}
     \end{subfigure}
     \caption{Figure shows a sample trajectory (every $8$-th frame) for the \texttt{walker} embodiment given a headstand goal. The agent can maintain a constant headstand; however, it sways about the goal position. This is because the agent is trained to reach the goal position and not to stay there.}
     \label{fig:walker_headstand}
\end{figure}

\begin{figure}[h]
     \centering
     \begin{subfigure}[b]{0.64\textwidth}
         \centering
         \includegraphics[width=\textwidth]{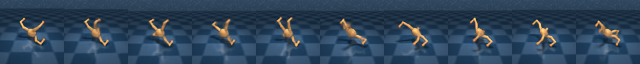}
     \end{subfigure}
     \begin{subfigure}[b]{0.64\textwidth}
         \centering
         \includegraphics[width=\textwidth]{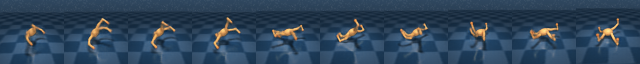}
     \end{subfigure}
     \caption{Some failure cases from the plans constructed for the yoga \texttt{quadruped} task where the agent hallucinates wrong midway states that get verified by the similarity check. We believe this is because the quadruped agent has a richer state space than the walker, and the states are not very distinct.}
     \label{fig:yogaquad_fail}
\end{figure}

\clearpage

\section{Offline DHP}
\label{sec:offline_dhp}

While the online formulation presented in earlier sections is highly flexible and requires minimal external dependencies, scaling to extremely long-horizon and complex tasks (such as OGBench \citep{park2024ogbench}) reveals practical challenges:

\begin{itemize}
    \item \textbf{Compounding errors in subgoal prediction}:  Since subgoals at lower tree levels depend on predictions at higher levels, any hallucinations in the CVAE-based subgoal generation accumulate with depth (Figure \ref{fig:yogaquad_fail} shows examples). This becomes problematic when planning over very long horizons where recursive errors compound.

    \item \textbf{Long-horizon exploration inefficiency}: Undirected exploration can require impractically long training times in large environments. Moreover, many states that yield high exploratory rewards may be irrelevant to the external task (e.g., an agent lying upside-down throughout a maze provides novel states but unhelpful coverage).
\end{itemize}

To address these limitations while preserving our core contributions—discrete reachability checks and tree-structured return estimation—we develop a simplified offline variant of DHP. Demonstrating that our planning principles are architecture-agnostic and applicable across different settings.

\subsection{Architecture}

The offline agent maintains our core planning objectives while replacing components that proved problematic in complex domains:

\begin{enumerate}
    \item \textbf{Goal representation network} $\phi(s_t,s_g) \rightarrow z$ (similar to HIQL \citep{park2023hiql}): Replaces world-model-based state representations with a learned encoder that directly maps state-goal pairs to a length-normalized representation space.

    \item \textbf{Hierarchical policies}:
    \begin{itemize}
        \item \textbf{Manager} $\pi_M(s_t,s_g) \rightarrow z$: Predicts optimal subgoal representations in the learned $z$-space.
        \item \textbf{Worker} $\pi_W(s_t,z) \rightarrow a_t$: Executes low-level actions to reach subgoals.
    \end{itemize}

    \item \textbf{Goal buffer} $\mathcal{B}$. Stores a diverse set of landmark states from the offline dataset. During planning, the buffer retrieves the nearest stored state $\mathcal{B}(s_t,z) \rightarrow s_g$, to the predicted representation $z$, acting as a discrete decoder. This eliminates hallucination errors by constraining subgoals to states that actually exist in the data.
\end{enumerate}

\subsection{Training Objectives}

The agent is trained using offline datasets collected by noisy expert policies \citep{park2024ogbench}.
All modules are trained jointly using samples from the replay buffer.

\paragraph{Manager Value Function}

The manager value function $V^h(s_t,s_g)$ evaluates the quality of high-level plans.
Following our tree-structured return formulation (Sec. \ref{sec:tree_return}), we compute values using the $\min$-operator.

For a given task $(s_t, s_g)$ decomposed via a midway state $s_w$:

\[V^h(s_t,s_g) = \min(R_\text{left} + \gamma^hV^h(s_t,s_w), R_\text{right} + \gamma^h V^h(s_w,s_g))\]

where $R_{\text{left}}, R_{\text{right}}$ are binary rewards $\{0,1\}$ indicating whether each subtask is reachable within the threshold of subgoal step (replacing the imagination-based check with a distance threshold).

The value function is trained using an expectile regression objective (similar to IQL \citep{kostrikov2021offline}, HIQL \cite{park2023hiql}) to avoid overestimation.
However, unlike standard IQL, which regresses toward Bellman backup values, we regress toward our tree-structured return $G^h$:

% \[\mathcal{L}_{V^h} = \mathbb{E}_{(s,s_w,s_g)} [\mathcal{L}_\tau (\min(\bar{V}^h(s_t,s_w), \bar{V}^h(s_w,s_g)) - V^h(s,s_g))]\]

\[G^h(s_t,s_g;s_w)=\min(R_\text{left} + \gamma^h \bar{V}^h(s_t,s_w),R_\text{right} + \gamma^h \bar{V}^h(s_w,s_g))\]

% where $\mathcal{L}_{\tau}$ is the asymmetric expectile loss with expectile $\tau=0.7$, and $\bar{V}$ is the target value network.
This is the 1-step tree return from Equation \ref{eq:tree_1step_return}, where we use target networks $\bar{V}^h$ for stability. The loss is then:

\[L_{V^h}=\mathbb{E}(s_t,s_w,s_g)[\mathcal{L}\tau(G^h(s_t,s_g;s_w)-V^h(s_t,s_g))]\]

where $\mathcal{L}_{\tau}$ is the asymmetric expectile loss:

\[\mathcal{L}_\tau(u)=|\tau-1[u<0]| \cdot u^2\]

with expectile $\tau=0.7$.
This asymmetric loss causes the value function to approximate an upper expectile of the tree-structured return distribution, avoiding overestimation while maintaining optimism for planning.

\paragraph{Manager Actor}

The manager policy learns to predict subgoal representations that maximize the advantage:

\[\pi_M(z|s_t,s_g) \propto \exp(\beta \cdot A^h(s_t,s_w,s_g))\]

where $s_w$ is the midway state between $(s_t,s_g)$ in the dataset trajectory, and the advantage is:

\[A^h(s_t,s_w,s_g) = \min(V^h(s_t,s_w),V^h(s_w,s_g)) - V^h(s_t,s_g)\]

This encourages the manager to predict subgoals that create balanced, high-value decompositions. The policy is trained via advantage-weighted regression (AWR):

\[\mathcal{L}_{\pi_M} = -\mathbb{E}_{(s_t,s_w,s_g)} [\exp(\beta \cdot A^h) \cdot \log \pi_M(\phi(s_t,s_w)|s_t,s_g)]\]

where $\beta=3.0$ is the temperature parameter controlling the strength of advantage weighting.

\paragraph{Worker Value Function}

The worker value function $V^l(s_t, z)$ where $z = \phi(s_t,s_g)$ evaluates how well the agent can reach subgoal $s_g$ from state $s_t$.
It is trained using standard IQL expectile regression:

\[\mathcal{L}_{V^l} = \mathbb{E}_{(s_g,s',s_g)} [\mathcal{L}_\tau(r + \gamma^l V^l(s_t,z) - V^l(s_t,z))]\]

where $r = \text{Float}[d(s',s_g) \leq 1]$ is a sparse binary reward indicating goal achievement, and $\gamma^l=0.99$ is the discount factor.
Similar to HIQL \citep{park2023hiql}, the goal representation module $\phi$ is trained using the gradients from the value function loss.

\paragraph{Worker Actor}

The worker policy is trained to imitate actions that have high advantages:

\[L_{\pi_W} = -\mathbb{E}(s,a,s',s_{t+K})[\exp(\beta \cdot (V^l(s',z)-V^l(s_t,z))) \cdot \log \pi_W(a_t|s_t,z)]\]

where $z = \phi(s_t,s_{t+K})$.
This encourages the worker to execute actions that make progress toward the subgoal $s_{t+K}$.

\paragraph{Goal Buffer Update}

The goal buffer is populated and maintained using farthest point sampling (FPS) to ensure diversity:

\begin{enumerate}
    \item \textbf{Initialization}: Sample $N$ candidate goals from the offline dataset

    \item \textbf{Diversity selection}: Every $10000$ steps, we iteratively select observations from the data batch that maximize minimum distance to already-stored states: $\mathcal{B} \leftarrow \arg\max_{s \in \text{data[obs]}} \min_{s' \in \mathcal{B}} d(s, s')$ where $d$ is a value-based distance estimate $-V^l(s,s')$. This objective selects the set of states that are most temporally distant.
\end{enumerate}

The buffer capacity is set to 2048 goals, which we found to work well for all tasks.
But it can be task-dependent.
Using value-based FPS ensures the buffer contains goals at varying difficulty levels, not just perceptually diverse states.

\subsection{Inference}

During inference, the manager recursively decomposes tasks up to depth $D=8$.
For each decomposition, the manager predicts a goal representation $\pi_M(s_t,s_g) \rightarrow z$.
Then, the $z$ is used to retrieve the nearest state by comparing the goal representations of the stored states $\phi(s_t,s_i) \forall s_i \in \mathcal{B}$ with the predicted $z$ using the mean-squared error.
The first reachable subgoal in the decomposed subgoal stack (say $s_w$) is used as the worker's subgoal to predict the action as: $\pi_W(s_t, \phi(s_t,s_w))$.
Reachability is determined by the normalized value criterion $\tilde{V}^l(s_t, z) > \theta$, where $\tilde{V}^l = (V^l - \mu) / \sigma$ is the value estimate normalized by running statistics $(\mu, \sigma)$ collected during training for valid subgoals ($s_{t+K}$).
We set $\theta = -2.0$, which corresponds to subgoals that are in the top $97.5\%$ of value estimates, preventing over-optimism. 

\subsection{Evaluation}

We evaluate the offline DHP agent on the OGBench navigation benchmark \citep{park2024ogbench}, which features significantly larger and more complex environments than the 25-room task:

\begin{itemize}
    \item \textbf{AntMaze-Large/Giant:} Large continuous mazes requiring precise locomotion control.
    \item \textbf{HumanoidMaze-Large/Giant:} High-dimensional humanoid morphology ($17$ DoF) in large mazes, requiring both locomotion and balance.
\end{itemize}

Table \ref{tab:results_ogbench} shows that offline DHP achieves state-of-the-art performance, with particularly strong \textit{absolute} gains on the HumanoidMaze tasks ($+33.8\%$ on large, $+71.2\%$ on giant compared to HIQL).
Our planning method is highly interpretable, as shown in the subgoal visualization in Fig. \ref{fig:subgoal_viz}.
For video results, please visit \url{https://sites.google.com/view/dhp-video/home}.
These are the best results on the OGBench humanoid tasks to the best of our knowledge.
These results demonstrate that:

\begin{itemize}
    \item \textbf{Core contributions generalize:} The discrete reachability paradigm and tree-structured returns transfer successfully to the offline setting.

    \item \textbf{Architecture modularity:} Our planning principles work with both online (CVAE+world model) and offline (goal buffer+representation learning) implementations.

    \item \textbf{Scalability:} The method handles high-dimensional morphologies and giant environments where the online version struggled.
\end{itemize}

\begin{table}[]
    \centering
    \begin{tabular}{c|c|c|c|c|c|c|c}
        Task & GCBC & GCIVL & GCIQL & QRL & CRL & HIQL & DHP \textit{(Ours)} \\
        \hline
        antmaze-large-\\navigate-v0 & $24 \pm 2$ & $16 \pm 5$ & $34 \pm 4$ & $75 \pm 6$ & $83 \pm 4$ & $91 \pm 2$ & $\mathbf{93.9 \pm 1}$ \\
        antmaze-giant-\\navigate-v0 & $0 \pm 0$ & $0 \pm 0$ & $0 \pm 0$ & $14 \pm 3$ & $16 \pm 3$ & $65 \pm 5$ & $\mathbf{72.3 \pm 5}$ \\
        humanoidmaze-large-\\navigate-v0 & $1 \pm 0$ & $2 \pm 1$ & $2 \pm 1$ & $5 \pm 1$ & $24 \pm 4$ & $49 \pm 4$ & $\mathbf{82.8 \pm 5}$ \\
        humanoidmaze-giant-\\navigate-v0 & $0 \pm 0$ & $0 \pm 0$ & $0 \pm 0$ & $1 \pm 0$ & $3 \pm 2$ & $12 \pm 4$ & $\mathbf{83.2 \pm 4}$ \\
        \hline
    \end{tabular}
    \caption{Performance on OGBench navigation tasks.}
    \label{tab:results_ogbench}
\end{table}

\begin{figure}
     \centering
     \begin{subfigure}[b]{\textwidth}
         \centering
         \includegraphics[width=\textwidth]{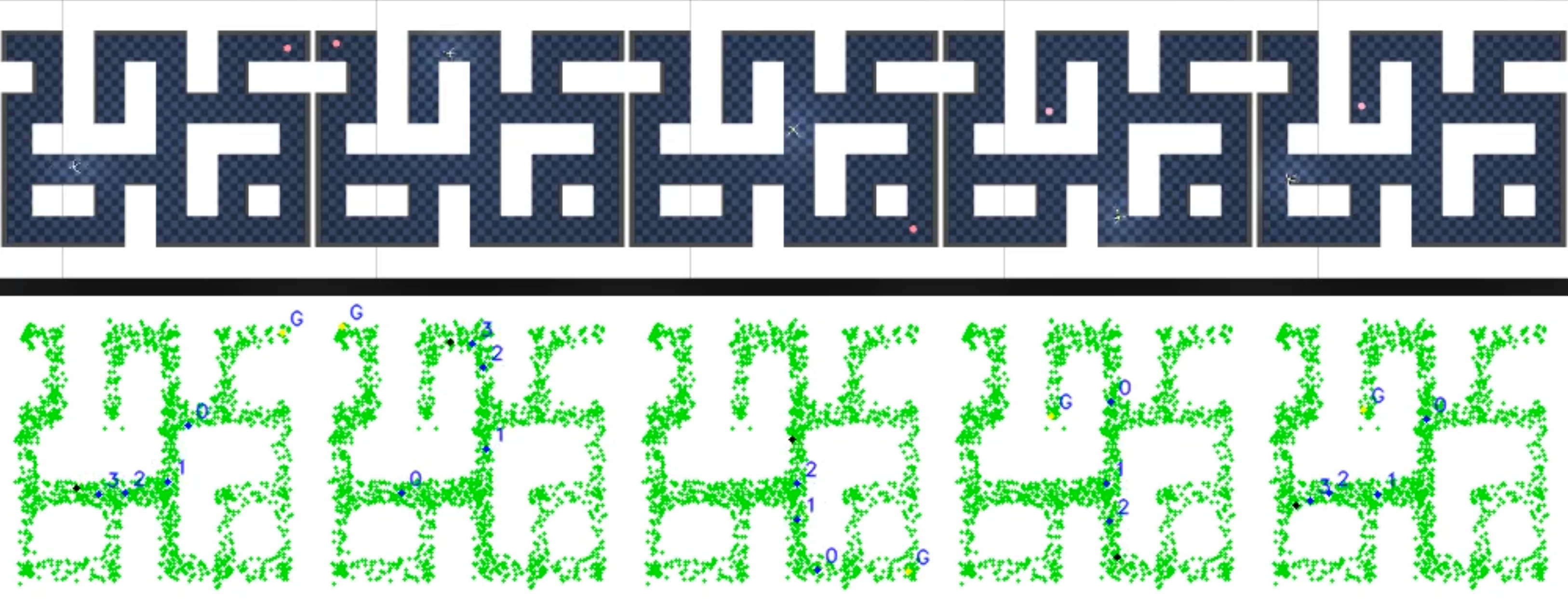}
         \caption{Ant Maze Large}
     \end{subfigure}
     \vspace{2em}
     \begin{subfigure}[b]{\textwidth}
         \centering
         \includegraphics[width=\textwidth]{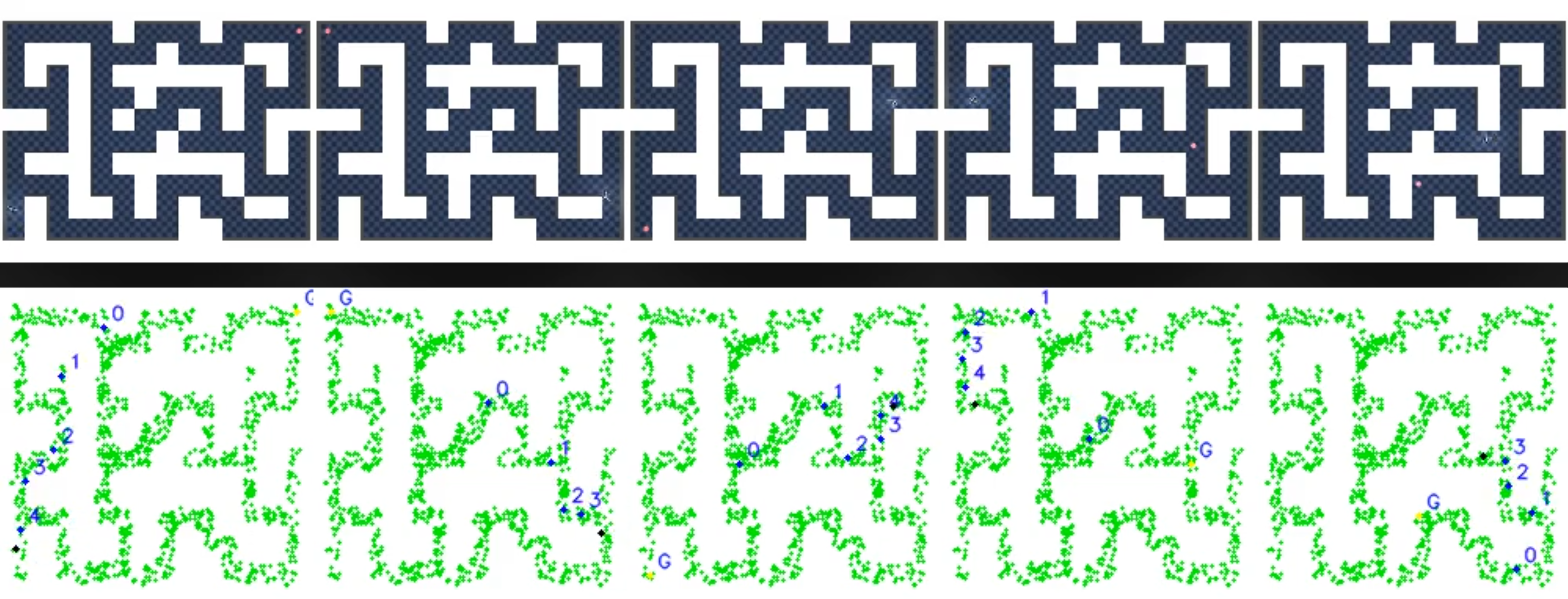}
         \caption{Ant Maze Giant}
     \end{subfigure}
     \caption{Samples of plan visualizations at the OGBench navigation tasks. The green points signify the stored states in the buffer $\mathcal{B}$. The black dot specifies the agent's current position. The yellow dot labeled 'G' is the final goal. And the subgoals are represented by blue dots, with their indices in the subgoal stack shown next to each. The agent recursively plans until a reachable subgoal is found, then moves towards it.}
     \label{fig:subgoal_viz}
\end{figure}

\subsection{Comparison to Online DHP}

The offline variant trades exploration flexibility for robustness:

\textbf{Advantages}:

\begin{itemize}
    \item Eliminates subgoal hallucinations via buffer-based retrieval
    \item Scales to high-dimensional state spaces (humanoid)
    \item No world model training required
    \item Handles extremely long horizons (giant mazes)
\end{itemize}

\textbf{Tradeoffs}:

\begin{itemize}
    \item Requires pre-collected offline data
    \item Subgoal space limited to buffer contents (though 2048 landmarks prove sufficient)
    \item Cannot discover entirely novel subgoals beyond training distribution
\end{itemize}

Both variants validate that our core insight—replacing distance metrics with discrete reachability and using tree-structured advantages—is the key to effective hierarchical planning, independent of the specific architecture used.

\clearpage

\section{Lightsout Puzzle}
\label{sec:lightsout}

Lightsout is a complex puzzle where the agent is given a binary 2D array representing the state of lights on/off in a grid of rooms.
The objective is to turn off the lights in all rooms; but, toggling the lights in room $(i,j)$ toggles the lights of connected rooms $[(i-1,j),(i+1,j),(i,j-1),(i,j+1)]$.
We measure reachability by iterating over all actions to check if the goal state can be reached in one step (errorless check).
The agent is required to plan a path from the initial state to the final all-out state.

To assess the planning capability of DHP, the agent is stripped down to just the planning actor and critic, both of which are implemented as simple dense MLPs. The actor outputs directly in the state space.
For a grid $L \times L$, the size of the state space is $2^{L^2}$.
Thus, the agent must choose the right subgoal state of the possible $2^{L^2}$ at each step in tree planning.
This is more complex than the 25-room task, where the agent had to essentially pick from 25 rooms at each step.
The agent plans up to a depth of $D=5$, and all branches must terminate in success (within a depth of $D=5$) in a single attempt to score $1$; otherwise, it scores $0$.
Another added complication is that some subgoals can be dead ends and may not lead to the goal.
We observe that our agent performs decently (path lengths refer to the lengths of the successful paths).
Table \ref{tab:lightsout} shows the final results.

\begin{table}
    \centering
    \begin{tabular}{c|c|c|c}
        \hline
        $L$ & State space size & Success rate & Path lengths \\
        \hline
        $2$ & $16$ & $100\%$ & $2.42 \pm 0.56$ \\
        $3$ & $512$ & $86.47\%$ & $3.12 \pm 1.66$ \\
        \hline
    \end{tabular}
    \caption{Performance at the lights-out puzzle.}
    \label{tab:lightsout}
    \vspace{-.5em}
\end{table}

\clearpage

%%%%%%%%%%%%%%%%%%%%%%%%%%%%%%%%%%%%%%%%%%%%%%%%%%%%%%%%%%%%

\end{document}